%% file: paper.tex
\documentclass{article}

\usepackage[T1]{fontenc}

\usepackage{microtype}
\usepackage{graphicx}
\usepackage{subcaption}
\usepackage{booktabs} % for professional tables
\usepackage{enumitem}
\usepackage{multirow}
\usepackage{microtype}
\usepackage{comment}
\usepackage[table]{xcolor}
\definecolor{ourscolor}{RGB}{230,243,255}  % light blue

\usepackage[most]{tcolorbox}

  \newtcolorbox{summarybox}{
    colback=ourscolor,
    colframe=mydarkblue,
    boxrule=1pt,
    arc=3pt,
    left=6pt,
    right=6pt,
    top=6pt,
    bottom=6pt,
  }

\usepackage{hyperref}

\DeclareMathOperator{\diag}{diag}
 \DeclareMathOperator{\softmax}{softmax}

\usepackage[preprint]{icml2026}

\usepackage{amsmath}
\usepackage{amssymb}
\usepackage{mathtools}
\usepackage{amsthm}
\usepackage{subcaption}

\newtheorem{hyp}{Hypothesis}

\usepackage[capitalize,noabbrev]{cleveref}

\theoremstyle{plain}
\newtheorem{theorem}{Theorem}[section]

\newtheorem{lemma}[theorem]{Lemma}

\theoremstyle{definition}

\theoremstyle{remark}
\newtheorem{remark}[theorem]{Remark}

\usepackage[textsize=tiny]{todonotes}

\icmltitlerunning{Too Sharp, Too Sure: When Calibration Follows Curvature}

\begin{document}

\twocolumn[
  % \icmltitle{When Confidence follows Curvature: A Story of Separability}
  \icmltitle{Too Sharp, Too Sure: When Calibration Follows Curvature}

  % It is OKAY to include author information, even for blind submissions: the
  % style file will automatically remove it for you unless you've provided
  % the [accepted] option to the icml2026 package.
  \icmlsetsymbol{equal}{*}

    \begin{icmlauthorlist}
    \icmlauthor{Alessandro Morosini}{mit}
    \icmlauthor{Matea Gjika}{mit}
    \icmlauthor{Tomaso Poggio}{mit}
    \icmlauthor{Pierfrancesco Beneventano}{mit}
    \end{icmlauthorlist}
    
    \icmlaffiliation{mit}{Massachusetts Institute of Technology, Cambridge, MA, USA}
    \icmlcorrespondingauthor{Alessandro Morosini}{morosini@mit.edu}
    \icmlcorrespondingauthor{Pierfrancesco Beneventano}{pierb@mit.edu}

  % You may provide any keywords that you find helpful for describing your paper
  \icmlkeywords{Calibration, copy them}

  \vskip 0.3in
]

% This command actually creates the footnote in the first column listing the
% affiliations and the copyright notice.
% If you have no special notice, KEEP empty braces:
\printAffiliationsAndNotice{}  % no special notice
% Or, if applicable, use the standard equal contribution text:
% \printAffiliationsAndNotice{\icmlEqualContribution}

\begin{abstract}
    Modern neural networks can achieve high accuracy while remaining poorly calibrated, producing confidence estimates that do not match empirical correctness. Yet calibration is often treated as a post-hoc attribute. We take a different perspective: we study calibration as a \emph{training-time} phenomenon on small vision tasks, and ask whether calibrated solutions can be obtained reliably by intervening on the training procedure. We identify a tight coupling between calibration, curvature, and margins during training of deep networks under multiple gradient-based methods. Empirically, Expected Calibration Error (ECE) closely tracks curvature-based sharpness throughout optimization. Mathematically, we show that both ECE and Gauss--Newton curvature are controlled, up to problem-specific constants, by the same margin-dependent exponential tail functional along the trajectory. Guided by this mechanism, we introduce a margin-aware training objective that explicitly targets robust-margin tails and local smoothness, yielding improved out-of-sample calibration across optimizers without sacrificing accuracy.
\end{abstract}

% \onecolumn
% \input{rebuttal}
% \twocolumn

\begin{figure}[t]
  \centering
  \includegraphics[width=\columnwidth]{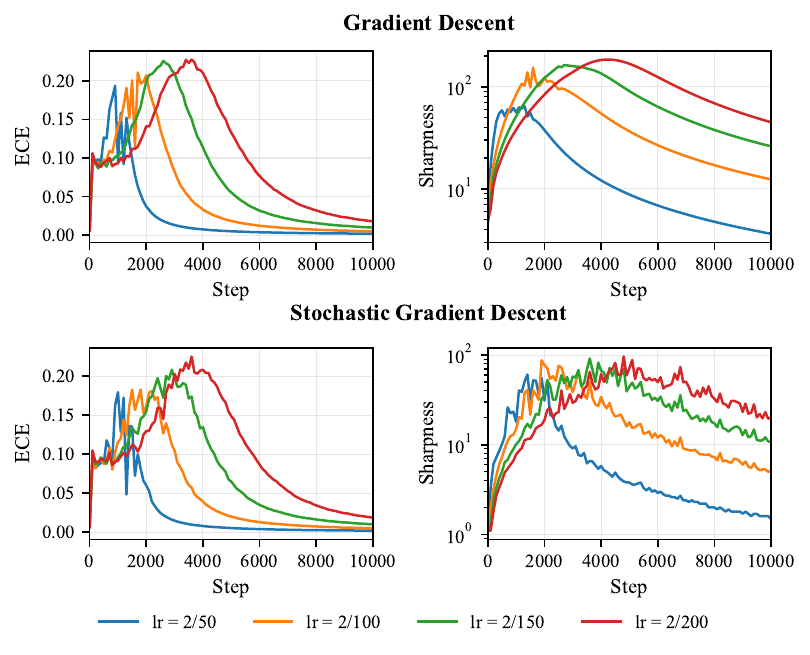}
  \caption{Training dynamics for Gradient Descent and Stochastic Gradient Descent across learning rates on CIFAR-10. \textbf{Expected Calibration Error closely tracks sharpness throughout training}: both rise as the model enters the edge of stability regime, peak around the same time, and decay together as training progresses.}
  \label{fig:pizza}
\end{figure}

\input{sections/intro}

\input{sections/related_work}
\input{sections/correlation}  
\input{sections/sam_vs_muon}
\input{sections/math}

\input{sections/calmo}
\input{sections/conclusion}

\newpage
\bibliography{references}
\bibliographystyle{icml2026}

%%%%%%%%%%%%%%%%%%%%%%%%%%%%%%%%%%%%%%%%%%%%%%%%%%%%%%%%%%%%%%%%%%%%%%%%%%%%%%%
% APPENDIX
%%%%%%%%%%%%%%%%%%%%%%%%%%%%%%%%%%%%%%%%%%%%%%%%%%%%%%%%%%%%%%%%%%%%%%%%%%%%%%%
\newpage
\appendix
\onecolumn

% \tableofcontents
% \pier{fix this if ICML allows it.}

\newpage

\renewcommand{\thefigure}{A\arabic{figure}}
\renewcommand{\thetable}{A\arabic{table}}
\setcounter{figure}{0}
\setcounter{table}{0}

\input{sections/appendix/further_related_work}
\input{sections/appendix/logistic_proof}

\input{sections/appendix/all_plots}
\input{sections/appendix/calmo_vs_ce}
\input{sections/appendix/mse_extension} 
\newpage

\end{document}

%% file: sections/intro.tex
\section{Introduction}

% Calibration matters in deployment; deep nets are miscalibrated.
% \paragraph{Motivations.} 
Neural networks are now routinely used in settings where a model's stated uncertainty matters as much as its accuracy, for example, in risk-sensitive domains such as healthcare or autonomous driving. In these contexts, we would like predicted probabilities to reflect empirical correctness: among predictions made with confidence $p$, approximately a fraction $p$ should be correct. However, modern deep networks are often \emph{miscalibrated}, frequently exhibiting overconfidence even when wrong \citep{guo2017calibration}.

% Most calibration work is post-hoc; what happens during training?
A widely adopted response to overconfidence in neural networks is \emph{post-hoc} calibration: models are trained for accuracy and their predicted probabilities are adjusted afterward. While effective in many regimes, this framing treats calibration as a post-training concern, rather than a property that \emph{emerges during training}. 
Recent work, however, suggests that calibration may be influenced by training dynamics. For example, Sharpness-Aware Minimization (SAM), which biases optimization toward flatter regions of the loss landscape, has been observed to generally reduce overconfidence \cite{tan2025samcalibration}. Since sharpness is a geometric property shaped along the optimization trajectory, these results hint at a link between calibration and loss geometry during training. Yet this connection remains poorly understood, motivating the following question:
\begin{center}
\emph{How does calibration evolve throughout optimization, \\ 
and what aspects of the training govern it?}
\end{center}
Importantly, answering this question is non-trivial. Calibration is defined in terms of the model’s \emph{predictive confidence distribution}, whereas most training-time analyses characterize optimization through \emph{loss-landscape geometry}, using quantities such as curvature or sharpness to reason about stability and generalization. Bridging these viewpoints is challenging, particularly early in training when predictions are still rapidly evolving. While several recent studies have examined relationships between sharpness, flat minima, and calibration at convergence, the resulting picture is mixed: curvature proxies do not reliably predict calibration across architectures, regularization schemes, or optimizers \citep{mason}. % However, examining geometric properties and calibration at the final solution obscures training-time dynamics that can be exploited to manage miscalibration. 
Crucially, these analyses focus on
converged solutions. By contrast, we study how calibration and loss geometry co-evolve during training—a perspective that not only clarifies their relationship but also reveals dynamics that can be exploited to improve calibration.
After formalizing a training-time connection between calibration and loss geometry, we turn to a prescriptive goal:
\begin{center}
\emph{Can we intervene on the training procedure to reliably obtain calibrated solutions?}
\end{center}

\paragraph{Contributions.}
We study calibration \emph{during} optimization by jointly tracking calibration metrics, such as Expected Calibration Error (ECE), and curvature-based sharpness proxies, such as Gauss--Newton (GN) sharpness, along the training trajectory (Section \ref{sec:sharpness-calibration}). We conduct this analysis throughout training rather than only at convergence. We observe across multiple gradient-based optimization methods that
% \textit{we establish that this co-evolution persists across different gradient-based methods, even though ECE and Curvature are not causally linked suggesting our main message that:}
\begin{summarybox}
\textbf{Contribution 1.}
Calibration error and curvature-based measures exhibit a strong and consistent temporal correlation throughout training.
\end{summarybox}
%
% Explanation: a margin-based unifier links calibration and curvature and explains train--test divergence.
%
% Further relevant scientific contributions are:
% \begin{itemize}[leftmargin=1em,itemsep=0.15em,topsep=0.15em,parsep=0pt]
    % \item Muon leads to less calibrated solution in our framework (see Section \ref{?}). \item \pier{others Ale? What's on SAM?}
% \end{itemize}
% \ale{I would move these observations to the discussion.}
%
% Causal test: compare flat-minima objectives vs directional curvature suppression along the trajectory.
% \paragraph{What we do: Engineering.}
% We leverage the coupling between calibration and curvature to probe whether sharpness \emph{causally} influences calibration. 
Next, we probe whether the coupling between calibration and curvature is causal. 
We compare optimizers designed to minimize sharpness (i.e., favoring flatter minima) with methods that instead suppress steep descent directions along the trajectory. Despite both affecting curvature, we find that
\begin{summarybox}
    \textbf{Contribution 2:} 
    Directional interventions yield consistently better in-sample calibration than flat-minima methods. 
\end{summarybox}
This clarifies the distinction between ``being flat'' and ``training in stable directions'' and its relation to confidence.

We then provide a unifying explanation through the lens of the \emph{margin} in Section \ref{sec:math}. Intuitively, both confidence and curvature are shaped by how strongly the model separates the correct class from its nearest competitor. We formalize this connection by showing mathematically that 
\begin{summarybox}
    \textbf{Contribution 3:} 
    A single margin-based functional controls both calibration error and Gauss--Newton sharpness, up to problem-dependent constants. 
\end{summarybox}
This perspective also clarifies an often-observed phenomenon: training and test calibration can diverge even when accuracy improves \citep{carrell2022calibration, wu2025benefits}. Once most examples achieve large positive margins, a relatively small set of near-boundary or negative-margin points can dominate the margin functional, making calibration highly sensitive to how optimization shapes this tail.

Based on the mathematical connection we establish between margins, curvature, and calibration, we design a new robust-margin-aware loss. This yields a principled training-time handle on calibration, which we confirm in Section~\ref{sec:CalMO}:
\begin{summarybox}
    \textbf{Contribution 4:} 
    We propose CalMO (CALibration with Margin Objective), a training objective that yields better-calibrated models without sacrificing accuracy.
\end{summarybox}
% With the limitation is that leads to better calibration in the long run, not in the early part of training.
Our margin-based view also yields concrete diagnostics for optimizer behavior. In particular, we observe that \textbf{Muon induces unusually large training margins, leading to near-zero training ECE, but severe test-time overconfidence}. Consistent with this finding, Muon is also the optimizer that benefits most from CalMO, as robust-margin control directly targets this failure mode.

%% file: sections/related_work.tex
\section{Preliminaries and Related Work}
\paragraph{Calibration.}
A model is calibrated if its predicted confidence values reflect empirical correctness frequencies: among all predictions made with confidence $p$, approximately a fraction $p$ should be correct \cite{Niculescu-Mizil2005, degroot1983forecasters}. Since calibration is a distributional property, it is typically assessed empirically from finite samples. The most widely used measure is ECE~\cite{Naeini2015}, which compares accuracy and confidence after binning predictions by confidence.
Predictions are grouped into $M$ bins according to their confidence
(i.e., the maximum predicted class probability); we denote by $B_m$
the set of predictions whose confidence falls in bin $m$.
For each bin, we compute the empirical accuracy $\mathrm{acc}(B_m)$ and the average predicted confidence $\mathrm{conf}(B_m)$. ECE is then defined as
\begin{equation}
\mathrm{ECE}
=
\sum_{m=1}^M \frac{|B_m|}{n}
\bigl|\mathrm{acc}(B_m) - \mathrm{conf}(B_m)\bigr|.
\end{equation}
Calibration is also evaluated through other summary statistics such as the Maximum Calibration Error \cite{Naeini2015}, reliability diagrams \cite{guo2017calibration}, kernel-based metrics
such as the Kernel Calibration Error (KCE) \cite{kumar2018trainable}, or similar related metrics. Further details, including the multiclass extension, are provided in Appendix~\ref{app:ece_defs}.

\paragraph{Mitigating miscalibration.}
Existing approaches to mitigating calibration error fall into two broad categories: post-hoc methods and intrinsic training-time methods. 
Post-hoc methods modify model outputs without adjusting the parameters, and include techniques such as Platt scaling \cite{platt2000probabilistic} and temperature scaling \cite{guo2017calibration}. Intrinsic methods incorporate calibration objectives directly into training by using regularization to penalize overconfident predictions~\cite{pereyra2017regularizing}, by incorporating differential proxies for calibration into the loss~\cite{kumar2018trainable,bohdal2023metacalibration}, or by applying label smoothing~\cite{muller2019label}. A more detailed overview of these methods is provided in Appendix~\ref{appendix:further_related_work}.

An important perspective links calibration to adversarial robustness. Points with small robust margin have been shown to be more likely miscalibrated; motivated by this, R-AdaLS \citep{qin2021improving} bins data points by robust margin and applies stronger label smoothing to low-margin samples. Moreover, state-of-the-art calibration losses can be unified as penalties on logit distances \citep{liu2022devil}. These results suggest that calibration errors are tied to local margin geometry, rather than solely to global confidence statistics. 

\paragraph{Curvature along the trajectory.}
A complementary line of work treats curvature not as a static attribute of the final solution, but as a dynamical quantity that governs optimization stability throughout training. Early work established a link between ``wide valleys'' and generalization \citep{hochreiter1997flatminima}, a picture later reinforced by the sharp-minima account of the large-batch generalization gap \citep{keskar2016largebatch}, motivating the study of Hessian-based sharpness along the training trajectory.
% Along this line of work, \citet{jastrzebski_catastrophic_2021} showed that SGD implicitly regularizes the Gauss-Newton matrix we track. 
More recent analyses emphasize that curvature generally increases as gradient methods approach an \emph{edge-of-stability} (EoS) regime, where the top curvature direction becomes commensurate with the inverse step size and the dynamics become oscillatory or unstable \citep{xing_walk_2018, jastrzebski_three_2018, jastrzebski_relation_2019, cohen2021edge, cohen2022adaptiveEoS, andreyev_edge_2024}, before often decreasing toward the end of training. 

In parallel, neighborhood-based objectives reshape the training trajectory by explicitly penalizing worst-case loss increases under small weight perturbations (e.g., SAM), thereby suppressing sensitivity to sharp directions \citep{foret2021sam, zhou2024sharpness}; such procedures have also been observed to improve confidence estimates under cross-entropy \citep{tan2025samcalibration}. At the same time, evidence suggests that the sharpness--calibration relationship can be fragile across architectures and regularization schemes \citep{mason}, pointing to the importance of \emph{how} curvature directions are traversed, not only where optimization converges. Our trajectory-level study aligns with this viewpoint by jointly tracking calibration and curvature throughout training and by contrasting convergence-to-flatness with explicit suppression of unstable high-curvature directions.

%% file: sections/correlation.tex
\section{The Coupling Between Calibration and Sharpness}
\label{sec:sharpness-calibration}

We track sharpness and ECE throughout training to study how loss landscape geometry relates to calibration. Following \citet{cohen2021edge}, we train an MLP (2 hidden layers, 200 units, tanh activation) on CIFAR-10 under cross-entropy (CE) loss. This small-scale setup enables frequent computation of GN sharpness, which serves as a proxy for the top Hessian eigenvalue $\lambda_{\max}$.
% For out-of-sample evaluation, we use ResNet-18
Models are trained using gradient descent (GD), stochastic gradient descent (SGD), 
% Adam \cite{adam2014method}, 
AdamW \cite{adam2014method,loshchilov2017decoupled}, Muon \cite{jordan2024muon}, and SAM \cite{foret2021sam}. We monitor GN sharpness and batch sharpness \citep{andreyev_edge_2024} as proxies for loss-landscape geometry, alongside ECE, KCE, loss, and accuracy. % We report results from cross-entropy (CE) in Table~\ref{tab:ece_sharpness_corr}. 

 % \begin{table}[t]
 %    \centering
 %    \footnotesize
 %    \setlength{\tabcolsep}{3pt}
 %    \begin{tabular}{lcccccc}
 %    \toprule
 %    & GD & SGD & Adam & AdamW & Muon & SAM \\
 %    \midrule
 %    Train & $.83 {\scriptstyle \pm .08}$ & $.84 {\scriptstyle \pm .07}$ & $.70 {\scriptstyle \pm .08}$ & $.72 {\scriptstyle \pm .07}$ & $.63 {\scriptstyle \pm .26}$ & $.92 {\scriptstyle \pm .04}$ \\
 %    Test & $.96 {\scriptstyle \pm .02}$ & $.97 {\scriptstyle \pm .01}$ & $.17 {\scriptstyle \pm .17}$ & $.15 {\scriptstyle \pm .15}$ & $-.10 {\scriptstyle \pm .28}$ & $.98 {\scriptstyle \pm .01}$ \\
 %    \bottomrule
 %    \end{tabular}
 %    \caption{Pearson correlation between ECE and sharpness, mean $\pm$ std over 4 learning rates.}
 %    \label{tab:ece_sharpness_corr}
 %  \end{table}
\subsection{Calibration Temporally Correlates with Sharpness}

  \begin{table}[t]
      \centering
      \footnotesize
      \setlength{\tabcolsep}{3pt}
      \begin{tabular}{llccccc}
      \toprule
      & & GD & SGD & AdamW & Muon & SAM \\
      \midrule
      \multirow{2}{*}{Train}
      & ECE & $.83 {\scriptstyle \pm .08}$ & $.84 {\scriptstyle \pm .07}$ & $.72 {\scriptstyle \pm .07}$ &
   $.63 {\scriptstyle \pm .26}$ & $.92 {\scriptstyle \pm .04}$ \\
      & KCE & $.83 {\scriptstyle \pm .08}$ & $.84 {\scriptstyle \pm .07}$ & $.70 {\scriptstyle \pm .08}$ &
   $.61 {\scriptstyle \pm .28}$ & $.91 {\scriptstyle \pm .04}$ \\
      \midrule
      \multirow{2}{*}{Test}
      & ECE & $.96 {\scriptstyle \pm .02}$ & $.97 {\scriptstyle \pm .01}$ & $.15 {\scriptstyle \pm .15}$ &
   $-.10 {\scriptstyle \pm .28}$ & $.98 {\scriptstyle \pm .01}$ \\
      & KCE & $.97 {\scriptstyle \pm .01}$ & $.97 {\scriptstyle \pm .01}$ & $.21 {\scriptstyle \pm .22}$ &
   $-.16 {\scriptstyle \pm .31}$ & $.98 {\scriptstyle \pm .01}$ \\
      \bottomrule
      \end{tabular}
      \vspace{0.2cm}
      \caption{Pearson correlation between calibration metrics (ECE and KCE) and GN sharpness, mean $\pm$ std over 4
  learning rates. }
      \label{tab:ece_sharpness_corr}
      \vspace{-0.6cm}
  \end{table}

Figure~\ref{fig:pizza} shows training dynamics for GD and SGD. Across all CE experiments\footnote{We observe a similar temporal correlation on models trained with mean-squared error (MSE) loss.
There, however, both GN sharpness and ECE increase and then plateau at high values. This behavior reflects the fact that MSE is not a proper scoring rule and induces systematic underconfidence; we therefore focus on CE in the main text and defer a detailed discussion of MSE to Appendix~\ref{sec:mse_ece}.
}, 
training ECE and GN sharpness follow the same trajectory: both quantities are small at initialization, increase as training enters an EoS regime, and decrease again later in training. This holds across optimizers and learning rates
(see Figure~\ref{fig:sam_vs_muon} and Appendix~\ref{sec:correlation_plots} for additional results). Similar observations extend to CIFAR-100 (Appendix~\ref{sec:correlation_plots}). Table~\ref{tab:ece_sharpness_corr} quantifies this effect, showing strong Pearson correlations between ECE and (batch) sharpness throughout training; KCE closely matches ECE across all settings, confirming the coupling is not a binning artifact.

% \pier{Do claim in the caption of Table 1 or in the appendix related how you picked the hyperparameters. In the appendix show the plots and table when ablating over $\eta$ and batch size, here pick the best in terms of performance}

% To quantify this empirical observation, we compute the correlation between training ECE and sharpness over the course of training for each setting. A summary is reported in Table~\ref{tab:ece_sharpness_corr}% UPDATE THE CORRELATIONS, SOME ARE PLACEHOLDERS

% As we explain later in Section \ref{section:math}, we expect calibration and curvature to converge to zero at the same time when all training data are well classified. It is thus not surprising that towards the very end of training ECE and Curvature dynamics couple. This pronounced, time-resolved co-evolution of sharpness and ECE observed well before convergence, across optimizers and loss functions is instead surprising and had not been observed nor explained before to the knowledge of the authors. In particular, the strong correlations in Table~\ref{tab:ece_sharpness_corr} arise in regimes where the model is far from interpolation and calibration is nontrivial.

% These observations suggest that calibration is governed not merely by convergence to a flat minimum, but by the training trajectory itself. Specifically, models that traverse training regions of lower sharpness tend to exhibit improved calibration throughout training, not just asymptotically. In the next section, we provide a mathematical perspective on this phenomenon. 

 As we show in Section~\ref{sec:math}, both calibration and sharpness converge to zero once all training points are correctly classified, so their coupling at the end of training is expected. What is surprising is the strong correlation during training, well before convergence, when the model is far from interpolation and calibration is nontrivial. To our knowledge, this has not been observed or explained before. These results suggest that calibration does not depend on training-metrics at convergence, but on the trajectory itself: models that stay in lower-sharpness regions remain better calibrated throughout training, not just asymptotically. We formalize this connection in Section~\ref{sec:math}. 

%% file: sections/sam_vs_muon.tex
\subsection{Converging to Flat Minima or Following Flat Directions?}
\label{section:interventions}
 
 The strong temporal correlation between sharpness and calibration established in the previous section raises a causal question: \emph{does calibration improve because optimization converges to flatter minima, or because training dynamics suppress movement along high-curvature directions?} To disentangle these mechanisms, we formulate two competing hypotheses, and empirically find that suppressing directions of steep descent leads to improved in-sample calibration.

\begin{hyp}[Flat Minima for Calibration] \label{hyp:flat_minima}
Training procedures that bias optimization toward flat minima lead to lower in-sample calibration error.
\end{hyp}

\begin{hyp}[Directional Flatness for Calibration] \label{hyp:sharp_directions}
Training procedures that suppress updates along directions of steep curvature lead to lower in-sample calibration error, even if the final solution is not globally flat.
\end{hyp}

We test Hypothesis~\ref{hyp:flat_minima} using SAM \cite{zhou2024sharpness}, which explicitly penalizes worst-case loss perturbations within a local neighborhood, and is known to bias optimization toward flatter minima. To test Hypothesis~\ref{hyp:sharp_directions}, we use optimizers that directly suppress high-curvature directions during training. Muon  \cite{jordan2024muon} rescales gradient components to equalize their magnitudes, effectively clamping updates along sharp directions while amplifying flatter ones. BulkSGD \cite{song2024does} achieves a more extreme intervention by projecting gradients onto the subspace orthogonal to the top Hessian eigenvectors, thereby removing the steepest descent directions entirely. A more detailed analysis of the optimizers and their benefits in this experimental setting can be found in Appendix \ref{app:sam_muon_bulk}. 

Figures~\ref{fig:sam_vs_muon} and \ref{fig:bulk-sgd} show the training dynamics. Although SAM consistently maintains lower sharpness than GD and SGD, its calibration trajectory closely mirrors that of standard training, with a comparable peak ECE and slower convergence. In contrast, both Muon and BulkSGD achieve substantially lower peak calibration error and faster ECE decay, despite exhibiting markedly different sharpness profiles.

\begin{figure}[t]
  \centering
  \includegraphics[width=\columnwidth]{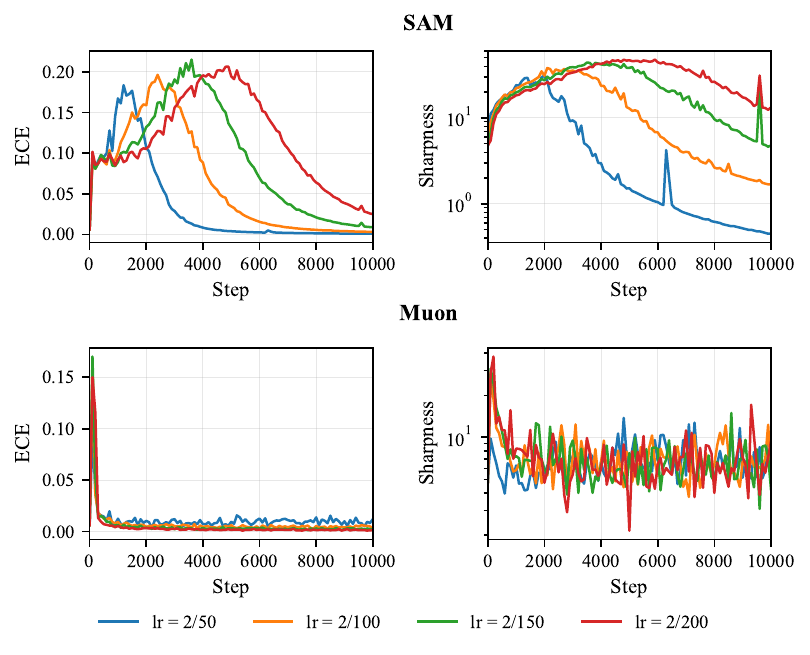}
  \caption{Training dynamics for SAM and Muon across learning rates on CIFAR-10.}
  \label{fig:sam_vs_muon}
\end{figure}

\begin{figure*}
\centering

\begin{subfigure}{0.48\linewidth}
    \centering
    \includegraphics[width=\linewidth]{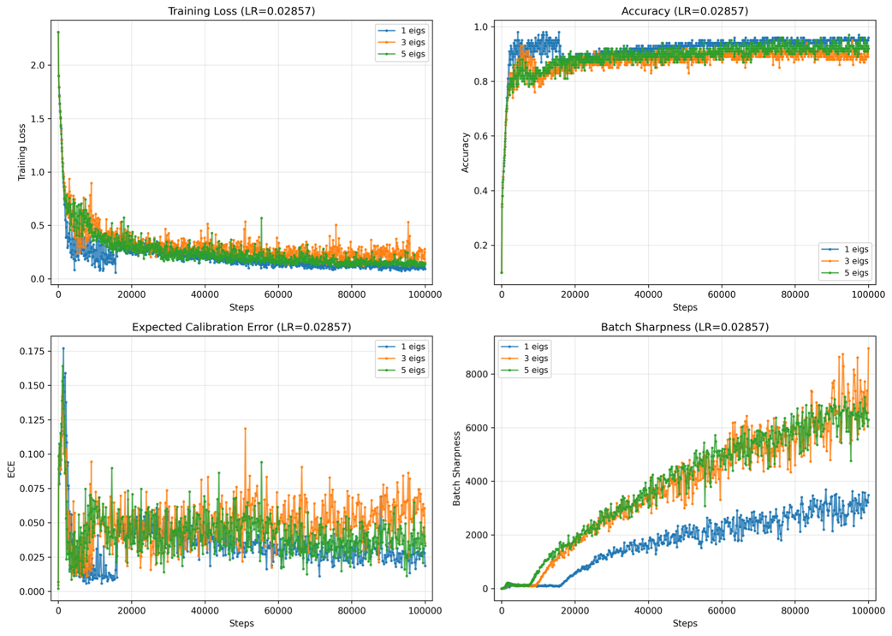}
    \caption{BulkSGD with learning rate $\frac{2}{70} $}
    \label{fig:bulk_sgd_2_70}
\end{subfigure}
\hfill
\begin{subfigure}{0.48\linewidth}
    \centering
    \includegraphics[width=\linewidth]{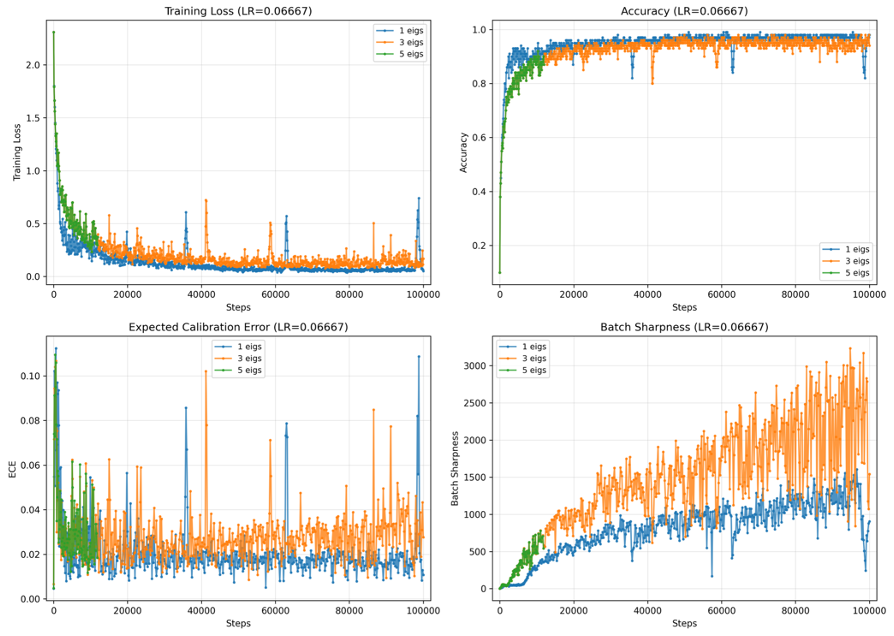}
    \caption{BulkSGD with learning rate $\frac{2}{30} $}
    \label{fig:bulk_sgd_2_30}
\end{subfigure}

\caption{Training dynamics for BulkSGD across different learning rates and number of projected-out gradients on CIFAR10.}
\label{fig:bulk-sgd}
\end{figure*}

Notably, Muon maintains low calibration error while operating in regimes that are not globally flat, and BulkSGD improves calibration even in the presence of pronounced instability. 
This suggests that calibration is sensitive to how optimization traverses sharp directions, rather than to the absolute flatness of the loss landscape. 
However, BulkSGD induces oscillatory dynamics and a sharpness divergence when too many dominant directions are projected out, making it impractical as a standalone optimizer.
Together, these results support Hypothesis~\ref{hyp:sharp_directions} over Hypothesis~\ref{hyp:flat_minima}: suppressing updates along high-curvature directions during training leads to improved in-sample calibration, whereas convergence to flat minima alone does not.  

\subsection{Out-of-Sample Behavior}
The sharpness--calibration coupling is less consistent out of sample (Table~\ref{tab:ece_sharpness_corr}). Across optimizers, test ECE does not consistently decrease alongside training ECE---in some cases it worsens as training progresses, even after sharpness and training calibration improve (Appendix~\ref{sec:correlation_plots}). 
Muon is an extreme example: training ECE drops to near zero while test ECE remains high, yielding a negative test correlation despite strong in-sample alignment. This reflects the calibration generalization gap in overparameterized models \citep{carrell2022calibration, berta2025rethinking, wu2025benefits}: a model that fits training data well can become overconfident on misclassified test examples, causing test ECE to increase and decouple from sharpness. In Muon's case, the near-zero training ECE is consistent with the large training margins it induces (Section~\ref{sec:math}): once these are extreme, the model becomes overconfident on test examples near the decision boundary, precisely where the margin functional is most sensitive.

Together with the findings from Section~\ref{section:interventions}, these results point to an important distinction:
on the one hand, directional interventions yield better in-sample calibration than flat-minima methods, suggesting that \emph{how} optimization traverses curvature matters more than where it converges; on the other hand, in-sample improvements do not automatically transfer to test data, pointing to a fundamental train--test gap. In the following section, we formalize this train--test gap and use it to design a training-time intervention that improves out-of-sample calibration.

%% file: sections/math.tex
%%%%%%%%%%%%%%%%%%%%%%%%%%%%%%%%%%%%%%%%%%%%%%%%%%%%%%%%%%%%%%%%%%%%%%%%%%%%%%
% Section 4
%%%%%%%%%%%%%%%%%%%%%%%%%%%%%%%%%%%%%%%%%%%%%%%%%%%%%%%%%%%%%%%%%%%%%%%%%%%%%%
\section{Curvature and Calibration in the Separable and Non-separable Regimes}
\label{sec:math}

In this section we explain the temporal alignment between calibration error and curvature observed in Section~\ref{sec:sharpness-calibration}
through a common underlying quantity: the (robust) \emph{true logit margin}. Our central claim is that, across training,
both ECE and Gauss--Newton sharpness respond to the evolution of the same margin-dependent
tail functional. This provides a concrete mechanism linking predictive confidence to loss-landscape geometry.

The analysis naturally separates into two regimes. Early in training, the data behave as
\emph{overlap-dominated}: a nontrivial fraction of examples exhibit small or negative true margins, and no uniform separability holds. 
The same regime persists at test time whenever accuracy is below $1$, since any misclassified example has $m_\theta(x,y)<0$ by definition---and, unlike on training, no cross-entropy mechanism pushes those margins to grow. 
In either case, neither calibration error nor curvature is forced to be small, and both can be dominated by a few hard or ambiguous points.
This perspective aligns with observations that loss and curvature are controlled by a small set of strongly opposing examples
\citep{rosenfeld2023outliers}. Later in training on the \emph{training set}, models trained with cross-entropy typically enter an
\emph{interpolating} regime in which all true margins become strictly positive. In this regime, calibration error and curvature
become tightly coupled: once the margin tail contracts, both quantities are forced to decrease together.

Together, these results provide a mechanism-level explanation for the observed co-evolution of calibration and curvature during
training, and clarify why training and test calibration can diverge even as accuracy improves. We conclude the section by
connecting these theoretical regimes to the empirical training dynamics observed in Section~\ref{sec:sharpness-calibration}. 

\paragraph{Setup and Notation.}
Let $(X,Y)\sim\pi$ with $Y\in\{1,\dots,K\}$. A model $\theta\in\mathbb{R}^d$ produces logits $z_\theta(x)\in\mathbb{R}^K$ and probabilities
$p_\theta(x)=\softmax(z_\theta(x))$. Let $\hat y(x)=\arg\max_k z_\theta(x)_k$ (deterministic tie-break) and confidence $\hat P(x)=\max_k p_\theta(x)_k$.
Define the \emph{true (logit) margin}
\[
m_\theta(x,y):=z_\theta(x)_y-\max_{j\neq y}z_\theta(x)_j,
\]
and the \emph{robust true margin} at radius $\varepsilon>0$,
\[
m_{\varepsilon,\theta}(x,y):=\inf_{\|\delta\|\le\varepsilon} m_\theta(x+\delta,y).
\]
Let $\mathrm{ECE}_M$ denote the population $\pi$/sample $\mathcal{D}$ binned ECE computed by binning $\hat P(X)$ into $M$ bins.
Let $J_\theta(x):=\partial z_\theta(x)/\partial\theta\in\mathbb{R}^{K\times d}$ and, for cross-entropy, $H_z(p):=\diag(p)-pp^\top$.
Define the population Gauss--Newton matrix and its curvature proxy
\[
H_{\mathrm{GN}}(\theta;\pi):=\mathbb{E}_{\pi}\!\big[J_\theta(X)^\top H_z(p_\theta(X))J_\theta(X)\big],
\]
\[
\lambda_{\max}:=\lambda_{\max}\!\big(H_{\mathrm{GN}}(\theta;\pi)\big).
\]

\subsection{Regime I: overlap-dominated (non-separable) behavior}
\label{sec:regime_overlap}

In this subsection all the quantities (ECE, GN matrix, robust margin, robust margin moment) are considered at a population level. See details in Appendix \ref{app:math_proofs}. Define the robust exponential margin moment $ Q(\theta):=\mathbb{E}_{(X,Y)\sim\pi}\!\big[e^{-m_{\varepsilon,\theta}(X,Y)}\big]$.

\begin{theorem}[Overlap regime: robust-margin upper bounds]
\label{thm:overlap_bounds}
For any $\theta$ and any distribution $\pi$,
\[
\mathrm{ECE}_M\ \le\ (K-1)\,Q(\theta).
\]
If additionally $\|J_\theta(X)\|_{\mathrm{op}}\le C_J$ holds $\pi$-a.s., then
\[
\lambda_{\max}\ \le\ 2C_J^2\,(K-1)\,Q(\theta).
\]
\end{theorem}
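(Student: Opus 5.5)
The plan is to reduce both bounds to a single pointwise quantity, the probability mass not assigned to the true class, $1-p_\theta(x)_y$, and then control that by the robust margin. The first observation is a pointwise softmax estimate:
\[
1-p_\theta(x)_y=\sum_{j\neq y}\frac{e^{z_j}}{\sum_k e^{z_k}}\le \sum_{j\neq y} e^{z_j-z_y}\le (K-1)\,e^{-m_\theta(x,y)} .
\]
Since $\delta=0$ is feasible in the infimum, $m_{\varepsilon,\theta}(x,y)\le m_\theta(x,y)$, so $e^{-m_\theta}\le e^{-m_{\varepsilon,\theta}}$. Hence $\mathbb{E}_\pi[1-p_\theta(X)_Y]\le (K-1)Q(\theta)$. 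Everything else is bounding ECE and $\lambda_{\max}$ by (a constant times) this expected true-class deficit.

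\textbf{ECE.} By the triangle inequality across bins,
\[
\mathrm{ECE}_M\le \mathbb{E}_\pi\big|\mathbf{1}\{\hat y(X)=Y\}-\hat P(X)\big|.
\]
I then split into two cases.
\begin{itemize}
\item If $\hat y=y$, the summand is $1-\hat P=1-p_y$.
\item If $\hat y\neq y$, the summand is $\hat P\le 1$. In this case $m_\theta\le 0$, so $(K-1)e^{-m_\theta}\ge K-1\ge 1$, and the pointwise bound still holds (for $K\ge 2$).
\end{itemize}
So in both cases the summand is at most $(K-1)e^{-m_\theta}\le (K-1)e^{-m_{\varepsilon,\theta}}$. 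Taking expectations gives the first claim. The only subtlety is that misclassified points cannot be bounded via $1-p_y$ alone when $\hat P$ is large. The case split handles this because the exponential exceeds $1$ whenever the margin is nonpositive.

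\textbf{Curvature.} Since $H_{\mathrm{GN}}$ is PSD,
\[
\lambda_{\max}\le \mathbb{E}\,\lambda_{\max}\!\big(J^\top H_z J\big)\le C_J^2\,\mathbb{E}\,\|H_z(p)\|_{\mathrm{op}}
\]
by convexity of $\lambda_{\max}$ and the Jacobian bound. The key step, which I expect to be the main obstacle, is to show
\[
\|\diag(p)-pp^\top\|_{\mathrm{op}}\le 2(1-p_y)
\]
for any index $y$. I would use Gershgorin, or equivalently the $\ell_\infty$ row-sum bound valid for symmetric matrices. Row $i$ has absolute sum $p_i(1-p_i)+p_i\sum_{j\neq i}p_j=2p_i(1-p_i)$. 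Two cases then close it:
\begin{itemize}
\item For $i=y$ this is $2p_y(1-p_y)\le 2(1-p_y)$.
\item For $i\neq y$ it is $\le 2p_i\le 2(1-p_y)$, since $p_i\le 1-p_y$.
\end{itemize}
Combining with the softmax estimate yields $\lambda_{\max}\le 2C_J^2(K-1)Q(\theta)$.

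Finally, I would check measurability and integrability. If $Q=\infty$ the bounds are trivial. The tie-breaking rule in $\hat y$ does not matter, because ties give $m_\theta\le 0$ and fall into the misclassified case.
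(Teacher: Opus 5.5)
Your proof is correct and follows the paper's route. You reduce $\mathrm{ECE}_M$ to $\mathbb{E}|\mathbf{1}\{\hat y=Y\}-\hat P|$, bound it pointwise by $(K-1)e^{-m_\theta}\le (K-1)e^{-m_{\varepsilon,\theta}}$, and for curvature use convexity of $\lambda_{\max}$, the Jacobian bound, and Gershgorin on $\diag(p)-pp^\top$. The only small differences are these: the paper handles misclassified points via $\hat P=p_{\hat y}\le\sum_{j\neq y}p_j=1-p_y$, so contrary to your remark they \emph{can} be bounded through $1-p_y$ alone; and it runs Gershgorin against $p_{\max}$, which gives the tighter, label-free bound $2(1-p_{\max})\le 2(1-p_y)$.
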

Proof is in Appendix \ref{app:proof_overlap}.

\paragraph{Interpretation (two bottlenecks).}
Theorem~\ref{thm:overlap_bounds} exposes two multiplicative controls:
\begin{itemize}[leftmargin=1em,itemsep=0.15em,topsep=0.15em,parsep=0pt]
\item a \emph{probability bottleneck} $Q(\theta)$, dominated by the tail of small/negative robust margins,
\item a \emph{geometry bottleneck} $C_J^2$ (how parameter perturbations move logits).
\end{itemize}
In overlap-dominated regimes (early training or test-time), a persistent set of small robust margins can keep $Q(\theta)$ bounded away from $0$,
so these bounds need not certify vanishing calibration error or curvature even if loss continues to decrease.

%%%%%%%%%%%%%%%%%%%%%%%%%%%%%%%%%%%%%%%%%%%%%%%%%%%%%%%%%%%%%%%%%%%%%%%%%%%%%%
% Regime II: interpolating training set bounds (two-sided + coupling)
%%%%%%%%%%%%%%%%%%%%%%%%%%%%%%%%%%%%%%%%%%%%%%%%%%%%%%%%%%%%%%%%%%%%%%%%%%%%%%
\subsection{Regime II: Interpolating (separable) behavior on the training set}
\label{sec:regime_separable}

In this subsection all the quantities (ECE, GN matrix, robust margin, robust margin moment) are considered at a finite-sample level. See details in Appendix \ref{app:math_proofs}. Let $\gamma(\theta;\mathcal{D})\;:=\;\min_{1\le i\le n} m_\theta(x_i,y_i)$ and the empirical exponential margin moment $Q_{\mathcal{D}}(\theta)
:=\frac{1}{n}\sum_{i=1}^n e^{-m_{\theta_t}(x_i,y_i)}$.

% In overparameterized networks trained with CE, once the training data are interpolated, gradient descent drives
% $p_\theta(x_i)_{y_i}\to 1$, so the logit-level curvature $\diag(p)-pp^\top$ collapses and training ECE vanishes.

\begin{theorem}[Interpolating regime: two-sided ECE--margin control and coupling to $\lambda_{\max}$]
\label{thm:separable_bounds}
Assume $\gamma(\theta;\mathcal{D})>0$ (all training points correctly classified with strictly positive true margin).
Then
\begin{equation*}
\begin{split}
    \frac{1}{K}\,Q_{\mathcal{D}}(\theta)
    \ &\le\
    \mathrm{ECE}_M
    \ \le\
    (K-1)\,Q_{\mathcal{D}}(\theta)
    \\& \le\
    (K-1)\,e^{-\gamma(\theta;\mathcal{D})}.
\end{split}
\end{equation*}
If additionally $\max_{i\in[n]}\|J_\theta(x_i)\|_{\mathrm{op}}\le C_J$, then
\[
\lambda_{\max}
\ \le\
2C_J^2\,(K-1)\,Q_{\mathcal{D}}(\theta)
\ \le\
2C_J^2\,K(K-1)\,\mathrm{ECE}_M,
\]
equivalently $\ \mathrm{ECE}_M
\ge \lambda_{\max}/(2C_J^2K(K-1))$.
\end{theorem}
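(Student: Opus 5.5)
The plan is to exploit the fact that, under $\gamma(\theta;\mathcal{D})>0$, every training point is correctly classified. This makes the binned ECE collapse to an exact average of per-example quantities that can be bounded in terms of $e^{-m_\theta(x_i,y_i)}$.

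\textbf{Step 1: exact form of $\mathrm{ECE}_M$.} Since $m_\theta(x_i,y_i)>0$ for all $i$, we have $\hat y(x_i)=y_i$ and $\hat P(x_i)=p_\theta(x_i)_{y_i}$. Hence $\mathrm{acc}(B_m)=1$ for every nonempty bin. Because $\mathrm{conf}(B_m)\le 1$, each absolute value equals $1-\mathrm{conf}(B_m)$. The weighted sum therefore telescopes to
\[
\mathrm{ECE}_M=\frac1n\sum_{i=1}^n\bigl(1-p_\theta(x_i)_{y_i}\bigr),
\]
with no binning artifact and independently of $M$.

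\textbf{Step 2: per-example two-sided bound.} Write $m_i=m_\theta(x_i,y_i)$ and $z=z_\theta(x_i)$. Then
\[
1-p_y=\frac{\sum_{j\ne y}e^{z_j-z_y}}{1+\sum_{j\ne y}e^{z_j-z_y}} .
\]
For the upper bound, drop the denominator and use $z_j-z_y\le -m_i$ to get $1-p_y\le (K-1)e^{-m_i}$. For the lower bound, keep only the numerator term for the nearest competitor, which equals $e^{-m_i}$. In the denominator, each $e^{z_j-z_y}\le 1$ because $m_i>0$, so the denominator is at most $K$; this gives $1-p_y\ge e^{-m_i}/K$. Averaging over $i$ yields $\tfrac1K Q_{\mathcal{D}}\le \mathrm{ECE}_M\le (K-1)Q_{\mathcal{D}}$. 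The last inequality, $Q_{\mathcal{D}}\le e^{-\gamma}$, is immediate since $m_i\ge\gamma$ for every $i$.

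\textbf{Step 3: curvature.} By subadditivity of $\lambda_{\max}$ over PSD summands,
\[
\lambda_{\max}\le \frac1n\sum_i \|J_\theta(x_i)\|_{\mathrm{op}}^2\,\|H_z(p_i)\|_{\mathrm{op}}\le \frac{C_J^2}{n}\sum_i\|H_z(p_i)\|_{\mathrm{op}} .
\]
The main technical point is the bound $\|\diag(p)-pp^\top\|_{\mathrm{op}}\le 2(1-p_y)$, which I plan to prove via Gershgorin. Row $k$ has diagonal entry $p_k(1-p_k)$ and off-diagonal absolute row sum $p_k(1-p_k)$, so every eigenvalue is at most $\max_k 2p_k(1-p_k)$. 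For $k=y$ this equals $2p_y(1-p_y)\le 2(1-p_y)$. For $k\ne y$ it is at most $2p_k\le 2(1-p_y)$. Combining with Step 2 gives $\lambda_{\max}\le 2C_J^2(K-1)Q_{\mathcal{D}}(\theta)$. Alternatively, using Step 1 directly gives $\lambda_{\max}\le 2C_J^2\,\mathrm{ECE}_M$. Finally, substituting the lower bound $Q_{\mathcal{D}}\le K\,\mathrm{ECE}_M$ from Step 2 yields $\lambda_{\max}\le 2C_J^2K(K-1)\mathrm{ECE}_M$, and rearranging gives the stated lower bound on ECE.

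I expect the Gershgorin estimate on $H_z$ to be the only step needing care. The rest is algebra once Step 1 has removed the binning.
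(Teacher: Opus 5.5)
Your proposal is correct and follows essentially the same route as the paper's proof. Both arguments use the exact reduction of $\mathrm{ECE}_M$ to the mean misconfidence $\frac1n\sum_i(1-p_\theta(x_i)_{y_i})$, the softmax sandwich $\frac1K e^{-m}\le 1-p_y\le (K-1)e^{-m}$, convexity of $\lambda_{\max}$ over the PSD summands, and a Gershgorin bound on $\diag(p)-pp^\top$; the paper states this last bound as $2(1-p_{\max})$ and then uses $p_{\max}=p_{y_i}$, whereas your direct $2(1-p_y)$ is equivalent here. Your side remark is also valid: combining Step 1 with the per-example Gershgorin bound gives $\lambda_{\max}\le 2C_J^2\,\mathrm{ECE}_M$, which sharpens the stated coupling by a factor $K(K-1)$ that the paper loses by routing through $Q_{\mathcal D}(\theta)$ in both directions.
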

Proof is in Appendix \ref{app:proof_separable}.

\paragraph{Interpretation.}
In the interpolating regime, $\mathrm{ECE}_M$ is equivalent up to constants to the exponential margin moment $Q_{\mathcal{D}}(\theta)$,
and $\lambda_{\max}$ is controlled by the \emph{same} moment (under bounded Jacobians).
This implies that once the training set is correctly classified, GN sharpness cannot be large without in-sample ECE also being large.
Moreover, in this regime, empirical binning becomes immaterial: 
$\mathrm{ECE}_M$ reduces to the mean misconfidence (formalized in Appendix~\ref{app:math_proofs}).

%%%%%%%%%%%%%%%%%%%%%%%%%%%%%%%%%%%%%%%%%%%%%%%%%%%%%%%%%%%%%%%%%%%%%%%%%%%%%%
% Discussion: mapping to experiments (train vs test and the "two-regime proxy")
%%%%%%%%%%%%%%%%%%%%%%%%%%%%%%%%%%%%%%%%%%%%%%%%%%%%%%%%%%%%%%%%%%%%%%%%%%%%%%
\subsection{Discussion: how the two-regime proxy matches the observed train/test split}
\label{sec:proxy_matches_experiments}

Section~\ref{sec:sharpness-calibration} shows strong co-evolution of training ECE and sharpness, well before convergence.
Asymptotic interpolation alone is insufficient to explain this observation.
Theorem~\ref{thm:overlap_bounds}--\ref{thm:separable_bounds} provide a mechanism-level lens: throughout training,
both quantities respond to the evolution of the same margin-dependent tail functional, and in the interpolating regime
this coupling becomes two-sided.

A remaining question is: why test ECE can increase while the logged (predicted) margin increases.
Two caveats explain this:
\begin{itemize}[leftmargin=1em,itemsep=0.15em,topsep=0.15em,parsep=0pt]
\item The bounds depend on the \emph{true} margin $m_\theta(x,y)$, not the predicted margin $z_{\hat y}-z_{(2)}$.
A model can become \emph{more confidently wrong} on a subset of test points: predicted margins increase, accuracy plateaus, and test ECE increases.
\item The sharpness control involves a geometry term (Jacobians). Even if $H_z(p)$ contracts as predictions become more one-hot,
large Jacobian norms (or failure of uniform Jacobian control) can keep curvature proxies large.
\end{itemize}

%% file: sections/calmo.tex
\section{From Margin Theory to Calibrated Training}
  \label{sec:CalMO}

Section~\ref{sec:math} establishes that calibration error is controlled by the margin
functional $Q(\theta) = \mathbb{E}[e^{-m_{\varepsilon,\theta}(X,Y)}]$, with the bound
$\mathrm{ECE} \le (K-1)\,Q(\theta)$ holding on any distribution. This motivates directly
targeting $Q(\theta)$ during training. We propose an objective that enforces robust margins.

% Section~\ref{sec:math} shows that both ECE and GN sharpness are controlled by the margin functional $Q(\theta) = \mathbb{E}[e^{-m_{\varepsilon,\theta}(X,Y)}]$. Under cross-entropy, training naturally drives $Q_{\mathrm{train}} \to 0$ as margins grow, yet test ECE remains high. The issue is not the absence of large margins, but their fragility: clean margins at training points need not survive small perturbations, and fragile margins provide no guarantee for nearby test points. This motivates an objective that explicitly targets \emph{robust} margins, not just large ones.

\subsection{Calibration with Margin Objective}
To minimize the ECE bound, we want large robust margins. For this, we combine two strategies: (i)~directly raising the margin at adversarial points, and (ii)~ensuring clean margins do not collapse under perturbation:
\begin{equation}\label{eq:CalMO-loss}
\begin{aligned}
L_{\mathrm{CalMO}}(\theta)
= \mathbb{E}_{(x,y)}\Big[
    &\ell_{\mathrm{CE}}(\theta;x,y)
    + \lambda_r\,R_{\mathrm{rob}}(\theta;x,y) \\
    &+ \lambda_s\,R_{\mathrm{smooth}}(\theta;x,y)
\Big].
\end{aligned}
\end{equation}
  where $\lambda_r, \lambda_s \ge 0$ are hyperparameters and $R_{\mathrm{rob}}$ and $R_{\mathrm{smooth}}$ are defined below.

\paragraph{Robustness regularizer.}
  Following TRADES~\citep{zhang2019theoretically}, we encourage consistent predictions
  between clean and adversarial inputs:
  \[
  R_{\mathrm{rob}}(\theta;x,y) = D_{\mathrm{KL}}\big(p_\theta(x) \,\|\, p_\theta(x_{\mathrm{adv}})\big),
  \]
  where $x_{\mathrm{adv}} \in \arg\max_{\|x'-x\|\le\varepsilon} \ell_{\mathrm{CE}}(\theta;x',y)$.
  Combined with cross-entropy at $x$, this raises the margin at the worst point in the
  $\varepsilon$-neighborhood, directly targeting $m_{\varepsilon,\theta}(x,y)$.

  \paragraph{Smoothness regularizer.}
  By Lemma~\ref{lem:robust_exp_compare}, if the margin has local Lipschitz constant $L_m(x,y)$, then
  \begin{equation}\label{eq:margin-lipschitz}
  m_{\varepsilon,\theta}(x,y) \ge m_\theta(x,y) - \varepsilon L_m(x,y),
  \end{equation}
  implying $e^{-m_{\varepsilon,\theta}} \le e^{-m_\theta} \cdot e^{\varepsilon L_m}$.
  When $L_m$ is large, this bound becomes vacuous even for large clean margins.
  To prevent this, we penalize:
  \[
  R_{\mathrm{smooth}}(\theta;x,y) = \|\nabla_x m_\theta(x,y)\|_2^2.
  \]
  For neural networks with Lipschitz activations, $m_\theta$ is locally Lipschitz
  with $L_m(x,y) = \|\nabla_x m_\theta(x,y)\|$ almost everywhere.
  This keeps $m_\theta - \varepsilon L_m$ close to $m_\theta$, ensuring large clean
  margins translate to large robust margins.

\paragraph{Why naive margin maximization fails.}
As training progresses, clean margins $m_\theta(x_i, y_i)$ on training points grow, $Q_{\mathrm{train}} \to 0$, and training ECE vanishes. Yet test ECE remains high.
The issue is that large clean margins need not imply large \emph{robust} margins: a model can achieve $m_\theta(x,y) \gg 0$ while the margin collapses at $x + \delta$ for small perturbations. The robust margin $m_{\varepsilon,\theta}(x,y) = \inf_{\|\delta\|\le\varepsilon} m_\theta(x+\delta,y)$ captures this distinction. If robust margins at training points are large, nearby test points---which lie within a $\varepsilon$-neighborhood under typical data distributions---inherit reasonable margins. Fragile margins, large only at training locations, provide no such transfer.

\subsection{Empirical Results with CalMO}
\label{sec:results}

\paragraph{Setup.}
We run a \emph{controlled fixed-budget experiment}: within each optimizer, only the loss function varies; all other training choices are held fixed. Concretely, we train ResNet-20~\citep{he2016deep} on CIFAR-10.
We compare four gradient-based optimizers spanning different training dynamics: SGD, AdamW, Muon, and SAM. Learning rates are tuned per optimizer: $\eta = 0.01$ for SGD, Muon and SAM, $\eta = 0.001$ for AdamW. All runs use batch size 128 for 10,000 steps. We compare standard cross-entropy against CalMO with hyperparameters $\lambda_r = 0.5$ (robustness) and $\lambda_s = 0.01$ (flatness). The adversarial perturbation radius is $\varepsilon = 8/255$, computed via 3-step PGD. Adversarial examples are initialized at the clean input $x$ (no random start) and updated with step size $\alpha = 2/255$. We report test accuracy and ECE computed with 15 bins. 

These values are fixed without tuning; our aim is to validate that the robust-margin regularization motivated by Section~\ref{sec:math} yields calibration gains over CE, and to demonstrate that training-time intervention on the loss landscape can improve calibration without relying on post-hoc corrections. We report test values in Table~\ref{tab:ablation}; training values are reported in Table~\ref{tab:train_test_gap} in the Appendix.

\paragraph{Performance.}
CalMO lowers test ECE across all four optimizers (Table~\ref{tab:ablation}), with reductions ranging from $0.003$ (SAM: $0.020 \to 0.017$) to $0.046$ (Muon: $0.065 \to 0.019$); test accuracy is preserved or improves in every case, by between $+0.2$ and $+4.9$ points, consistent with regularization helping under the fixed 10k-step budget. The Muon gap is the sharpest illustration of the mechanism of Section~\ref{sec:math}: directional optimization drives training margins to extreme values that collapse under input perturbation (fragile margins), keeping test ECE high, and CalMO's robust-margin term directly targets this tail.

    \begin{table}[t]
    \centering
    \footnotesize
    \begin{tabular}{@{}l@{\hspace{4pt}}l@{\hspace{6pt}}c@{\hspace{6pt}}c@{}}
      \toprule
      \textbf{Optimizer} & \textbf{Method} & \textbf{Acc (\%)} & \textbf{ECE} $\downarrow$ \\
      \midrule
      \multirow{4}{*}{SGD}
         & CE & $75.2 \pm 1.2$ & $0.081 \pm 0.021$ \\
         & \cellcolor{ourscolor}Flat.\ ($\lambda_r{=}0$) & \cellcolor{ourscolor}$\mathbf{80.8 \pm 0.1}$ & \cellcolor{ourscolor}$\mathbf{0.053 \pm 0.003}$ \\
         & \cellcolor{ourscolor}Rob.\ ($\lambda_s{=}0$) & \cellcolor{ourscolor}$78.2 \pm 0.9$ & \cellcolor{ourscolor}$0.062 \pm 0.011$ \\
         & \cellcolor{ourscolor}CalMO & \cellcolor{ourscolor}$80.1 \pm 1.4$ & \cellcolor{ourscolor}$0.056 \pm 0.001$ \\
      \midrule
      \multirow{4}{*}{AdamW}
         & CE & $80.7 \pm 0.4$ & $0.061 \pm 0.007$ \\
         & \cellcolor{ourscolor}Flat.\ ($\lambda_r{=}0$) & \cellcolor{ourscolor}$\mathbf{83.3 \pm 0.9}$ & \cellcolor{ourscolor}$0.047 \pm 0.002$ \\
         & \cellcolor{ourscolor}Rob.\ ($\lambda_s{=}0$) & \cellcolor{ourscolor}$82.4 \pm 0.2$ & \cellcolor{ourscolor}$\mathbf{0.039 \pm 0.009}$ \\
         & \cellcolor{ourscolor}CalMO & \cellcolor{ourscolor}$83.2 \pm 1.1$ & \cellcolor{ourscolor}$0.045 \pm 0.007$ \\
      \midrule
      \multirow{4}{*}{Muon}
         & CE & $80.3 \pm 0.3$ & $0.065 \pm 0.016$ \\
         & \cellcolor{ourscolor}Flat.\ ($\lambda_r{=}0$) & \cellcolor{ourscolor}$81.0 \pm 0.2$ & \cellcolor{ourscolor}$0.046 \pm 0.012$ \\
         & \cellcolor{ourscolor}Rob.\ ($\lambda_s{=}0$) & \cellcolor{ourscolor}$\mathbf{81.9 \pm 0.4}$ & \cellcolor{ourscolor}$0.052 \pm 0.010$ \\
         & \cellcolor{ourscolor}CalMO & \cellcolor{ourscolor}$81.7 \pm 0.9$ & \cellcolor{ourscolor}$\mathbf{0.019 \pm 0.002}$ \\
      \midrule
      \multirow{4}{*}{SAM}
         & CE & $85.0 \pm 0.3$ & $0.020 \pm 0.005$ \\
         & \cellcolor{ourscolor}Flat.\ ($\lambda_r{=}0$) & \cellcolor{ourscolor}$84.1 \pm 0.3$ & \cellcolor{ourscolor}$\mathbf{0.016 \pm 0.004}$ \\
         & \cellcolor{ourscolor}Rob.\ ($\lambda_s{=}0$) & \cellcolor{ourscolor}$\mathbf{85.5 \pm 0.4}$ & \cellcolor{ourscolor}$0.021 \pm 0.006$ \\
         & \cellcolor{ourscolor}CalMO & \cellcolor{ourscolor}$85.2 \pm 0.4$ & \cellcolor{ourscolor}$0.017 \pm 0.005$ \\
      \bottomrule
    \end{tabular}
    \vspace{0.2cm}
    \caption{CalMO vs CE. ResNet-20 on CIFAR-10, test set.
    Flat.: flatness only ($\lambda_r{=}0, \lambda_s{=}0.01$); Rob.: robustness only ($\lambda_r{=}0.5, \lambda_s{=}0$); CalMO: $\lambda_r{=}0.5, \lambda_s{=}0.01$. The relative importance of each term varies with optimizer, but CalMO strikes a balance across all settings.}
    \label{tab:ablation}
  \end{table}

\paragraph{Computational cost.}
CalMO incurs additional cost from the robustness and flatness terms. The robustness term requires a 3-step PGD attack per iteration; however, these compute gradients with respect to the input rather than the model parameters, making each PGD step relatively cheap. The flatness term adds one forward pass and one input-gradient computation. This cost can be reduced by using only the flatness term ($\lambda_r = 0$). All methods are trained for 10k steps, and we report performance at the best validation step. Matching compute by training CE longer does not close the gap: CE reaches its validation optimum well before 10k steps, after which performance degrades.

\paragraph{Ablation.}
To isolate the roles of robustness and flatness in shaping calibration, we ablate the CalMO objective into its components and evaluate their effects across optimizers. The results reveal a consistent but optimizer-dependent pattern. 
For SGD, enforcing flatness alone yields most of the calibration gains, while robustness provides limited benefit. 
For AdamW the two terms split roles: robustness yields the largest ECE reduction, while flatness yields the largest accuracy gain, suggesting that its adaptive preconditioning stabilizes parameter-space geometry along the directions relevant for accuracy, while leaving calibration sensitive
to robust-margin control. 
Muon exhibits a different behavior: neither robustness nor flatness alone is sufficient to substantially reduce ECE, but their combination leads to the greatest calibration. This supports the theoretical mechanism developed in Section~\ref{sec:math}, where calibration is governed jointly by the robust-margin tail and geometric sensitivity. From this perspective, CalMO should not be viewed as a universally superior training objective, but rather as a targeted intervention that controls these two terms. 

%% file: sections/conclusion.tex
\section{Conclusion}

We studied calibration as a \emph{training-time phenomenon} rather than a static property of a converged model, and showed that calibration and sharpness are tightly coupled along the optimization trajectory across multiple optimizers. This coupling arises from a shared dependence on margin growth, explaining both the temporal co-evolution of Expected Calibration Error and curvature during training, as well as the frequent divergence between train and test calibration in overlap-dominated regimes.

Building on this perspective, we distinguished between two competing mechanisms for improving calibration: convergence to flat minima and suppression of updates along high-curvature directions. Empirically, optimizers that implement directional control, such as Muon and BulkSGD, yielded consistently better in-sample calibration than methods targeting flat minima alone. Guided by the margin-based bounds, we proposed CalMO, a training objective that jointly regulates robust margins and local smoothness, and showed that their combined regulation can substantially improve out-of-sample calibration---most notably for directionally amplified optimizers such as Muon---while preserving predictive accuracy.

\paragraph{Limitations.}
Our empirical study relies on explicit curvature diagnostics (Gauss--Newton / Hessian-based measurements), which are computationally expensive and constrain the scale of architectures and datasets we can probe. Our theoretical results identify a margin-tail mediator that upper-bounds both calibration error and curvature under a Jacobian-control assumption; we do not claim that these certificates are tight or that the Jacobian bounds hold uniformly in all deep networks. A natural next step is to develop scalable, distributionally robust proxies for the mediator (for example, low-rank spectral estimators, mini-batch surrogates, or input-space stability measurements) and to characterize when they preserve the qualitative regime predictions we derive.

Our experiments also cover small-scale image classification only; extending the trajectory-level analysis to language-model training, where calibration is a central open question, is a natural follow-up. More broadly, we see the trajectory perspective itself---tracking how calibration, curvature, and margins co-evolve rather than inspecting them only at convergence---as a lens applicable to other training-time phenomena in deep networks.

% \section*{Impact Statement}
% % Authors are required to include a statement of the potential broader impact of their work.
% This paper studies the training-time relationship between loss-landscape geometry, margins, and calibration in neural networks. The contributions are theoretical and diagnostic in nature, focusing on understanding how curvature-related quantities evolve along optimization trajectories and how they relate to probabilistic confidence estimates.

% The work does not propose deployment-ready systems or application-specific methods, and therefore has no immediate or direct societal impact. Any downstream effects would arise only through subsequent use of these theoretical insights in applied settings, which are beyond the scope of this paper.

%% file: sections/appendix/further_related_work.tex
\section{Further Related Work}
\label{appendix:further_related_work}

\subsection{Mitigating Miscalibration: Extended Discussion}

\paragraph{Post-hoc calibration.}
Post-hoc methods learn a mapping from model scores to probabilities on a held-out set. Classical approaches include Platt scaling and isotonic/binning methods~\citep{platt2000probabilistic,Zadrozny2002}, with temperature scaling the de facto recipe for deep networks~\citep{guo2017calibration}. More expressive calibrators---such as Dirichlet calibration and compositional strategies like Mix-n-Match---correct class- or confidence-dependent distortions while preserving accuracy~\citep{kull2019dirichlet,zhang2020mixnmatch}. Because post-hoc methods do not influence training dynamics, they provide limited mechanistic insight and can degrade under distribution shift~\citep{Ovadia2019}.

\paragraph{Intrinsic methods.}
Intrinsic methods incorporate calibration objectives directly into training. These include entropy-based regularization~\citep{pereyra2017regularizing}, label smoothing~\citep{muller2019label}, augmentation schemes such as mixup that soften targets~\citep{thulasidasan2019mixup}, and focal loss---originally introduced for class imbalance---which yields better calibrated classifiers even before post-hoc scaling~\citep{lin2017focal,mukhoti2020calibrating}. Differentiable surrogates of calibration metrics further enable joint training for accuracy and calibration~\citep{kumar2018trainable,bohdal2023metacalibration}.

\paragraph{High-dimensional perspectives on miscalibration.}
Recent theoretical work highlights that miscalibration can arise intrinsically from high-dimensional statistical effects, even in well-specified problems. \citet{li2025optimal} analyze confidence estimates in high-dimensional classification and show that predictive probabilities can systematically deviate from true correctness likelihoods due to margin concentration and estimation noise, suggesting that miscalibration need not stem from optimization failures alone. Our perspective is complementary: rather than asymptotic statistical limits, we study the finite-sample training-time evolution of margins and curvature.

\subsection{Robust Margins, Sharpness, and Calibration}

\paragraph{Robust margins and calibration.}
Cross-entropy training pushes predictions toward extreme softmax outputs. On linearly separable data, gradient descent drives margins to infinity, converging to a max-margin classifier~\citep{Soudry2018}. While large margins aid classification, they also amplify overconfidence: \citet{qin2021improving} showed that inputs with small robust margin are more likely to be miscalibrated, and proposed adaptive label smoothing on such points. Focal loss~\citep{mukhoti2020calibrating} and label smoothing~\citep{muller2019label} can likewise curb overconfidence on hard examples. \citet{foret2021sam}'s SAM, which biases optimization toward flatter minima, has been observed to lower calibration error~\citep{Zheng2021,Mollenhoff2023}. These results share a common theme: controlling the growth or fragility of margins tends to improve calibration. Achieving both robustness and calibration is nevertheless non-trivial---standard adversarial training can degrade calibration without targeted interventions~\citep{Stutz2020CCAt}.

\paragraph{Flat minima and margins.}
Loss-landscape geometry has long been linked to generalization, with flat minima hypothesized to be preferable to sharp ones~\citep{hochreiter1997flatminima,keskar2016largebatch}. \citet{Dinh2017} complicated this picture: scaling symmetries in deep networks allow arbitrarily sharp solutions with identical outputs, motivating scale-invariant sharpness measures~\citep{Tsuzuku2019,Liang2019}. Under such measures, flatter minima correlate with better generalization in CE-trained models~\citep{Jiang2019,Maddox2020}. A structural correlate is the classification margin: flat basins of the CE loss align with large training margins~\citep{Lengyel2021,Jiang2019}. Small-batch SGD, which implicitly enlarges margins~\citep{Hoffer2017}, also finds flatter solutions than large-batch training~\citep{keskar2016largebatch}. Adversarially robust models---which have larger input margins---exhibit lower curvature in weight space~\citep{Stutz2021}; conversely, weight-space flatness regularizers such as entropy-SGD~\citep{Chaudhari2019entropy} improve adversarial robustness as a side effect~\citep{Stutz2021}.

\paragraph{Linear vs.~non-linear caveats.}
In linear models trained with cross-entropy, the notion of ``flat vs.~sharp'' is less meaningful: on separable data the weight norm grows without bound as margins maximize, driving the Hessian to zero while the classifier becomes arbitrarily confident. Meaningful cross-setting flatness comparisons therefore require correcting for reparameterization invariances~\citep{Dinh2017,Neyshabur2017pac,Tsuzuku2019}. In deep non-linear networks with such corrections, large margins correspond to flatter minima~\citep{Lengyel2021}. This distinction explains why margin-based analyses~\citep{Bartlett2017,Nagarajan2019} are often preferred for theoretical guarantees in linear settings.

\subsection{Positioning Relative to Prior Work}

The four literatures above---miscalibration under cross-entropy, sharpness/flatness and optimization stability, margin maximization via implicit bias, and robustness--calibration connections---together with recent work on calibration benefits of sharpness-aware optimizers~\citep{tan2025samcalibration}, provide the backdrop for our contribution.

Our work extends these literatures in three directions:

\begin{itemize}[leftmargin=1em,itemsep=0.25em,topsep=0.25em,parsep=0pt]
    \item \textbf{Trajectory-level analysis.} Most prior work compares final solutions; we track calibration and curvature \emph{pathwise} across training and show that they co-evolve, peaking together near the edge of stability and decaying together.
    \item \textbf{A shared margin-tail mediator.} We prove that a single exponential margin moment and its robust variant simultaneously upper-bound ECE and Gauss--Newton sharpness, with a two-sided sandwich for ECE in the interpolating regime. The triangulation ECE\,$\leftrightarrow$\,GN sharpness\,$\leftrightarrow$\,robust margins is, to our knowledge, new.
    \item \textbf{Directional vs.\ flat-minima interventions.} We distinguish optimizers that bias toward flat minima (SAM) from those that suppress steep descent directions along the trajectory (Muon, BulkSGD), and show empirically that the latter yield more reliable in-sample calibration gains. This motivates CalMO as a principled intervention on the mediator.
\end{itemize}

%% file: sections/appendix/logistic_proof.tex
\section{Proofs for Section~\ref{sec:math}}
\label{app:math_proofs}

\subsection{Definitions and basic reductions}
\label{app:ece_defs}

\paragraph{Fixed binning.}
Fix $M\in\mathbb{N}$ and deterministic bin edges $0=a_0<a_1<\cdots<a_M=1$.
Define bins $I_m:=(a_{m-1},a_m]$ for $m=1,\dots,M$.

\paragraph{Population ECE with fixed bins.}
Let $(X,Y)\sim\pi$ with $Y\in\{1,\dots,K\}$.
For fixed $\theta$, define the predicted label
\[
\widehat{Y}:=\arg\max_{k} z_\theta(X)_k
\]
(using the deterministic tie-break rule from the main text),
and the confidence
\[
\widehat{P}:=\max_k p_\theta(X)_k = p_\theta(X)_{\widehat{Y}}.
\]
Let $B_m:=\{\widehat{P}\in I_m\}$. Define binwise accuracy and confidence by
\[
\mathrm{acc}(B_m):=
\begin{cases}
\mathbb{P}(\widehat{Y}=Y\mid B_m), & \mathbb{P}(B_m)>0,\\
0,& \mathbb{P}(B_m)=0,
\end{cases}
\qquad
\mathrm{conf}(B_m):=
\begin{cases}
\mathbb{E}[\widehat{P}\mid B_m], & \mathbb{P}(B_m)>0,\\
0,& \mathbb{P}(B_m)=0.
\end{cases}
\]
The population binned calibration error is
\[
\mathrm{ECE}_M(\theta;\pi)
:=\sum_{m=1}^M \mathbb{P}(B_m)\,|\mathrm{acc}(B_m)-\mathrm{conf}(B_m)|.
\]
Equivalently, with $Z:=\mathbf{1}\{\widehat{Y}=Y\}-\widehat{P}$,
\[
\mathrm{ECE}_M(\theta;\pi)
=\sum_{m=1}^M \mathbb{P}(B_m)\,\big|\mathbb{E}[Z\mid B_m]\big|.
\]

\paragraph{Empirical ECE.}
Given a dataset $\mathcal{D}=\{(x_i,y_i)\}_{i=1}^n$, define
\[
\widehat{Y}_i:=\arg\max_k z_\theta(x_i)_k
\quad\text{(same deterministic tie-break)},\qquad
\widehat{P}_i:=\max_k p_\theta(x_i)_k,
\]
and bins $B_m:=\{i:\widehat{P}_i\in I_m\}$.
Let $Z_i:=\mathbf{1}\{\widehat{Y}_i=y_i\}-\widehat{P}_i$.
Then
\[
\mathrm{ECE}_M(\theta;\mathcal{D})
:=\sum_{m=1}^M \frac{|B_m|}{n}\,
\left|\frac{1}{|B_m|}\sum_{i\in B_m} Z_i\right|,
\]
with the convention that the inner average is $0$ when $|B_m|=0$.

% \newpage

\subsection{Core lemmas}
\label{app:core_lemmas}

\begin{lemma}[ECE is bounded by the mean absolute correctness--confidence gap]
\label{lem:ece_by_abs_gap}
\textbf{(Population).}
Let $Z:=\mathbf{1}\{\widehat{Y}=Y\}-\widehat{P}$ and let
$\mathcal{G}:=\sigma(B_1,\dots,B_M)$ be the $\sigma$-algebra generated by the bin events
$B_m:=\{\widehat{P}\in I_m\}$.
Note that $Z\in[-1,1]$, hence $Z$ is integrable. Then
\[
\mathrm{ECE}_M(\theta;\pi)
=\mathbb{E}\Big[\,\big|\,\mathbb{E}[Z\mid \mathcal{G}]\,\big|\,\Big]
\;\le\;\mathbb{E}[|Z|].
\]

\textbf{(Empirical).}
For any dataset $\mathcal{D}$,
\[
\mathrm{ECE}_M(\theta;\mathcal{D})
\;\le\;
\frac{1}{n}\sum_{i=1}^n \big|\mathbf{1}\{\widehat{Y}_i=y_i\}-\widehat{P}_i\big|.
\]
\end{lemma}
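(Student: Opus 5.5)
The approach is to identify the bin-weighted sum with an expectation of a conditional expectation, and then apply conditional Jensen. For the population statement, first note that the events $B_1,\dots,B_M$ partition the sample space. The bins $I_m=(a_{m-1},a_m]$ partition $(0,1]$, and $\widehat{P}\in[1/K,1]\subset(0,1]$. Hence $\mathcal{G}=\sigma(B_1,\dots,B_M)$ is the finite $\sigma$-algebra generated by this partition. A version of the conditional expectation is therefore
\[
\mathbb{E}[Z\mid\mathcal{G}]=\sum_{m:\,\mathbb{P}(B_m)>0}\mathbf{1}_{B_m}\,\mathbb{E}[Z\mid B_m],
\]
where $\mathbb{E}[Z\mid B_m]=\mathbb{E}[Z\mathbf{1}_{B_m}]/\mathbb{P}(B_m)$. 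Bins of probability zero can be assigned any value, since they are null sets. Taking absolute values and then expectations gives
\[
\mathbb{E}\big[|\mathbb{E}[Z\mid\mathcal{G}]|\big]=\sum_m \mathbb{P}(B_m)\,|\mathbb{E}[Z\mid B_m]|.
\]
Because $\mathbb{E}[Z\mid B_m]=\mathrm{acc}(B_m)-\mathrm{conf}(B_m)$, with both sides equal to zero by convention when $\mathbb{P}(B_m)=0$, this sum is exactly $\mathrm{ECE}_M(\theta;\pi)$. That establishes the equality.

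For the inequality, apply conditional Jensen with the convex map $|\cdot|$, which gives $|\mathbb{E}[Z\mid\mathcal{G}]|\le\mathbb{E}[|Z|\mid\mathcal{G}]$ almost surely. Taking expectations and using the tower property yields $\mathbb{E}[|Z|]$. Integrability holds because $Z\in[-1,1]$.

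A more elementary alternative avoids conditional expectations altogether. On each bin, apply the triangle inequality $|\mathbb{E}[Z\mathbf{1}_{B_m}]|\le\mathbb{E}[|Z|\mathbf{1}_{B_m}]$, then sum over $m$, using that the $B_m$ are disjoint and cover the whole space.

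The empirical statement is the same argument with the uniform measure on the $n$ sample points. For nonempty bins,
\[
\frac{|B_m|}{n}\Big|\frac{1}{|B_m|}\sum_{i\in B_m}Z_i\Big|=\frac1n\Big|\sum_{i\in B_m}Z_i\Big|\le\frac1n\sum_{i\in B_m}|Z_i|,
\]
and empty bins contribute zero. Summing over the disjoint bins, which cover all indices since each $\widehat{P}_i\in(0,1]$, gives the claimed bound.

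There is no real obstacle here. The only point needing care is the bookkeeping for zero-probability or empty bins, and checking that every confidence lands in exactly one bin. The latter holds because $\widehat{P}\ge 1/K>0=a_0$.
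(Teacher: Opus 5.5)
Your proposal is correct and follows essentially the same route as the paper: you identify the bin-weighted sum with $\mathbb{E}[\,|\mathbb{E}[Z\mid\mathcal{G}]|\,]$ and apply conditional Jensen plus the tower property for the population bound, and you use the per-bin triangle inequality for the empirical bound. Your explicit check that the bins partition the space (via $\widehat{P}\ge 1/K>0$) is a detail the paper leaves implicit, and it is indeed needed for that identity.
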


\begin{proof}
\textbf{Population.}
For each bin $B_m$ with $\mathbb{P}(B_m)>0$,
\[
\mathbb{E}[Z\mid B_m]
=
\mathbb{P}(\widehat{Y}=Y\mid B_m)-\mathbb{E}[\widehat{P}\mid B_m]
=
\mathrm{acc}(B_m)-\mathrm{conf}(B_m),
\]
and we set $\mathbb{E}[Z\mid B_m]:=0$ when $\mathbb{P}(B_m)=0$.
Therefore,
\[
\mathrm{ECE}_M(\theta;\pi)
=\sum_{m=1}^M \mathbb{P}(B_m)\,\big|\mathbb{E}[Z\mid B_m]\big|
=\mathbb{E}\Big[\,\big|\,\mathbb{E}[Z\mid \mathcal{G}]\,\big|\,\Big].
\]
By Jensen's inequality for the convex function $u\mapsto |u|$,
\[
\mathbb{E}\Big[\,\big|\,\mathbb{E}[Z\mid \mathcal{G}]\,\big|\,\Big]
\le
\mathbb{E}\big[\mathbb{E}[|Z|\mid \mathcal{G}]\big]
=
\mathbb{E}[|Z|].
\]

\textbf{Empirical.}
For each bin $B_m$ with $|B_m|>0$, let $Z_i:=\mathbf{1}\{\widehat{Y}_i=y_i\}-\widehat{P}_i$.
Then
\[
\big|\mathrm{acc}(B_m)-\mathrm{conf}(B_m)\big|
=
\left|\frac{1}{|B_m|}\sum_{i\in B_m} Z_i\right|
\le
\frac{1}{|B_m|}\sum_{i\in B_m} |Z_i|
\]
by the triangle inequality. Multiplying by $|B_m|/n$ and summing over $m$ yields the claim.
\end{proof}

\begin{lemma}[Correctness--confidence gap is controlled by the true-class probability]
\label{lem:gap_by_trueprob}
For any $(x,y)$ and $\theta$, where $\widehat{y}(x)$ and $\widehat{P}(x)$ are defined as above,
\[
\big|\mathbf{1}\{\widehat{y}(x)=y\}-\widehat{P}(x)\big|
\;\le\;
1-p_\theta(x)_y.
\]
\end{lemma}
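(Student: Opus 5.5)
The plan is a case split on whether the prediction $\widehat{y}(x)$ is correct, writing $p:=p_\theta(x)$ and using only that $p$ lies in the probability simplex and that $\widehat{P}(x)=\max_k p_k=p_{\widehat{y}(x)}$. The latter identity holds because softmax is strictly increasing in each logit, so the argmax of the logits (with the deterministic tie-break) is also an argmax of the probabilities.

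\emph{Case 1: $\widehat{y}(x)=y$.} The indicator equals $1$, and $\widehat{P}(x)=p_y$. The gap is therefore $|1-p_y|=1-p_y$, since $p_y\le 1$. The bound holds with equality.

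\emph{Case 2: $\widehat{y}(x)\neq y$.} The indicator is $0$, so the gap is $\widehat{P}(x)=p_{\widehat{y}(x)}$. We need $p_{\widehat{y}(x)}\le 1-p_y$. Since $\widehat{y}(x)$ and $y$ are distinct indices and the entries of $p$ are nonnegative and sum to one,
\[
p_{\widehat{y}(x)}+p_y\;\le\;\sum_{k=1}^K p_k\;=\;1,
\]
which rearranges to the claim.

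Neither step is a real obstacle. The only point needing care is the identification $\widehat{P}(x)=p_{\widehat{y}(x)}$ under ties in Case 2. Even if the tie-break chose a different maximizer of $p$, that index is still distinct from $y$ whenever the prediction is wrong, so the simplex argument applies unchanged. This lemma, combined with Lemma~\ref{lem:ece_by_abs_gap}, then reduces the ECE bounds to controlling $\mathbb{E}[1-p_\theta(X)_Y]$. That quantity is in turn at most $(K-1)e^{-m_\theta}$ via the softmax bound $1-p_y=\sum_{j\ne y}p_j\le\sum_{j\ne y}e^{z_j-z_y}$.
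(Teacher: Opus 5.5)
Your proof is correct and follows the paper's argument: the same case split on whether $\widehat{y}(x)=y$, with equality when the prediction is correct and $p_{\widehat{y}(x)}\le\sum_{j\neq y}p_j=1-p_y$ when it is wrong. Your remark that softmax preserves the ordering of the logits, so that $\widehat{P}(x)=p_{\widehat{y}(x)}$ even under ties, is a harmless clarification that the paper takes for granted.
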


\begin{proof}
Let $\widehat{y}=\widehat{y}(x)$ and $\widehat{P}=\widehat{P}(x)=\max_k p_\theta(x)_k=p_\theta(x)_{\widehat{y}}$.
If $\widehat{y}=y$, then $|\mathbf{1}\{\widehat{y}=y\}-\widehat{P}|=|1-p_y|=1-p_y$.
If $\widehat{y}\neq y$, then $|\mathbf{1}\{\widehat{y}=y\}-\widehat{P}|=\widehat{P}=p_{\widehat{y}}
\le \sum_{j\neq y} p_j = 1-p_y$,
since $p_{\widehat{y}}$ is one of the nonnegative summands in $\sum_{j\neq y}p_j$.
\end{proof}

\begin{lemma}[Softmax tail bound: $1-p_y$ is exponentially controlled by the true margin]
\label{lem:tail_by_margin}
Let $p=\mathrm{softmax}(z)\in\Delta^{K-1}$ and fix a label $y$.
Define the true margin $m:=z_y-\max_{j\neq y} z_j$.
Then
\begin{equation}
1-p_y \;\le\; \sum_{j\neq y} e^{z_j-z_y}\;\le\;(K-1)e^{-m}.
\label{eq:tail_upper}
\end{equation}
Moreover,
\begin{equation}
\frac{e^{-m}}{1+(K-1)e^{-m}}
\;\le\;
1-p_y.
\label{eq:tail_lower}
\end{equation}
In particular, if $m\ge 0$ then
\begin{equation}
\frac{1}{K}e^{-m}\;\le\;1-p_y\;\le\;(K-1)e^{-m}.
\label{eq:tail_sandwich_mpos}
\end{equation}
\end{lemma}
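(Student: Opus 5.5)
The plan is to work directly from the explicit softmax formula
\[
1-p_y=\frac{\sum_{j\neq y}e^{z_j}}{e^{z_y}+\sum_{j\neq y}e^{z_j}}=\frac{S}{1+S},
\qquad
S:=\sum_{j\neq y}e^{z_j-z_y}.
\]
The second expression comes from dividing numerator and denominator by $e^{z_y}$. All three claims then reduce to elementary bounds on $S$ and on the map $s\mapsto s/(1+s)$. We do not need the earlier lemmas; this is a self-contained computation on a single logit vector.

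\textbf{Upper bound.} Since $S\ge 0$, we have $1-p_y=S/(1+S)\le S$, which is the first inequality in \eqref{eq:tail_upper}. By the definition of $m$, every $j\neq y$ satisfies $z_j-z_y\le -m$. Hence each of the $K-1$ summands of $S$ is at most $e^{-m}$, so $S\le (K-1)e^{-m}$. This gives the second inequality.

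\textbf{Lower bound.} Let $j^\star\in\arg\max_{j\neq y}z_j$. Keeping only the $j^\star$ term gives $S\ge e^{z_{j^\star}-z_y}=e^{-m}$. The map $s\mapsto s/(1+s)=1-1/(1+s)$ is increasing on $[0,\infty)$. Combining this with $e^{-m}\le S\le (K-1)e^{-m}$, I would write
\[
1-p_y=\frac{S}{1+S}\ \ge\ \frac{e^{-m}}{1+S}\ \ge\ \frac{e^{-m}}{1+(K-1)e^{-m}}.
\]
Here the numerator is bounded below by $e^{-m}$ and the denominator is bounded above using the upper bound on $S$. This is exactly \eqref{eq:tail_lower}.

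\textbf{Sandwich for $m\ge 0$.} When $m\ge0$ we have $e^{-m}\le 1$, so $1+(K-1)e^{-m}\le K$. Substituting into \eqref{eq:tail_lower} gives $1-p_y\ge e^{-m}/K$. Pairing this with \eqref{eq:tail_upper} yields \eqref{eq:tail_sandwich_mpos}.

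The only delicate point is the lower bound. The numerator and denominator must be bounded separately, since monotonicity alone would only give $e^{-m}/(1+e^{-m})$; that is stronger, but it is not the stated form. Both are fine. There is no real obstacle beyond keeping track of which bound on $S$ is used where.
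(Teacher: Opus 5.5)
Your proof is correct and is essentially identical to the paper's. Both use the same reduction $1-p_y=S/(1+S)$ with $S=\sum_{j\neq y}e^{z_j-z_y}$, the bound $S/(1+S)\le S$ together with the termwise $e^{z_j-z_y}\le e^{-m}$ for the upper bound, and $S\ge e^{-m}$ in the numerator with $S\le (K-1)e^{-m}$ in the denominator for the lower bound. (Your closing remark overstates things slightly: the monotonicity bound $e^{-m}/(1+e^{-m})$ already implies the stated one for $K\ge 2$, so bounding numerator and denominator separately is a choice, not a necessity.)
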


\begin{proof}
Write
\[
p_y=\frac{e^{z_y}}{\sum_{k=1}^K e^{z_k}}
=\frac{1}{1+\sum_{j\neq y} e^{z_j-z_y}},
\qquad
1-p_y=\frac{\sum_{j\neq y} e^{z_j-z_y}}{1+\sum_{j\neq y} e^{z_j-z_y}}.
\]
Let $S:=\sum_{j\neq y} e^{z_j-z_y}\ge 0$. Then $1-p_y=S/(1+S)\le S$, proving the first inequality in~\eqref{eq:tail_upper}.
For each $j\neq y$, $z_j-z_y \le \max_{k\neq y} z_k - z_y = -m$, hence $e^{z_j-z_y}\le e^{-m}$ and $S\le (K-1)e^{-m}$,
proving the second inequality in~\eqref{eq:tail_upper}.
For~\eqref{eq:tail_lower}, pick $j^\star\in\arg\max_{j\neq y} z_j$ so that $z_{j^\star}-z_y=-m$ and hence $S\ge e^{-m}$.
Therefore
\[
1-p_y=\frac{S}{1+S}\ge \frac{e^{-m}}{1+S}\ge \frac{e^{-m}}{1+(K-1)e^{-m}},
\]
where the last step uses $S\le (K-1)e^{-m}$.
If $m\ge 0$ then $e^{-m}\le 1$ and thus $1+(K-1)e^{-m}\le 1+(K-1)=K$, giving~\eqref{eq:tail_sandwich_mpos}.
\end{proof}

\begin{lemma}[Cross-entropy logit Hessian top eigenvalue is controlled by $1-p_{\max}$]
\label{lem:hz_eig_upper}
Let $p\in\Delta^{K-1}$ and define $H_z(p):=\mathrm{diag}(p)-pp^\top$.
Then
\[
\lambda_{\max}\big(H_z(p)\big)\;\le\; 2\bigl(1-p_{\max}\bigr),
\qquad p_{\max}:=\max_k p_k.
\]
\end{lemma}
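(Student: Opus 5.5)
The plan is to bound the largest eigenvalue by the trace and then bound the trace directly in terms of $1-p_{\max}$. Since $H_z(p)=\mathrm{diag}(p)-pp^\top$ is positive semidefinite (it is the covariance matrix of a one-hot vector drawn from $p$), all its eigenvalues are nonnegative. Hence
\[
\lambda_{\max}\big(H_z(p)\big)\le \mathrm{tr}\,H_z(p)=\sum_k p_k-\sum_k p_k^2=1-\|p\|_2^2 .
\]
The remaining task is the elementary inequality $1-\|p\|_2^2\le 2(1-p_{\max})$.

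To prove it, let $k^\star$ attain $p_{\max}$. Drop all squares except the largest and use $\|p\|_2^2\ge p_{\max}^2$. This gives
\[
1-\|p\|_2^2\le 1-p_{\max}^2=(1-p_{\max})(1+p_{\max})\le 2(1-p_{\max}),
\]
because $p_{\max}\le 1$.

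An alternative route, useful as a sanity check, uses Gershgorin discs. Row $k$ has diagonal entry $p_k(1-p_k)$ and off-diagonal absolute row sum $p_k(1-p_k)$, so every eigenvalue is at most $\max_k 2p_k(1-p_k)$. This route needs $p_k(1-p_k)\le 1-p_{\max}$ for every $k$. For $k=k^\star$ this is immediate. For $k\neq k^\star$ it follows from $p_k\le 1-p_{\max}$ together with $1-p_k\le 1$. The trace route is cleaner, so I would present that one.

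No step is a real obstacle. The only point needing justification is positive semidefiniteness. It can be checked directly via $v^\top H_z v=\sum_k p_k v_k^2-(\sum_k p_k v_k)^2\ge 0$, which is Jensen's inequality (the variance of $v_K$ with $K\sim p$ is nonnegative). This ensures the trace dominates the top eigenvalue.
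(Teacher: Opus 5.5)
Your proof is correct. The primary argument is a different route from the paper's. The paper's proof is exactly the Gershgorin argument you sketched as a sanity check: each row has diagonal entry $p_i(1-p_i)$ equal to its off-diagonal absolute row sum, so every eigenvalue lies in some interval $[0,2p_i(1-p_i)]$. The paper then uses the same two-case bound $p_i(1-p_i)\le 1-p_{\max}$ that you gave.

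The two routes trade off as follows.
\begin{itemize}
\item \emph{Positive semidefiniteness.} Your trace route needs it as a separate input, which you justify correctly via the variance identity. Gershgorin gives nonnegativity of the spectrum for free as the left endpoint of each disc.
\item \emph{Final step.} Your last step is a single inequality $\|p\|_2^2\ge p_{\max}^2$, where the paper needs a case split.
\item \emph{Intermediate quantities.} The Gershgorin quantity $\max_i 2p_i(1-p_i)$ is never larger than your $1-\|p\|_2^2$, because $(1-p_i)^2=\bigl(\sum_{j\neq i}p_j\bigr)^2\ge\sum_{j\neq i}p_j^2$ for every $i$. It can be much smaller away from one-hot predictions: for uniform $p$ it equals $2(K-1)/K^2$, while the trace is $(K-1)/K$.
\end{itemize}
Both intermediates are relaxed to the same $2(1-p_{\max})$, so this extra tightness plays no role in the lemma as stated.
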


\begin{proof}
We use Gershgorin's circle theorem for symmetric matrices.
Write $A:=H_z(p)$, so that for each $i$,
\[
A_{ii}=p_i(1-p_i),\qquad A_{ij}=-p_ip_j\quad(i\neq j).
\]
Let $R_i:=\sum_{j\neq i}|A_{ij}|=\sum_{j\neq i}p_ip_j=p_i(1-p_i)$.
Gershgorin implies every eigenvalue $\lambda$ of $A$ lies in at least one interval
\[
\lambda \in [A_{ii}-R_i,\,A_{ii}+R_i]=[0,\,2p_i(1-p_i)]\quad\text{for some }i.
\]
Hence
\[
\lambda_{\max}(A)\le \max_i 2p_i(1-p_i).
\]
Now fix $k^\star\in\arg\max_k p_k$ so that $p_{k^\star}=p_{\max}$.
If $i=k^\star$, then $p_i(1-p_i)=p_{\max}(1-p_{\max})\le 1-p_{\max}$.
If $i\neq k^\star$, then $p_i\le 1-p_{\max}$ and $p_i(1-p_i)\le p_i\le 1-p_{\max}$.
Therefore $\max_i p_i(1-p_i)\le 1-p_{\max}$ and consequently
\[
\lambda_{\max}(H_z(p))\le 2(1-p_{\max}),
\]
as claimed.
\end{proof}

\begin{lemma}[Robust margin comparisons (trivial upper bound; Lipschitz lower bound)]
\label{lem:robust_exp_compare}
For all $(x,y)$,
\begin{equation}
\label{eq:robust_trivial_upper}
m_{\varepsilon,\theta}(x,y)\le m_\theta(x,y)
\qquad\Longrightarrow\qquad
e^{-m_\theta(x,y)}\le e^{-m_{\varepsilon,\theta}(x,y)}.
\end{equation}
If moreover there exists $L_m(x,y)\in[0,\infty)$ such that
\[
|m_\theta(x+\delta,y)-m_\theta(x,y)|\le L_m(x,y)\,\|\delta\|
\qquad\forall\,\|\delta\|\le \varepsilon,
\]
then
\begin{equation}
\label{eq:robust_Lip_lower}
m_{\varepsilon,\theta}(x,y)\ge m_\theta(x,y)-\varepsilon L_m(x,y)
\qquad\Longrightarrow\qquad
e^{-m_\theta(x,y)}\ge e^{-\varepsilon L_m(x,y)}\,e^{-m_{\varepsilon,\theta}(x,y)}.
\end{equation}
\end{lemma}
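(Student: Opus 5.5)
The trivial upper bound \eqref{eq:robust_trivial_upper} is immediate from the definition of the robust margin as an infimum over the ball $\{\|\delta\|\le\varepsilon\}$. This ball contains $\delta=0$, so
\[
m_{\varepsilon,\theta}(x,y)=\inf_{\|\delta\|\le\varepsilon} m_\theta(x+\delta,y)\le m_\theta(x,y).
\]
Because $u\mapsto e^{-u}$ is strictly decreasing on $\mathbb{R}$, applying it reverses the inequality and gives $e^{-m_\theta(x,y)}\le e^{-m_{\varepsilon,\theta}(x,y)}$. One caveat is that the infimum might be $-\infty$. In that case the exponential inequality holds with the convention $e^{+\infty}=+\infty$, so I will state this convention explicitly.

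For the Lipschitz lower bound \eqref{eq:robust_Lip_lower}, I will take any $\delta$ with $\|\delta\|\le\varepsilon$. The assumed local Lipschitz condition gives the one-sided estimate
\[
m_\theta(x+\delta,y)\ge m_\theta(x,y)-L_m(x,y)\|\delta\|\ge m_\theta(x,y)-\varepsilon L_m(x,y).
\]
The right-hand side does not depend on $\delta$, so it is a lower bound for every element of the set over which the infimum is taken. Hence it lower-bounds the infimum itself, which yields $m_{\varepsilon,\theta}(x,y)\ge m_\theta(x,y)-\varepsilon L_m(x,y)$. In particular, the robust margin is finite in this case.

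To obtain the exponential form, I will again apply the decreasing map $u\mapsto e^{-u}$. This gives $e^{-m_{\varepsilon,\theta}}\le e^{-m_\theta}e^{\varepsilon L_m}$. Multiplying both sides by the positive quantity $e^{-\varepsilon L_m}$ gives the stated form $e^{-m_\theta}\ge e^{-\varepsilon L_m}e^{-m_{\varepsilon,\theta}}$. There is no real obstacle in this argument. The only point needing care is to use the Lipschitz bound in the correct one-sided direction and to note that the bound is uniform in $\delta$, since that uniformity is what justifies passing to the infimum.
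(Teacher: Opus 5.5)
Your proof is correct and follows the paper's argument step for step. The first claim uses feasibility of $\delta=0$ together with monotonicity of $t\mapsto e^{-t}$; the second takes the one-sided Lipschitz bound, which is uniform in $\delta$, passes it to the infimum, and then exponentiates. Your remark about a possibly infinite infimum is a harmless addition that the paper's proof omits.
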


\begin{proof}
\textbf{Trivial robust-vs-clean comparison.}
By definition of the infimum and because $\delta=0$ is feasible, we have
\[
m_{\varepsilon,\theta}(x,y)=\inf_{\|\delta\|\le\varepsilon} m_\theta(x+\delta,y)\le m_\theta(x,y).
\]
Since the map $t\mapsto e^{-t}$ is \emph{decreasing}, this implies
\[
e^{-m_\theta(x,y)} \le e^{-m_{\varepsilon,\theta}(x,y)},
\]
which is~\eqref{eq:robust_trivial_upper}.

\textbf{Lipschitz lower bound.}
Assume the stated local Lipschitz condition at $(x,y)$. Then for any $\|\delta\|\le \varepsilon$,
\[
m_\theta(x+\delta,y)\ge m_\theta(x,y)-L_m(x,y)\,\|\delta\|
\ge m_\theta(x,y)-\varepsilon L_m(x,y).
\]
Taking the infimum over all $\|\delta\|\le\varepsilon$ yields
\[
m_{\varepsilon,\theta}(x,y)\ge m_\theta(x,y)-\varepsilon L_m(x,y),
\]
equivalently
\[
m_\theta(x,y)\le m_{\varepsilon,\theta}(x,y)+\varepsilon L_m(x,y).
\]
Multiply by $-1$ (which flips the inequality) to get
\[
-m_\theta(x,y)\ge -m_{\varepsilon,\theta}(x,y)-\varepsilon L_m(x,y),
\]
and exponentiate to obtain
\[
e^{-m_\theta(x,y)}\ge e^{-m_{\varepsilon,\theta}(x,y)}\,e^{-\varepsilon L_m(x,y)},
\]
which is~\eqref{eq:robust_Lip_lower}.
\end{proof}

% \newpage

\begin{remark}[Label-free GN bound via predicted margin]
\label{rem:label_free_gn_alt}
Because $H_z(p_\theta(X))$ depends only on $X$, one can avoid the label $Y$ in the GN bound.
Let $\widehat{y}(x)\in\arg\max_k z_\theta(x)_k$ (with the deterministic tie-break rule) and define the \emph{predicted margin}
\[
\widehat{m}_\theta(x):=z_\theta(x)_{\widehat{y}(x)}-\max_{j\neq \widehat{y}(x)} z_\theta(x)_j\;\ge 0.
\]
Applying Lemma~\ref{lem:tail_by_margin} with $y=\widehat{y}(x)$ yields
\[
1-p_{\max}(x)=1-p_\theta(x)_{\widehat{y}(x)}\le (K-1)e^{-\widehat{m}_\theta(x)}.
\]
Combining with Lemma~\ref{lem:hz_eig_upper} gives
\[
\lambda_{\max}\!\big(H_z(p_\theta(x))\big)
\le 2\bigl(1-p_{\max}(x)\bigr)
\le 2(K-1)e^{-\widehat{m}_\theta(x)}.
\]
cConsequently, under $\|J_\theta(X)\|_{\mathrm{op}}\le C_J$ $\pi$-a.s.,
\[
\lambda_{\max}\!\big(H_{\mathrm{GN}}(\theta;\pi)\big)
\le 2C_J^2(K-1)\,\mathbb{E}\big[e^{-\widehat{m}_\theta(X)}\big].
\]
This can be substantially tighter than bounds routing through $Y$ when the model is confidently incorrect.
\end{remark}

% \newpage

\subsection{Rigorous restatement of the main theorems}
\label{sec:theorems}

\paragraph{Notation alignment with Section~\ref{sec:math}.}
Fix a robust radius $\varepsilon>0$.
To match the main-text notation, we use
\[
Q(\theta;\pi)\;:=\;\mathbb{E}_{(X,Y)\sim\pi}\!\big[e^{-m_{\varepsilon,\theta}(X,Y)}\big]
\qquad\text{and}\qquad
Q_{\mathcal D}(\theta)\;:=\;\frac{1}{n}\sum_{i=1}^n e^{-m_\theta(x_i,y_i)}.
\]
When $\pi$ (or $\mathcal D$) is clear from context, we may drop it from the notation.
For comparison with alternative functionals used in some intermediate lemmas, note that
$Q(\theta;\pi)$ coincides with the quantity previously denoted $\Psi_\varepsilon^{0}(\theta;\pi)$,
and $Q_{\mathcal D}(\theta)$ coincides with the quantity previously denoted $\mu(\theta;\mathcal D)$.
If a pointwise margin Lipschitz constant $L_m(\cdot,\cdot)$ is available, we also define the (generally looser) population functional
\[
Q^{+}(\theta;\pi)\;:=\;\mathbb{E}_{(X,Y)\sim\pi}\!\big[e^{\varepsilon L_m(X,Y)}\,e^{-m_{\varepsilon,\theta}(X,Y)}\big],
\]
and the finite-sample robust moments
\[
Q^{0}_{\varepsilon,\mathcal D}(\theta)\;:=\;\frac{1}{n}\sum_{i=1}^n e^{-m_{\varepsilon,\theta}(x_i,y_i)},
\qquad
Q^{-}_{\varepsilon,\mathcal D}(\theta)\;:=\;\frac{1}{n}\sum_{i=1}^n e^{-\varepsilon L_m(x_i,y_i)}\,e^{-m_{\varepsilon,\theta}(x_i,y_i)}.
\]
The proofs appear in Subsections~\ref{app:proof_overlap} and~\ref{app:proof_separable}.

\medskip
\noindent\textbf{Theorem~\ref{thm:overlap_bounds}} \textit{(Overlap regime: simultaneous robust-margin upper bounds)}\textbf{.}\\
Let $\pi$ be any distribution on $\mathcal{X}\times\{1,\dots,K\}$ and let $\theta$ be any parameter vector.

\paragraph{(i) Calibration upper bound.}
For the population binned calibration error $\mathrm{ECE}_M(\theta;\pi)$,
\[
\mathrm{ECE}_M(\theta;\pi)
\;\le\;
(K-1)\,\mathbb{E}\big[e^{-m_\theta(X,Y)}\big]
\;\le\;
(K-1)\,Q(\theta;\pi).
\]
If $L_m$ is defined, then also $\mathrm{ECE}_M(\theta;\pi)\le (K-1)\,Q^{+}(\theta;\pi)$, but this is never tighter than the
$Q(\theta;\pi)$ bound since $Q^{+}(\theta;\pi)\ge Q(\theta;\pi)$.

\paragraph{(ii) Gauss--Newton curvature (top eigenvalue) upper bound.}
Assume additionally that the logit Jacobian is uniformly bounded in operator norm,
\[
\|J_\theta(X)\|_{\mathrm{op}}\le C_J
\qquad \text{$\pi$-a.s.}
\]
Then the population Gauss--Newton matrix
\[
H_{\mathrm{GN}}(\theta;\pi)
:=\mathbb{E}_{(X,Y)\sim\pi}\!\big[J_\theta(X)^\top H_z(p_\theta(X))J_\theta(X)\big]
\]
satisfies
\[
\lambda_{\max}\!\big(H_{\mathrm{GN}}(\theta;\pi)\big)
\;\le\;
2C_J^2\,(K-1)\,\mathbb{E}\big[e^{-m_\theta(X,Y)}\big]
\;\le\;
2C_J^2\,(K-1)\,Q(\theta;\pi).
\]
If $L_m$ is defined, then also
$\lambda_{\max}(H_{\mathrm{GN}}(\theta;\pi))\le 2C_J^2\,(K-1)\,Q^{+}(\theta;\pi)$, again a looser bound than the one via $Q(\theta;\pi)$.

\paragraph{(iii) What the bound can (and cannot) certify.}
If along a training trajectory $\{\theta_t\}$ the robust moment $Q(\theta_t;\pi)$ fails to converge to $0$,
then the bounds in (i)--(ii) do not certify that $\mathrm{ECE}_M(\theta_t;\pi)\to 0$ or
$\lambda_{\max}(H_{\mathrm{GN}}(\theta_t;\pi))\to 0$.

\paragraph{(iv) Remarks (label dependence and trivial clamping).}
\begin{itemize}\setlength{\itemsep}{1pt}
\item \textbf{Label dependence vs.\ label-free curvature.}
$\lambda_{\max}(H_{\mathrm{GN}}(\theta;\pi))$ depends only on the marginal law of $X$ (since $H_z(p_\theta(X))$ is label-free),
whereas the bound above routes through $Y$ via $m_\theta(X,Y)$ (equivalently $1-p_\theta(X)_Y$). This is valid but can be loose,
especially when the model is confidently wrong.
A label-free alternative (via the predicted margin) is given in Remark~\ref{rem:label_free_gn_alt}.
\item \textbf{Bounds may exceed $1$.} Since $\mathrm{ECE}_M(\theta;\pi)\in[0,1]$, any upper bound $U$ can be trivially tightened to $\min\{1,U\}$.
\end{itemize}

\medskip
\noindent\textbf{Theorem~\ref{thm:separable_bounds}} \textit{(Interpolating regime: two-sided ECE control and coupling to $\lambda_{\max}$)}\textbf{.}\\
Assume $\gamma(\theta;\mathcal{D})>0$, i.e.\ the training set $\mathcal{D}=\{(x_i,y_i)\}_{i=1}^n$ is correctly classified with strictly positive \emph{true} margin.
(Strictness ensures $\widehat{Y}_i=y_i$ without tie-breaking subtleties.)

\paragraph{(i) Two-sided control of in-sample ECE by the exponential margin moment.}
\[
\frac{1}{K}\,Q_{\mathcal{D}}(\theta)
\;\le\;
\mathrm{ECE}_M(\theta;\mathcal{D})
\;\le\;
(K-1)\,Q_{\mathcal{D}}(\theta)
\;\le\;
(K-1)\,e^{-\gamma(\theta;\mathcal{D})}.
\]
(As always, one may clamp the upper bound by $\min\{1,\cdot\}$.)

\paragraph{(ii) In-sample GN curvature bound in terms of the same moment.}
Assume additionally that $\|J_\theta(x_i)\|_{\mathrm{op}}\le C_J$ for all $i=1,\dots,n$. Then
\[
\lambda_{\max}\!\big(H_{\mathrm{GN}}(\theta;\mathcal{D})\big)
\;\le\;
2C_J^2\,(K-1)\,Q_{\mathcal{D}}(\theta)
\;\le\;
2C_J^2\,K(K-1)\,\mathrm{ECE}_M(\theta;\mathcal{D}).
\]

\paragraph{(iii) Consequence: in the interpolating regime, curvature and ECE are forced to co-vary.}
Under the same assumptions,
\[
\mathrm{ECE}_M(\theta;\mathcal{D})
\;\ge\;
\frac{\lambda_{\max}(H_{\mathrm{GN}}(\theta;\mathcal{D}))}{2C_J^2\,K(K-1)}.
\]
Thus, once the training set is correctly classified and Jacobians remain bounded, large GN curvature cannot occur without large in-sample ECE.

\paragraph{(iv) Robust-margin variant (optional; requires local Lipschitzness at $(x_i,y_i)$).}
Assume moreover that for each $i$ there exists $L_m(x_i,y_i)\in[0,\infty)$ such that
\[
|m_\theta(x_i+\delta,y_i)-m_\theta(x_i,y_i)|
\le
L_m(x_i,y_i)\,\|\delta\|
\qquad\forall\ \|\delta\|\le \varepsilon.
\]
Then, with the robust moments $Q^{0}_{\varepsilon,\mathcal D}(\theta)$ and $Q^{-}_{\varepsilon,\mathcal D}(\theta)$ defined above,
\[
\frac{1}{K}\,Q^{-}_{\varepsilon,\mathcal D}(\theta)
\;\le\;
\mathrm{ECE}_M(\theta;\mathcal{D})
\;\le\;
(K-1)\,Q^{0}_{\varepsilon,\mathcal D}(\theta),
\]
and, under $\max_i\|J_\theta(x_i)\|_{\mathrm{op}}\le C_J$,
\[
\lambda_{\max}\!\big(H_{\mathrm{GN}}(\theta;\mathcal{D})\big)
\;\le\;
2C_J^2\,(K-1)\,Q^{0}_{\varepsilon,\mathcal D}(\theta).
\]

\paragraph{(v) Remark (binning irrelevance under perfect accuracy).}
Under $\gamma(\theta;\mathcal{D})>0$, every nonempty bin has empirical accuracy $1$,
so $\mathrm{ECE}_M(\theta;\mathcal{D})$ reduces to the \emph{mean misconfidence} and becomes independent of the choice of bins.

\subsection{Proof of Theorem~\ref{thm:overlap_bounds}}
\label{app:proof_overlap}

% =========================
% Adversarial proofread + notation cleanup for Appendix E.4--E.5
% (Target: remove legacy \Psi/\mu/s_{\rm GN} notation; use Q and Q_{\mathcal D} consistently.)
% =========================

% -------------------------------------------------
% E.4 Proof of Theorem 4.1 (Overlap regime)
% -------------------------------------------------
\begin{proof}[Proof of Theorem~4.1]
\textbf{(i) Calibration bound.}
By Lemma~\ref{lem:ece_by_abs_gap} (population version),
\[
\mathrm{ECE}_M(\theta;\pi)\le 
\mathbb{E}_{(X,Y)\sim\pi}\Big[\big|\mathbf{1}\{\widehat{Y}=Y\}-\widehat{P}\big|\Big].
\]
By Lemma~\ref{lem:gap_by_trueprob},
\[
\big|\mathbf{1}\{\widehat{Y}=Y\}-\widehat{P}\big|
\le 1-p_\theta(X)_Y,
\]
hence
\[
\mathrm{ECE}_M(\theta;\pi)\le \mathbb{E}_{(X,Y)\sim\pi}\big[1-p_\theta(X)_Y\big].
\]
Applying Lemma~\ref{lem:tail_by_margin} with $z=z_\theta(X)$ and $y=Y$ yields
\[
1-p_\theta(X)_Y \le (K-1)e^{-m_\theta(X,Y)}
\qquad \pi\text{-a.s.},
\]
so
\[
\mathrm{ECE}_M(\theta;\pi)\le (K-1)\,\mathbb{E}_{(X,Y)\sim\pi}\big[e^{-m_\theta(X,Y)}\big].
\]
Finally, by Lemma~\ref{lem:robust_exp_compare} (the trivial robust-vs-clean comparison),
\[
e^{-m_\theta(X,Y)}\le e^{-m_{\varepsilon,\theta}(X,Y)}
\qquad \pi\text{-a.s.},
\]
and therefore
\[
\mathrm{ECE}_M(\theta;\pi)
\le (K-1)\,\mathbb{E}_{(X,Y)\sim\pi}\big[e^{-m_{\varepsilon,\theta}(X,Y)}\big]
=(K-1)\,Q(\theta;\pi).
\]
If the local Lipschitz condition in Lemma~\ref{lem:robust_exp_compare} holds so that $Q^+(\theta;\pi)$ is defined, then
$Q(\theta;\pi)\le Q^+(\theta;\pi)$ since $e^{\varepsilon L_m(X,Y)}\ge 1$.

\textbf{(ii) GN curvature bound.}
Define the random PSD matrix
\[
A(X):=J_\theta(X)^\top H_z(p_\theta(X))J_\theta(X)\succeq 0.
\]
Then $H_{\mathrm{GN}}(\theta;\pi)=\mathbb{E}_{(X,Y)\sim\pi}[A(X)]$.
Since $\lambda_{\max}$ is convex on the PSD cone (equivalently, by the variational characterization),
\[
\lambda_{\max}\big(H_{\mathrm{GN}}(\theta;\pi)\big)
=\lambda_{\max}\!\big(\mathbb{E}[A(X)]\big)
\le
\mathbb{E}\big[\lambda_{\max}(A(X))\big].
\]
For each realization $X$,
\[
\lambda_{\max}(A(X))
\le
\|J_\theta(X)\|_{\mathrm{op}}^2\,\lambda_{\max}\big(H_z(p_\theta(X))\big).
\]
Under $\|J_\theta(X)\|_{\mathrm{op}}\le C_J$ $\pi$-a.s.,
\[
\lambda_{\max}(A(X))\le C_J^2\,\lambda_{\max}\big(H_z(p_\theta(X))\big).
\]
By Lemma~\ref{lem:hz_eig_upper},
\[
\lambda_{\max}(H_z(p_\theta(X)))\le 2(1-p_{\max}(X)),
\qquad p_{\max}(X):=\max_k p_\theta(X)_k.
\]
Since $p_{\max}(X)\ge p_\theta(X)_Y$, we have $1-p_{\max}(X)\le 1-p_\theta(X)_Y$, hence
\[
\lambda_{\max}(H_z(p_\theta(X)))
\le 2\bigl(1-p_\theta(X)_Y\bigr)
\le 2(K-1)e^{-m_\theta(X,Y)}
\qquad \pi\text{-a.s.}
\]
Therefore,
\[
\lambda_{\max}\big(H_{\mathrm{GN}}(\theta;\pi)\big)
\le
2C_J^2(K-1)\,\mathbb{E}_{(X,Y)\sim\pi}\big[e^{-m_\theta(X,Y)}\big]
\le
2C_J^2(K-1)\,\mathbb{E}_{(X,Y)\sim\pi}\big[e^{-m_{\varepsilon,\theta}(X,Y)}\big]
=
2C_J^2(K-1)\,Q(\theta;\pi),
\]
where the last inequality uses Lemma~\ref{lem:robust_exp_compare}.
If the local Lipschitz condition in Lemma~\ref{lem:robust_exp_compare} holds, then also
$Q(\theta;\pi)\le Q^+(\theta;\pi)$.

\textbf{(iii) Certification statement.}
Immediate from (i)--(ii): if $Q(\theta_t;\pi)\not\to 0$ (or likewise $Q^+(\theta_t;\pi)\not\to 0$),
then the corresponding right-hand sides do not converge to $0$ and therefore cannot certify
$\mathrm{ECE}_M(\theta_t;\pi)\to 0$ nor $\lambda_{\max}(H_{\mathrm{GN}}(\theta_t;\pi))\to 0$.
\end{proof}

% -------------------------------------------------
% E.5 Proof of Theorem 4.2 (Interpolating regime on the training set)
% -------------------------------------------------

\subsection{Proof of Theorem~\ref{thm:separable_bounds}}
\label{app:proof_separable}

\begin{proof}[Proof of Theorem~4.2]
Assume $\gamma(\theta;\mathcal{D})>0$, i.e.\ $m_\theta(x_i,y_i)>0$ for all $i$.
Hence $\widehat{Y}_i=y_i$ for all $i$ (no tie-breaking occurs).

\textbf{(i) Two-sided ECE--moment bounds.}
Because $\widehat{Y}_i=y_i$, every nonempty bin $B_m$ has empirical accuracy $\mathrm{acc}(B_m)=1$.
Therefore, for each nonempty bin,
\[
\bigl|1-\mathrm{conf}(B_m)\bigr|=1-\mathrm{conf}(B_m)
\qquad\text{since }\mathrm{conf}(B_m)\in[0,1].
\]
Hence
\[
\mathrm{ECE}_M(\theta;\mathcal{D})
=\sum_{m=1}^M \frac{|B_m|}{n}\,\Bigl(1-\mathrm{conf}(B_m)\Bigr)
=1-\frac{1}{n}\sum_{i=1}^n \widehat{P}_i.
\]
Since $\widehat{P}_i=\max_k p_\theta(x_i)_k=p_\theta(x_i)_{\widehat{Y}_i}$ and $\widehat{Y}_i=y_i$, we have
$\widehat{P}_i=p_\theta(x_i)_{y_i}$, and thus
\begin{equation}
\mathrm{ECE}_M(\theta;\mathcal{D})
=\frac{1}{n}\sum_{i=1}^n \bigl(1-p_\theta(x_i)_{y_i}\bigr).
\label{eq:ece_equals_mean_misconf_clean}
\end{equation}
Recalling $Q_{\mathcal D}(\theta):=\frac{1}{n}\sum_{i=1}^n e^{-m_\theta(x_i,y_i)}$, Lemma~\ref{lem:tail_by_margin}
implies (since $m_\theta(x_i,y_i)\ge 0$ for all $i$) that
\[
\frac{1}{K}\,e^{-m_\theta(x_i,y_i)}
\le
1-p_\theta(x_i)_{y_i}
\le
(K-1)\,e^{-m_\theta(x_i,y_i)}.
\]
Averaging over $i$ and using~\eqref{eq:ece_equals_mean_misconf_clean} yields
\[
\frac{1}{K}\,Q_{\mathcal D}(\theta)\le \mathrm{ECE}_M(\theta;\mathcal{D})\le (K-1)\,Q_{\mathcal D}(\theta).
\]
Finally, $m_\theta(x_i,y_i)\ge \gamma(\theta;\mathcal{D})$ implies $Q_{\mathcal D}(\theta)\le e^{-\gamma(\theta;\mathcal{D})}$.

\textbf{(ii) GN curvature bound.}
For each $i$, define $H_{z,i}:=H_z(p_\theta(x_i))$ and $J_i:=J_\theta(x_i)$.
Then
\[
H_{\mathrm{GN}}(\theta;\mathcal{D})=\frac{1}{n}\sum_{i=1}^n J_i^\top H_{z,i}J_i \succeq 0.
\]
Since $\lambda_{\max}$ is convex on the PSD cone,
\[
\lambda_{\max}\big(H_{\mathrm{GN}}(\theta;\mathcal{D})\big)
\le
\frac{1}{n}\sum_{i=1}^n \lambda_{\max}(J_i^\top H_{z,i}J_i)
\le
\frac{1}{n}\sum_{i=1}^n \|J_i\|_{\mathrm{op}}^2\,\lambda_{\max}(H_{z,i}).
\]
Under $\|J_i\|_{\mathrm{op}}\le C_J$ for all $i$,
\[
\lambda_{\max}\big(H_{\mathrm{GN}}(\theta;\mathcal{D})\big)
\le
C_J^2\cdot \frac{1}{n}\sum_{i=1}^n \lambda_{\max}(H_{z,i}).
\]
By Lemma~\ref{lem:hz_eig_upper}, $\lambda_{\max}(H_{z,i})\le 2(1-p_{\max,i})$.
Since $\widehat{Y}_i=y_i$, we have $p_{\max,i}=p_\theta(x_i)_{y_i}$, hence by Lemma~\ref{lem:tail_by_margin},
\[
\lambda_{\max}(H_{z,i})
\le 2\bigl(1-p_\theta(x_i)_{y_i}\bigr)
\le 2(K-1)e^{-m_\theta(x_i,y_i)}.
\]
Therefore
\[
\lambda_{\max}\big(H_{\mathrm{GN}}(\theta;\mathcal{D})\big)
\le
2C_J^2(K-1)\cdot \frac{1}{n}\sum_{i=1}^n e^{-m_\theta(x_i,y_i)}
=
2C_J^2(K-1)\,Q_{\mathcal D}(\theta).
\]

\textbf{(iii) Coupling to $\mathrm{ECE}_M$ (rearranged lower bound).}
Combining the bound in (ii) with $\mathrm{ECE}_M(\theta;\mathcal{D})\ge \frac{1}{K}Q_{\mathcal D}(\theta)$ from (i) yields
\[
\lambda_{\max}\big(H_{\mathrm{GN}}(\theta;\mathcal{D})\big)
\le 2C_J^2K(K-1)\,\mathrm{ECE}_M(\theta;\mathcal{D}),
\qquad\text{equivalently}\qquad
\mathrm{ECE}_M(\theta;\mathcal{D})
\ge \frac{\lambda_{\max}(H_{\mathrm{GN}}(\theta;\mathcal{D}))}{2C_J^2K(K-1)}.
\]

\textbf{(iv) Robust-margin variant.}
Assume that for each $(x_i,y_i)$ there exists $L_m(x_i,y_i)\in[0,\infty)$ such that
\[
|m_\theta(x_i+\delta,y_i)-m_\theta(x_i,y_i)|\le L_m(x_i,y_i)\|\delta\|
\qquad \forall \|\delta\|\le \varepsilon.
\]
Define the robust moments (as in Appendix~E.3)
\[
Q^0_{\varepsilon,\mathcal D}(\theta):=\frac{1}{n}\sum_{i=1}^n e^{-m_{\varepsilon,\theta}(x_i,y_i)},
\qquad
Q^-_{\varepsilon,\mathcal D}(\theta):=\frac{1}{n}\sum_{i=1}^n e^{-\varepsilon L_m(x_i,y_i)}e^{-m_{\varepsilon,\theta}(x_i,y_i)}.
\]
By Lemma~\ref{lem:robust_exp_compare},
\[
e^{-m_\theta(x_i,y_i)} \ge e^{-\varepsilon L_m(x_i,y_i)}e^{-m_{\varepsilon,\theta}(x_i,y_i)}
\qquad\text{and}\qquad
e^{-m_\theta(x_i,y_i)} \le e^{-m_{\varepsilon,\theta}(x_i,y_i)}.
\]
Insert these bounds into $\frac{1}{K}Q_{\mathcal D}(\theta)\le \mathrm{ECE}_M(\theta;\mathcal{D})\le (K-1)Q_{\mathcal D}(\theta)$ to obtain
\[
\frac{1}{K}Q^-_{\varepsilon,\mathcal D}(\theta)
\le
\mathrm{ECE}_M(\theta;\mathcal{D})
\le
(K-1)Q^0_{\varepsilon,\mathcal D}(\theta).
\]
For $\lambda_{\max}(H_{\mathrm{GN}}(\theta;\mathcal D))$, repeat the argument in (ii) and apply
$e^{-m_\theta(x_i,y_i)}\le e^{-m_{\varepsilon,\theta}(x_i,y_i)}$ in the final step to get
\[
\lambda_{\max}\big(H_{\mathrm{GN}}(\theta;\mathcal{D})\big)\le 2C_J^2(K-1)\,Q^0_{\varepsilon,\mathcal D}(\theta).
\]
\end{proof}

%% file: sections/appendix/all_plots.tex
\section{Additional Sharpness--Calibration Experiments}
\label{sec:additional_plots}

\subsection{Sharpness--Calibration Correlation Analysis}
\label{sec:correlation_plots}

We present detailed training dynamics for each optimizer on CIFAR-10 and CIFAR-100, showing the co-evolution of loss, accuracy, ECE, margin, and sharpness throughout training. Since computing the full Hessian eigenvalue is expensive, these experiments use an MLP with a 5K/5K train/validation split. For each dataset, a scatter summary visualizes the temporal coupling across optimizers in a single view; per-optimizer figures report training (left) and validation (right) metrics across learning rates.

\begin{figure}[htbp]
    \centering
    \includegraphics[width=0.8\linewidth]{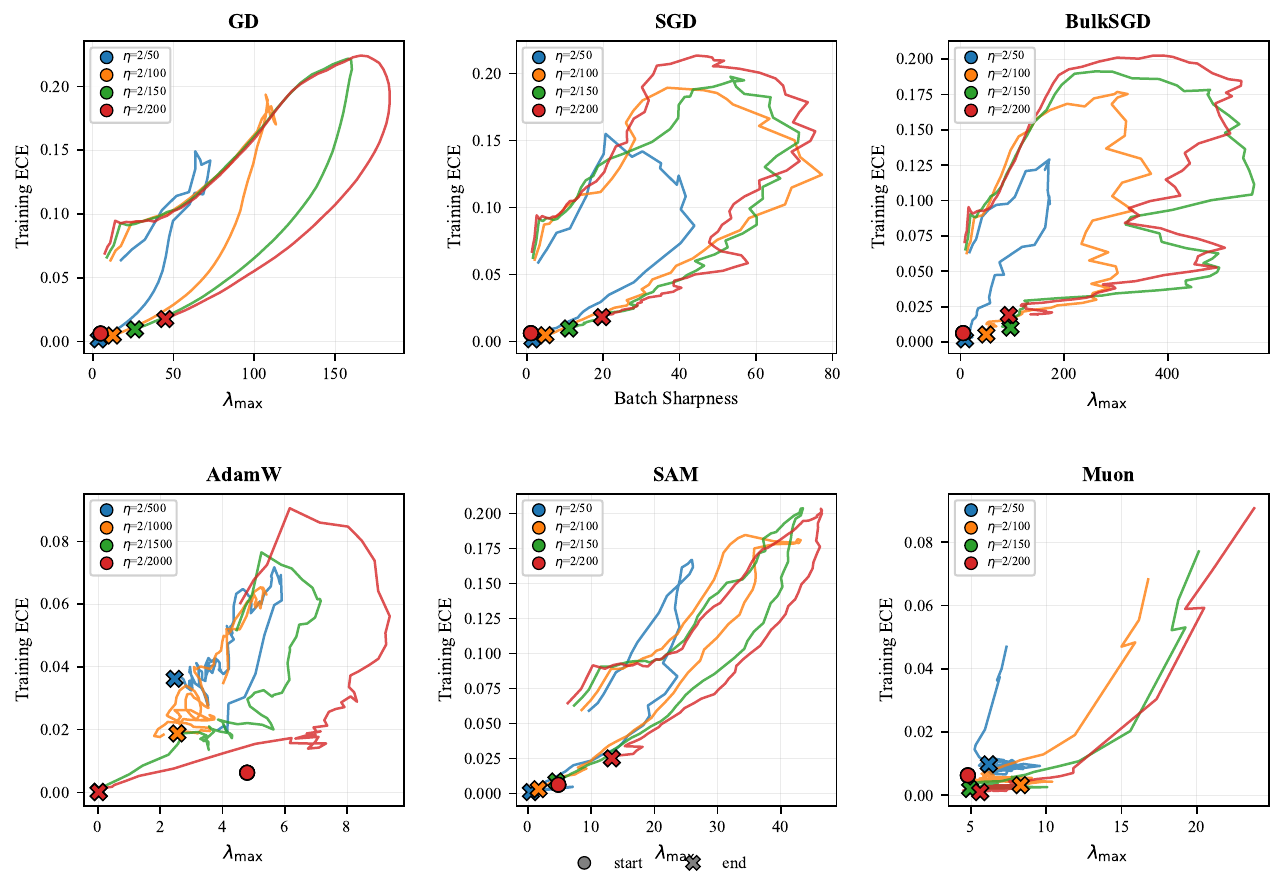}
    \caption{\textbf{ECE vs.\ GN sharpness trajectories (CIFAR-10).} Each curve traces the joint evolution of ECE and GN sharpness ($\lambda_{\max}$) across training steps for one optimizer and learning rate, with a filled circle marking the first training step and a cross ($\times$) the last; color encodes learning rate. Trajectories lie near the diagonal, visualizing the temporal coupling between the two quantities.}
    \label{fig:ece_sharpness_scatter}
\end{figure}

% GD
\begin{figure}[htbp]
    \centering
    \begin{subfigure}[b]{0.48\textwidth}
        \centering
        \includegraphics[width=\textwidth]{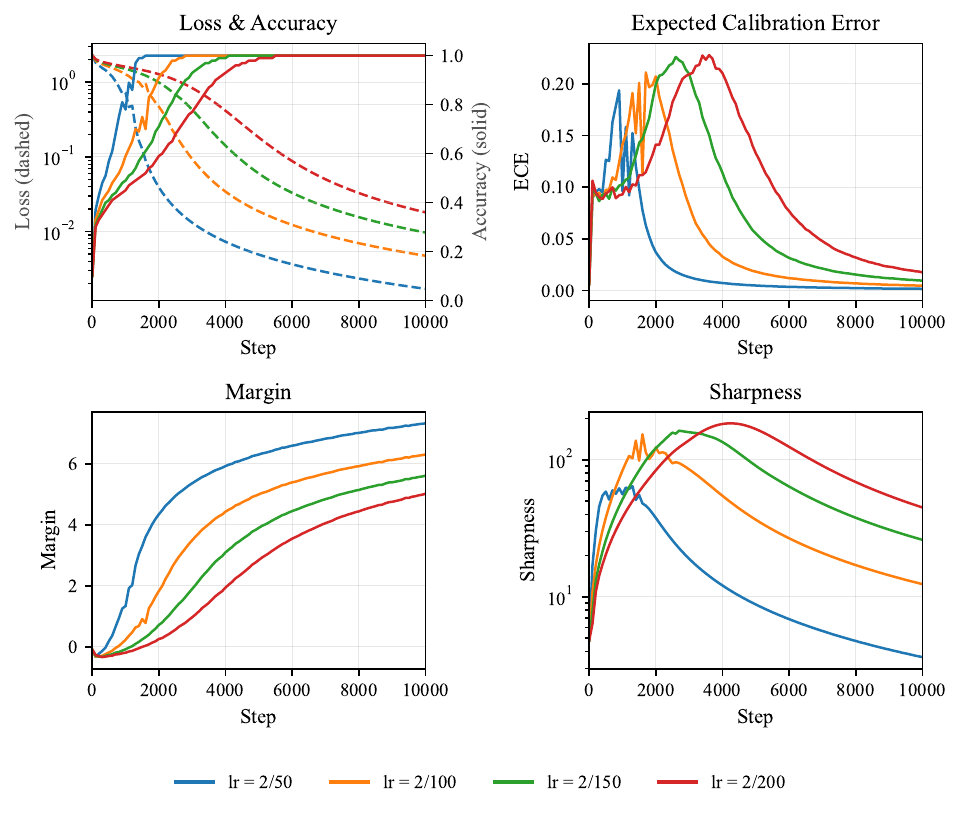}
        \caption{Training metrics}
    \end{subfigure}
    \hfill
    \begin{subfigure}[b]{0.48\textwidth}
        \centering
        \includegraphics[width=\textwidth]{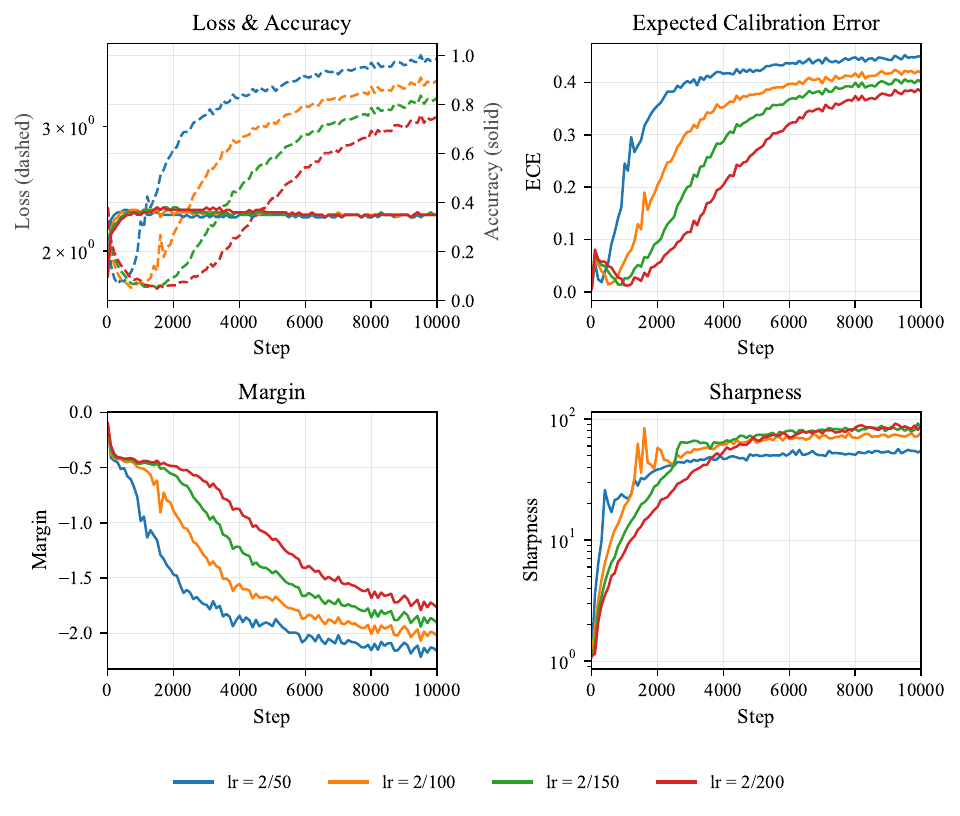}
        \caption{Validation metrics}
    \end{subfigure}
    \caption{\textbf{Gradient Descent (GD).} Training dynamics (loss, accuracy, ECE, margin, sharpness) across four learning rates on CIFAR-10; training (left) and validation (right).}
    \label{fig:gd}
\end{figure}

% SGD
\begin{figure}[htbp]
    \centering
    \begin{subfigure}[b]{0.48\textwidth}
        \centering
        \includegraphics[width=\textwidth]{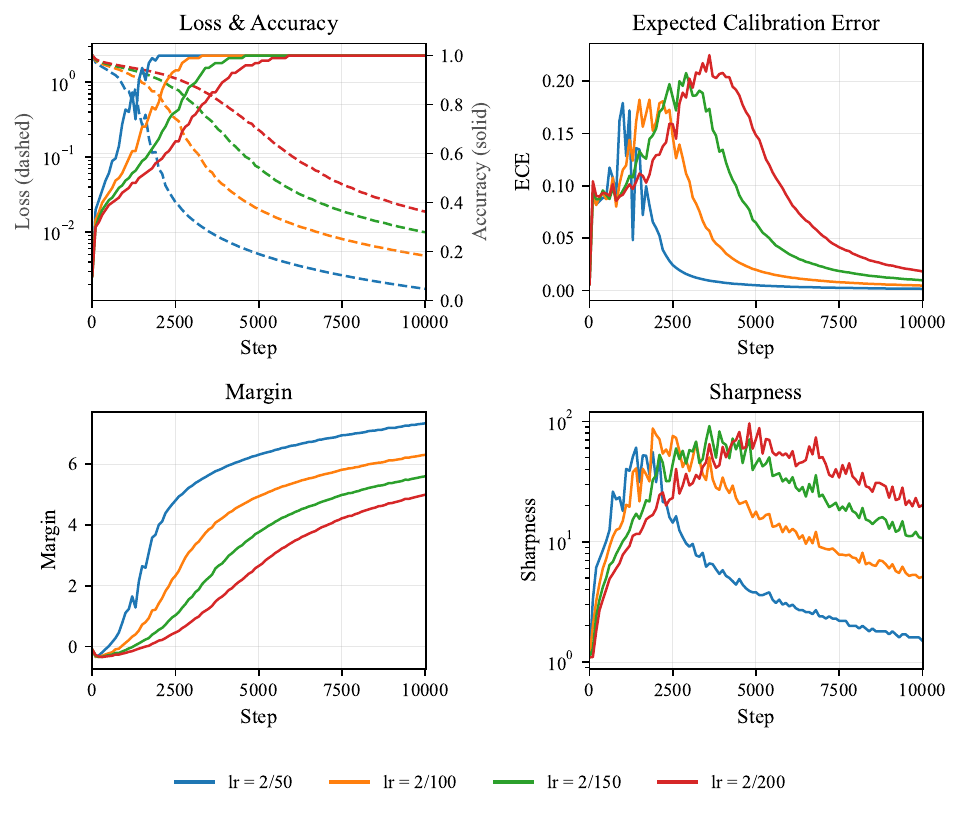}
        \caption{Training metrics}
    \end{subfigure}
    \hfill
    \begin{subfigure}[b]{0.48\textwidth}
        \centering
        \includegraphics[width=\textwidth]{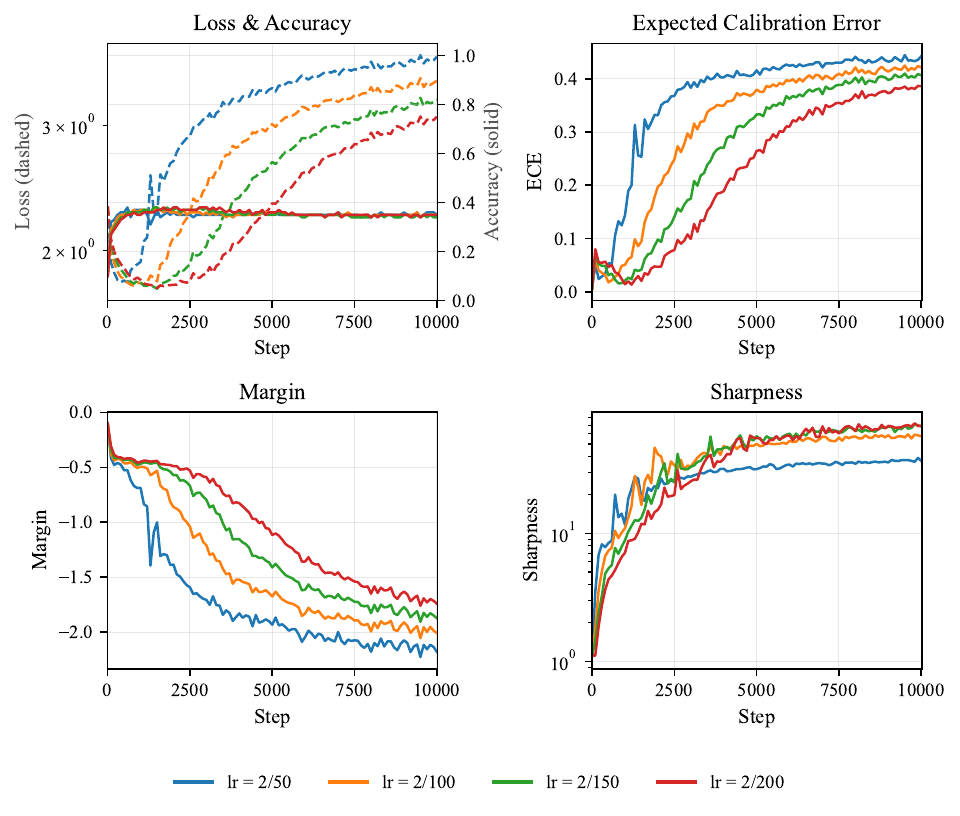}
        \caption{Validation metrics}
    \end{subfigure}
    \caption{\textbf{Stochastic Gradient Descent (SGD).} Training dynamics (loss, accuracy, ECE, margin, sharpness) across four learning rates on CIFAR-10; training (left) and validation (right).}
    \label{fig:sgd}
\end{figure}

% SAM
\begin{figure}[htbp]
    \centering
    \begin{subfigure}[b]{0.48\textwidth}
        \centering
        \includegraphics[width=\textwidth]{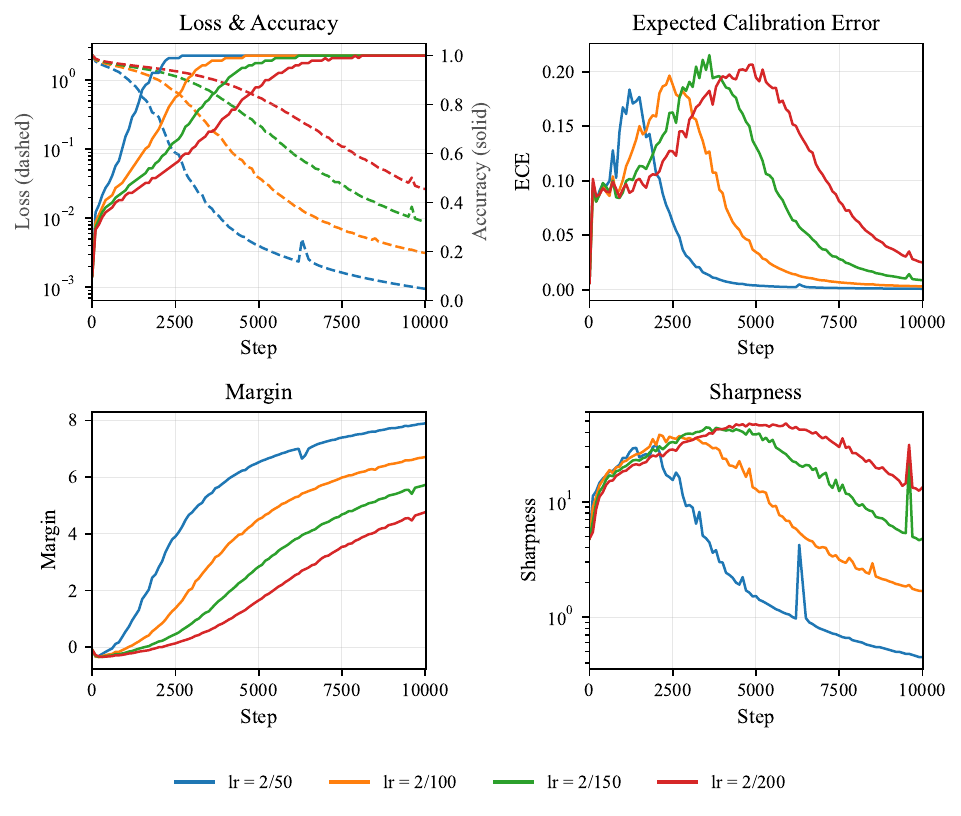}
        \caption{Training metrics}
    \end{subfigure}
    \hfill
    \begin{subfigure}[b]{0.48\textwidth}
        \centering
        \includegraphics[width=\textwidth]{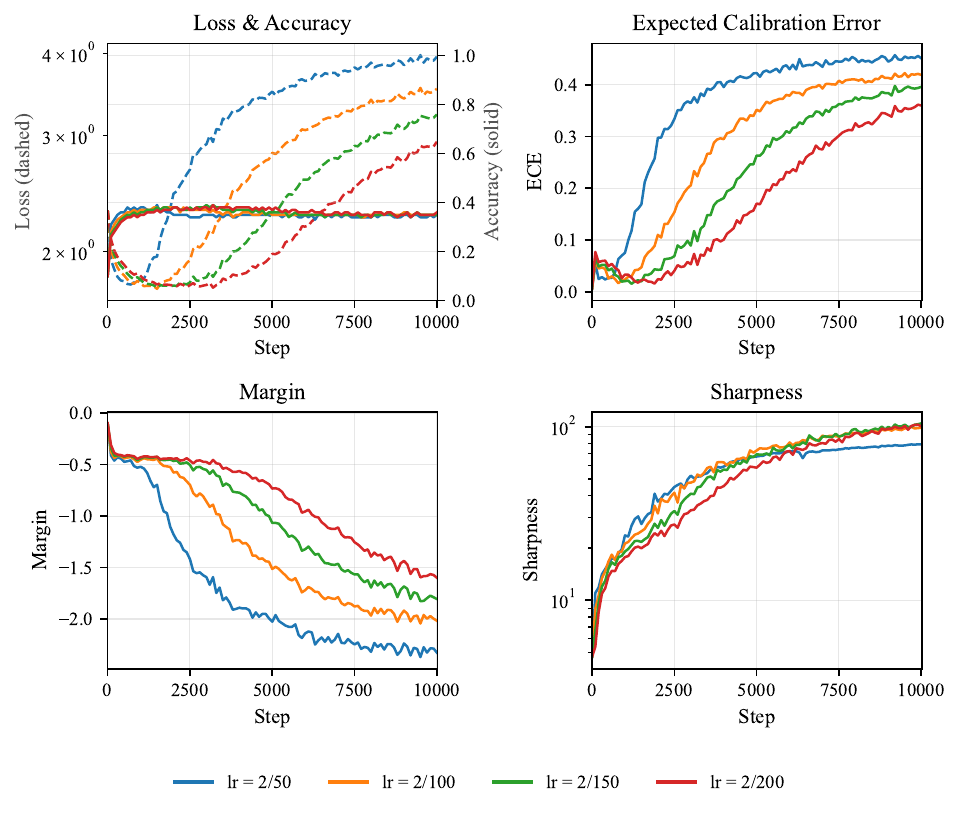}
        \caption{Validation metrics}
    \end{subfigure}
    \caption{\textbf{Sharpness-Aware Minimization (SAM).} Training dynamics (loss, accuracy, ECE, margin, sharpness) across four learning rates on CIFAR-10; training (left) and validation (right).}
    \label{fig:sam}
\end{figure}

% Muon
\begin{figure}[htbp]
    \centering
    \begin{subfigure}[b]{0.48\textwidth}
        \centering
        \includegraphics[width=\textwidth]{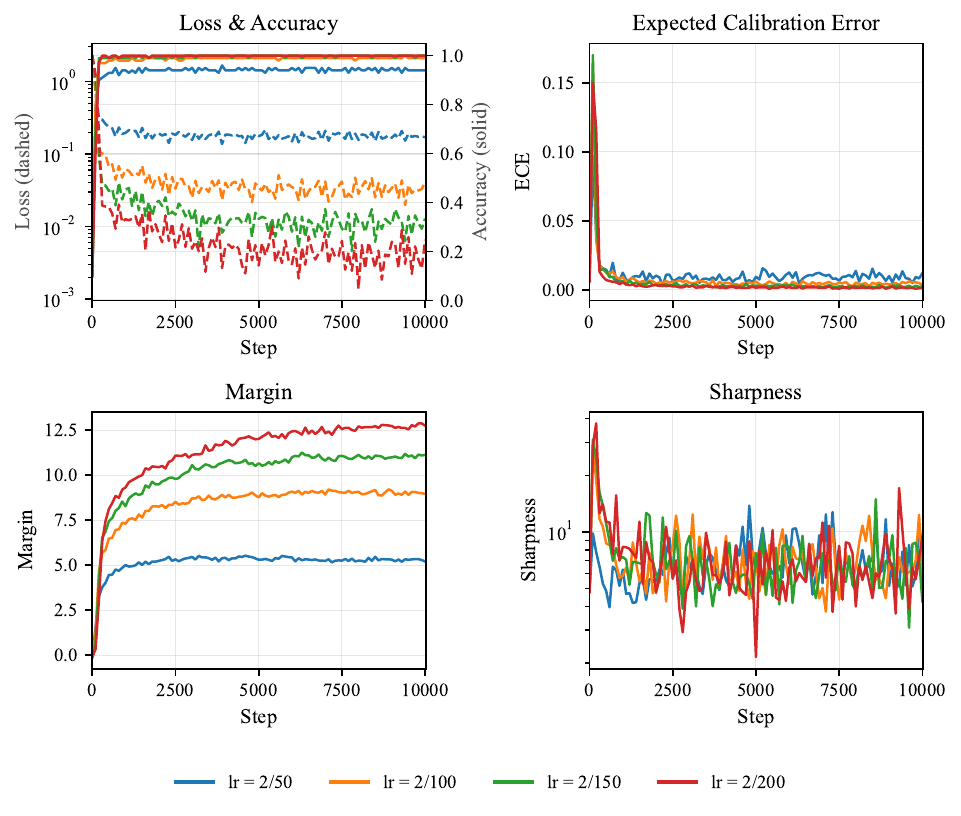}
        \caption{Training metrics}
    \end{subfigure}
    \hfill
    \begin{subfigure}[b]{0.48\textwidth}
        \centering
        \includegraphics[width=\textwidth]{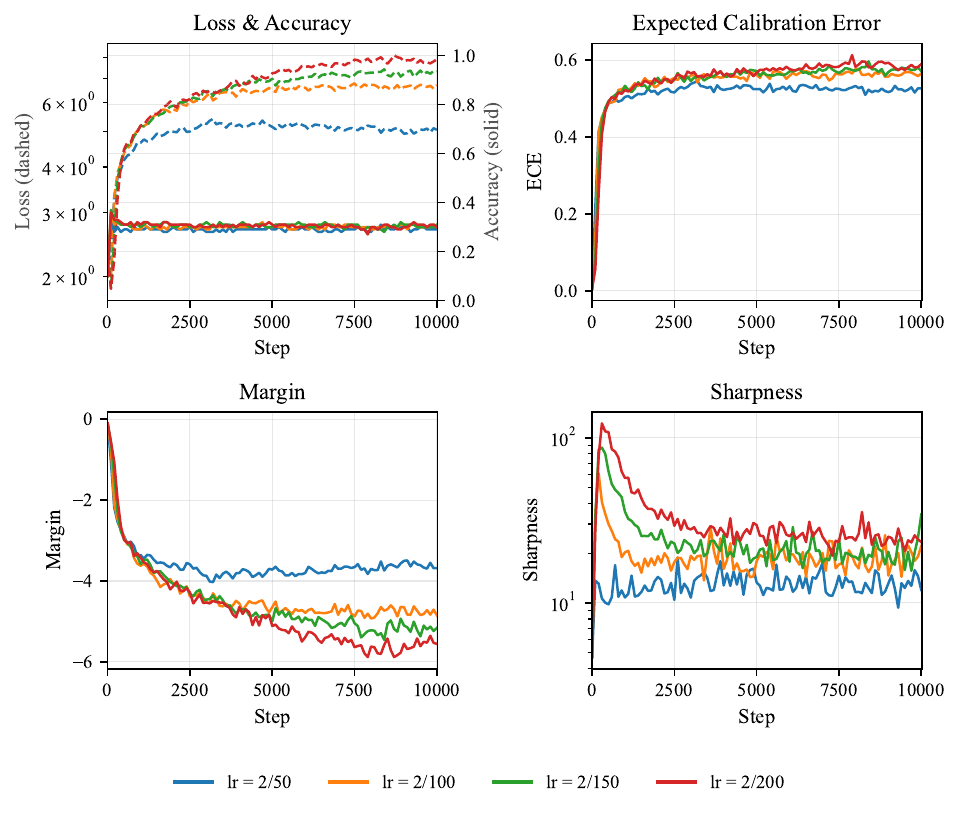}
        \caption{Validation metrics}
    \end{subfigure}
    \caption{\textbf{Muon.} Training dynamics (loss, accuracy, ECE, margin, sharpness) across four learning rates on CIFAR-10; training (left) and validation (right).}
    \label{fig:muon}
\end{figure}

% AdamW
\begin{figure}[htbp]
    \centering
    \begin{subfigure}[b]{0.48\textwidth}
        \centering
        \includegraphics[width=\textwidth]{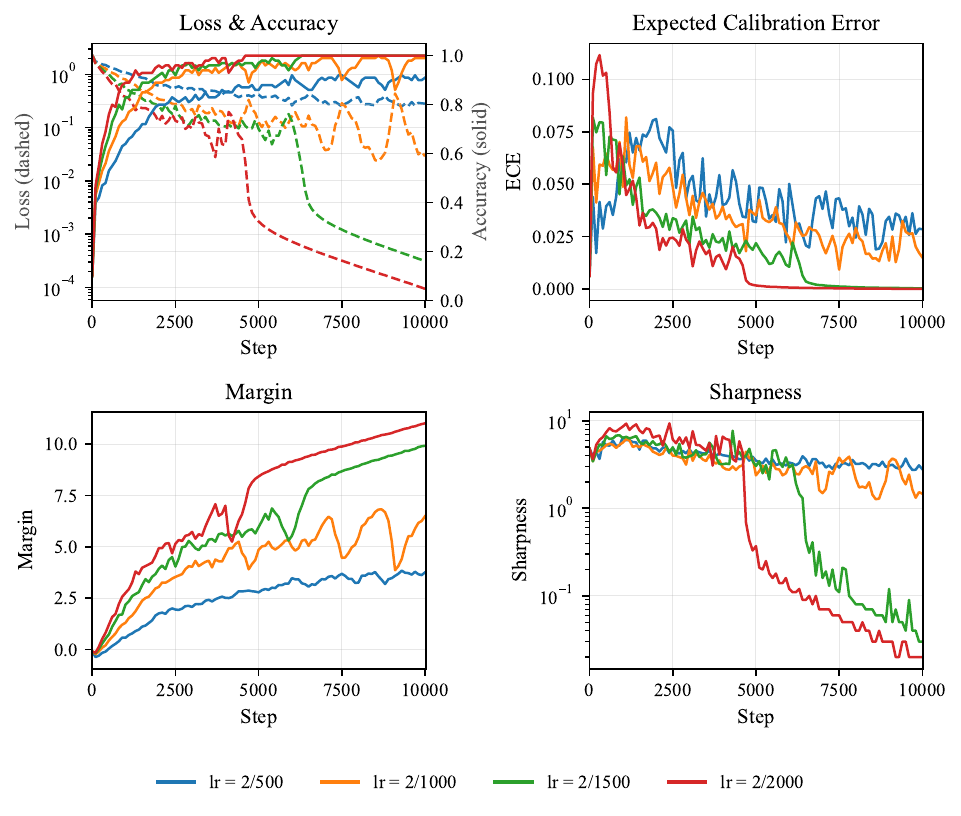}
        \caption{Training metrics}
    \end{subfigure}
    \hfill
    \begin{subfigure}[b]{0.48\textwidth}
        \centering
        \includegraphics[width=\textwidth]{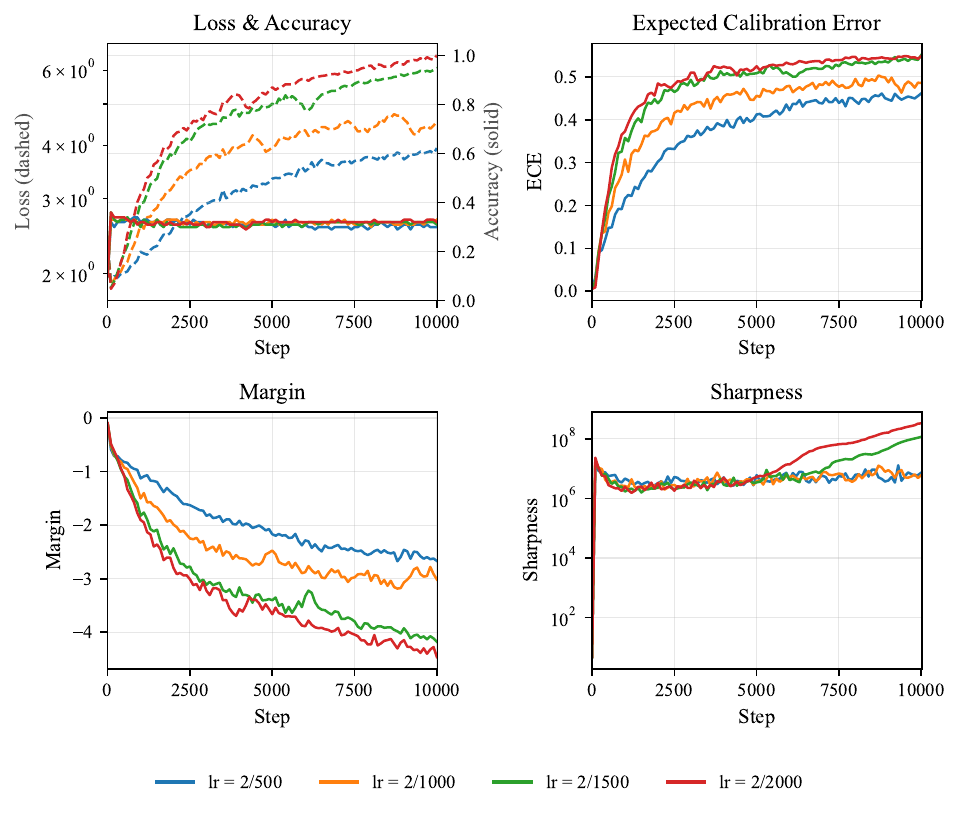}
        \caption{Validation metrics}
    \end{subfigure}
    \caption{\textbf{AdamW.} Training dynamics (loss, accuracy, ECE, margin, sharpness) across four learning rates on CIFAR-10; training (left) and validation (right).}
    \label{fig:adamw}
\end{figure}

% BulkSGD
\begin{figure}[htbp]
    \centering
    \begin{subfigure}[b]{0.48\textwidth}
        \centering
        \includegraphics[width=\textwidth]{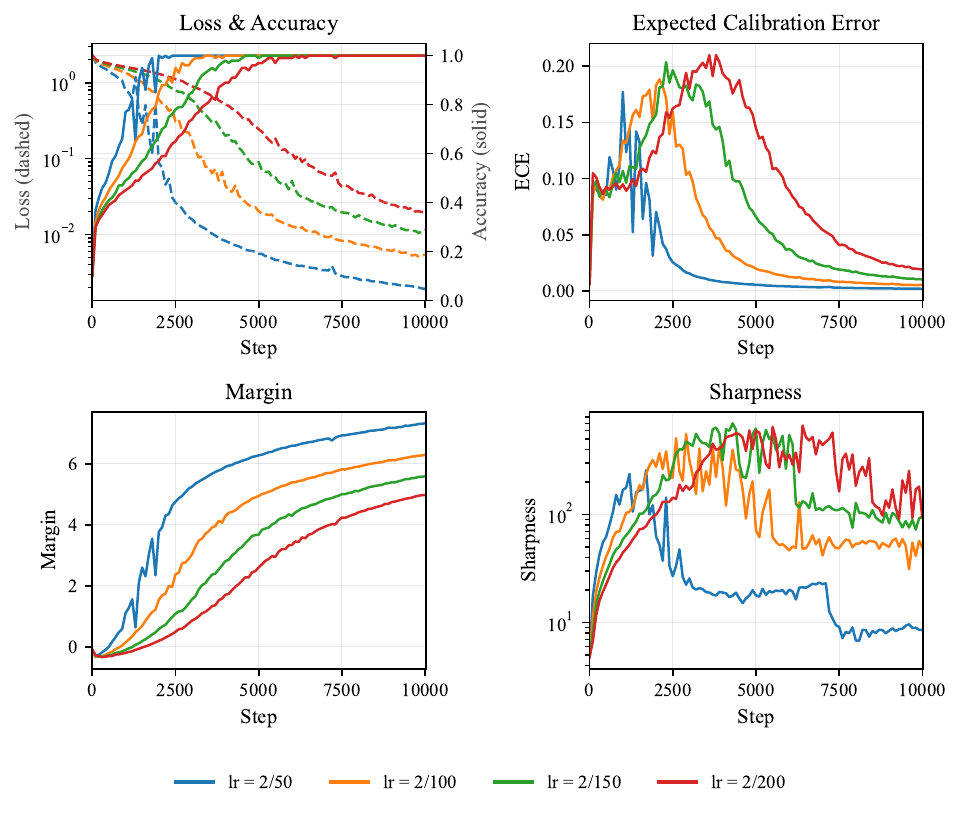}
        \caption{Training metrics}
    \end{subfigure}
    \hfill
    \begin{subfigure}[b]{0.48\textwidth}
        \centering
        \includegraphics[width=\textwidth]{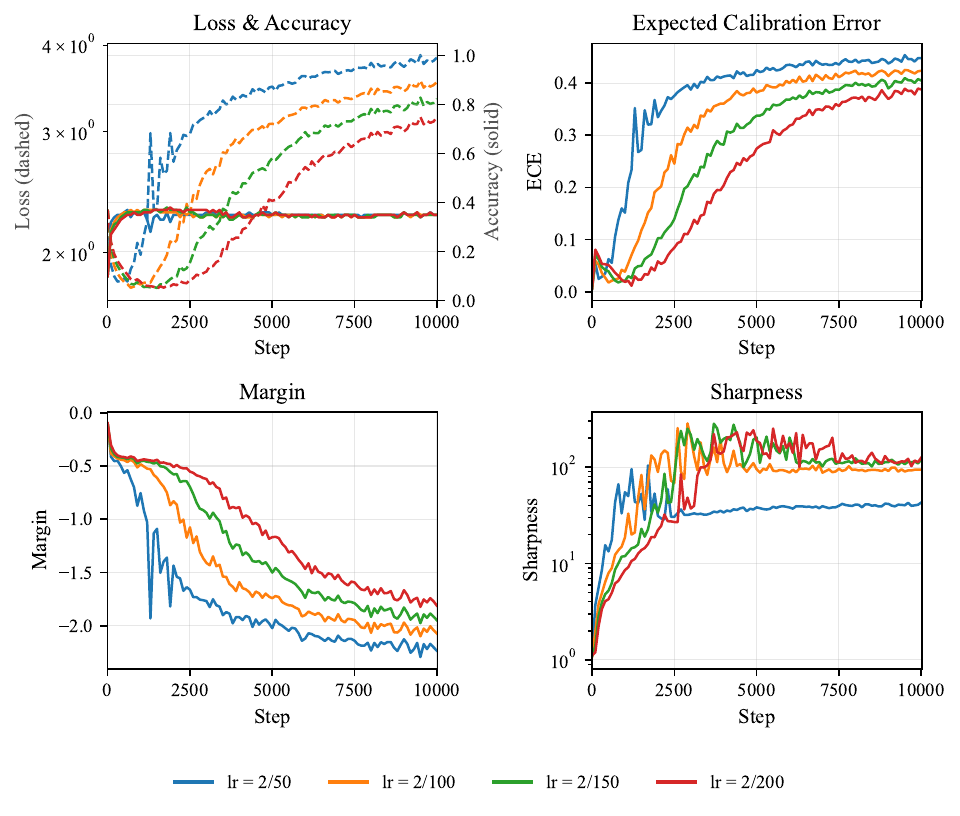}
        \caption{Validation metrics}
    \end{subfigure}
    \caption{\textbf{BulkSGD.} Training dynamics (loss, accuracy, ECE, margin, sharpness) across four learning rates on CIFAR-10; training (left) and validation (right).}
    \label{fig:bulksgd}
\end{figure}

\begin{figure}[htbp]
    \centering
    \includegraphics[width=0.8\linewidth]{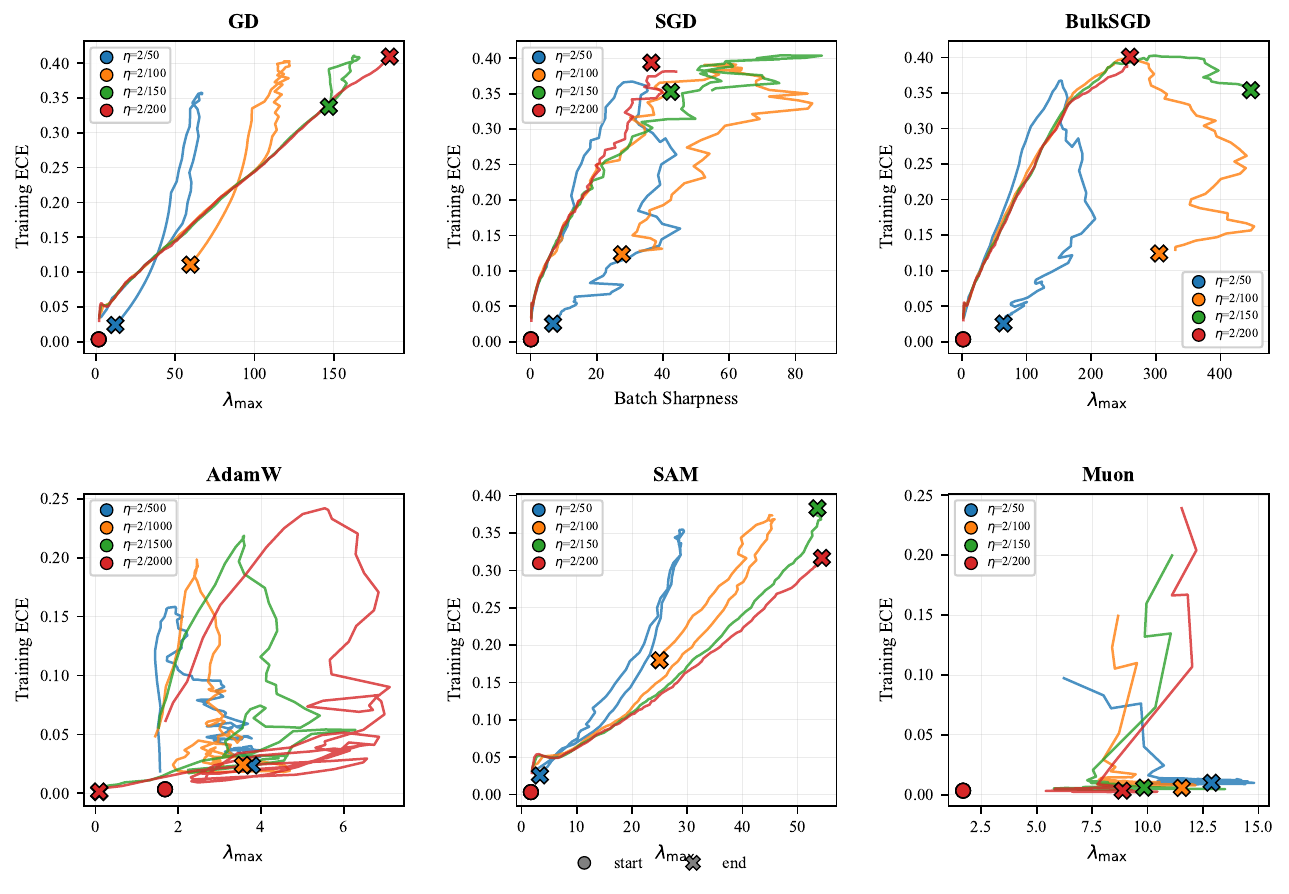}
    \caption{\textbf{ECE vs.\ GN sharpness trajectories (CIFAR-100).} Same format as Figure~\ref{fig:ece_sharpness_scatter}.}
    \label{fig:ece_sharpness_scatter_cifar100}
\end{figure}

% GD (CIFAR-100)
\begin{figure}[htbp]
    \centering
    \begin{subfigure}[b]{0.48\textwidth}
        \centering
        \includegraphics[width=\textwidth]{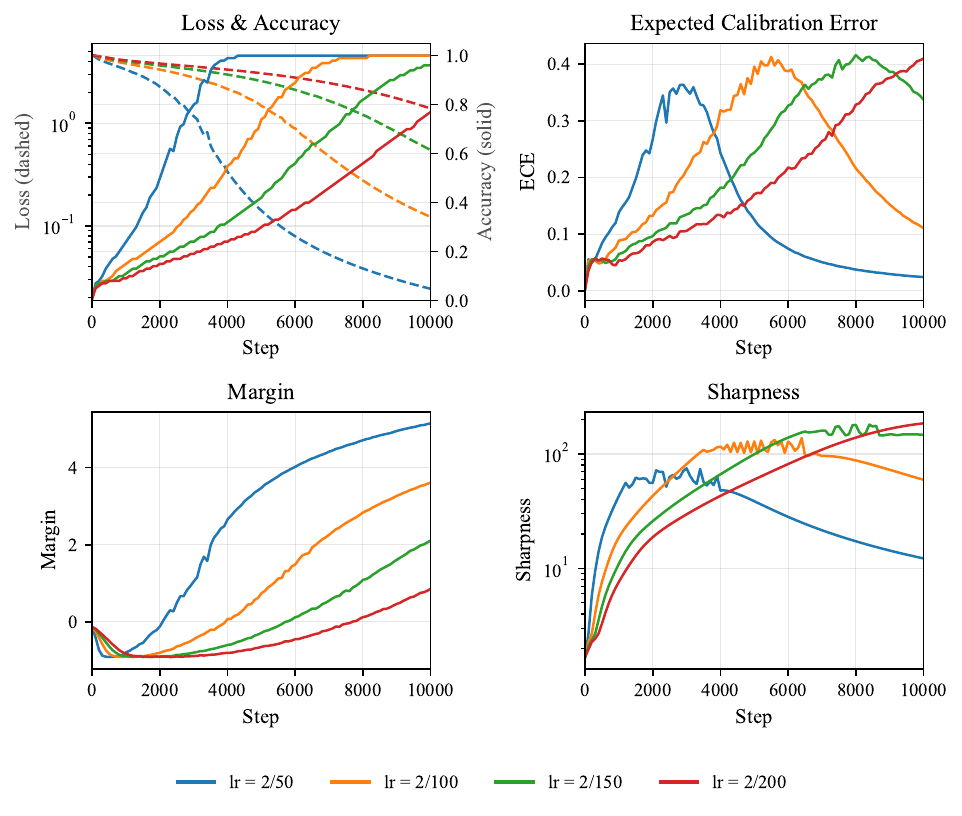}
        \caption{Training metrics}
    \end{subfigure}
    \hfill
    \begin{subfigure}[b]{0.48\textwidth}
        \centering
        \includegraphics[width=\textwidth]{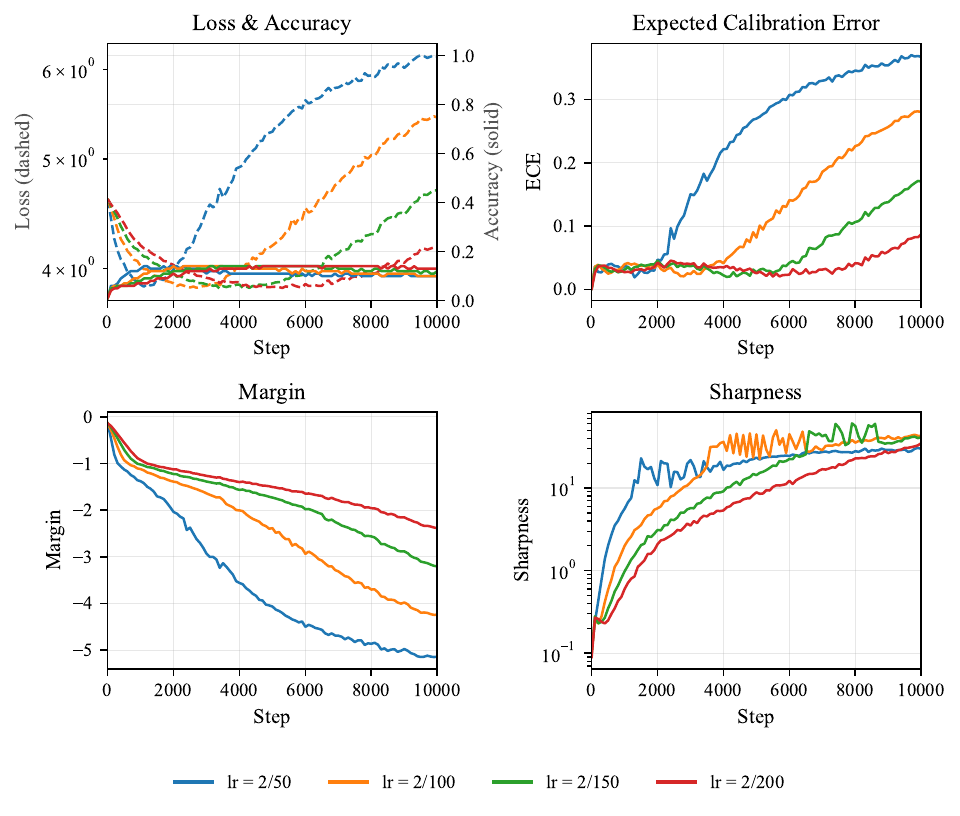}
        \caption{Validation metrics}
    \end{subfigure}
    \caption{\textbf{Gradient Descent (GD) --- CIFAR-100.} Training dynamics (loss, accuracy, ECE, margin, sharpness) across learning rates on CIFAR-100; training (left) and validation (right).}
    \label{fig:gd_cifar100}
\end{figure}

% SGD (CIFAR-100)
\begin{figure}[htbp]
    \centering
    \begin{subfigure}[b]{0.48\textwidth}
        \centering
        \includegraphics[width=\textwidth]{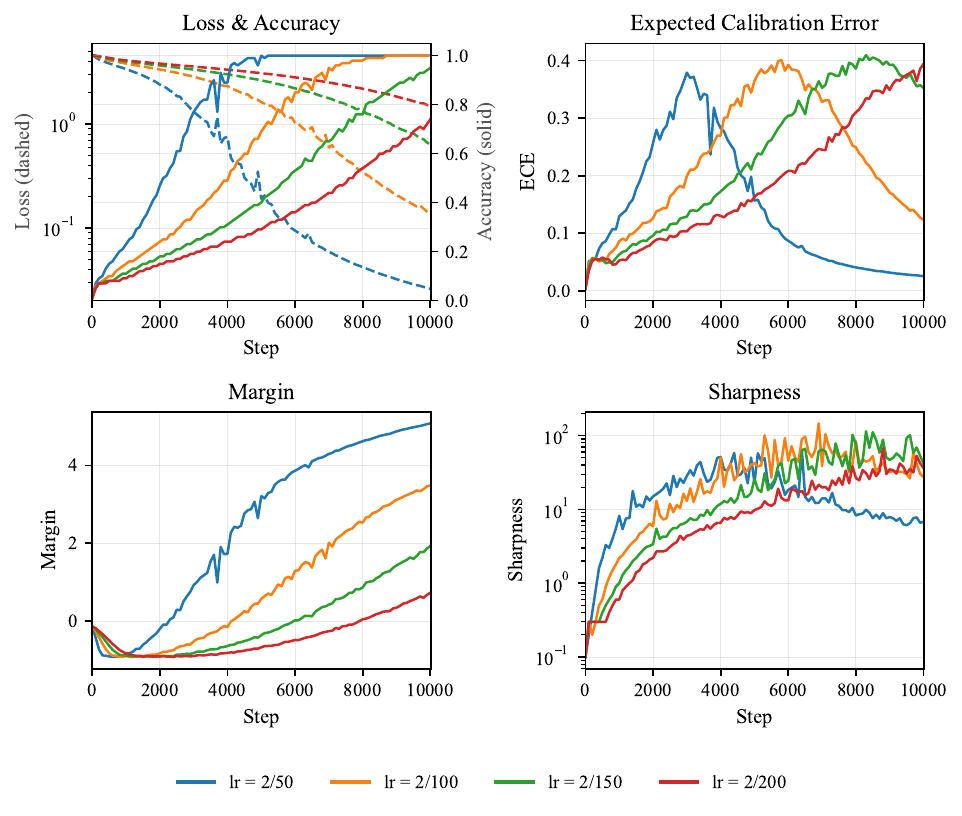}
        \caption{Training metrics}
    \end{subfigure}
    \hfill
    \begin{subfigure}[b]{0.48\textwidth}
        \centering
        \includegraphics[width=\textwidth]{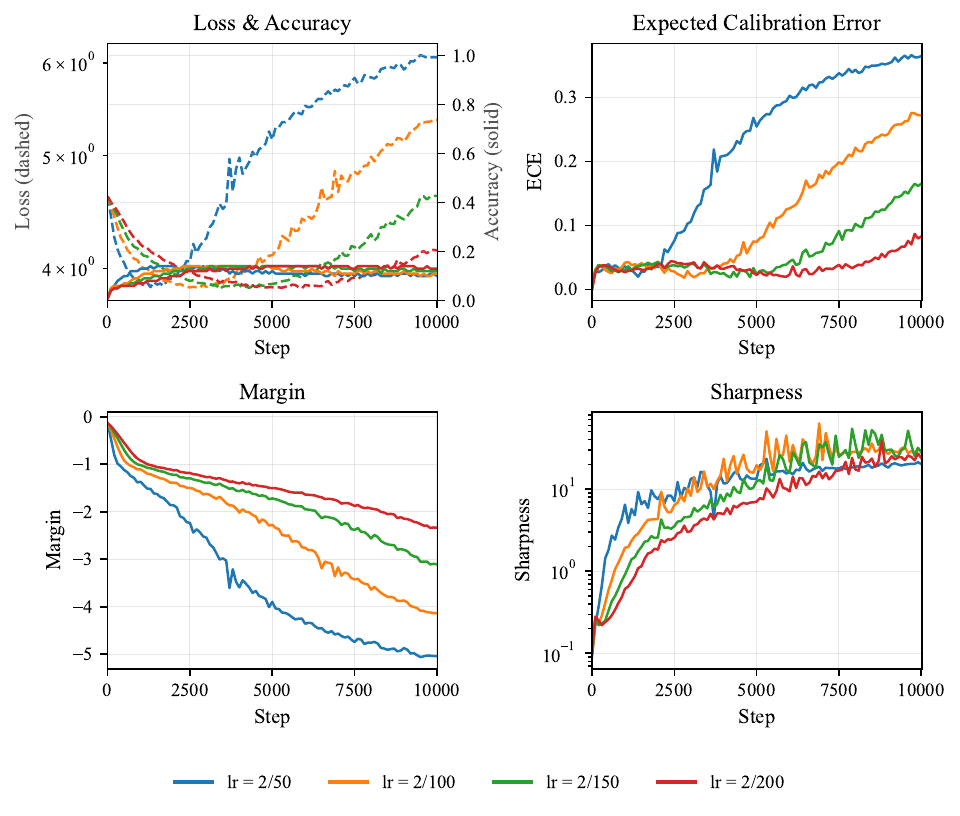}
        \caption{Validation metrics}
    \end{subfigure}
    \caption{\textbf{Stochastic Gradient Descent (SGD) --- CIFAR-100.} Training dynamics (loss, accuracy, ECE, margin, sharpness) across learning rates on CIFAR-100; training (left) and validation (right).}
    \label{fig:sgd_cifar100}
\end{figure}

% SAM (CIFAR-100)
\begin{figure}[htbp]
    \centering
    \begin{subfigure}[b]{0.48\textwidth}
        \centering
        \includegraphics[width=\textwidth]{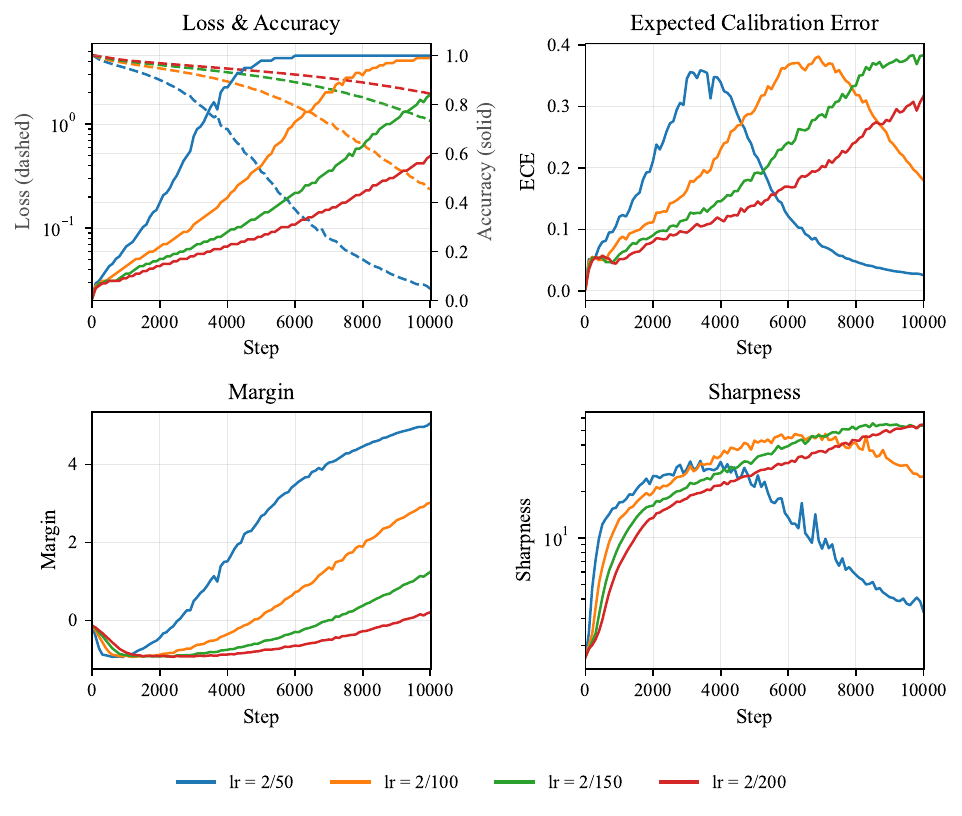}
        \caption{Training metrics}
    \end{subfigure}
    \hfill
    \begin{subfigure}[b]{0.48\textwidth}
        \centering
        \includegraphics[width=\textwidth]{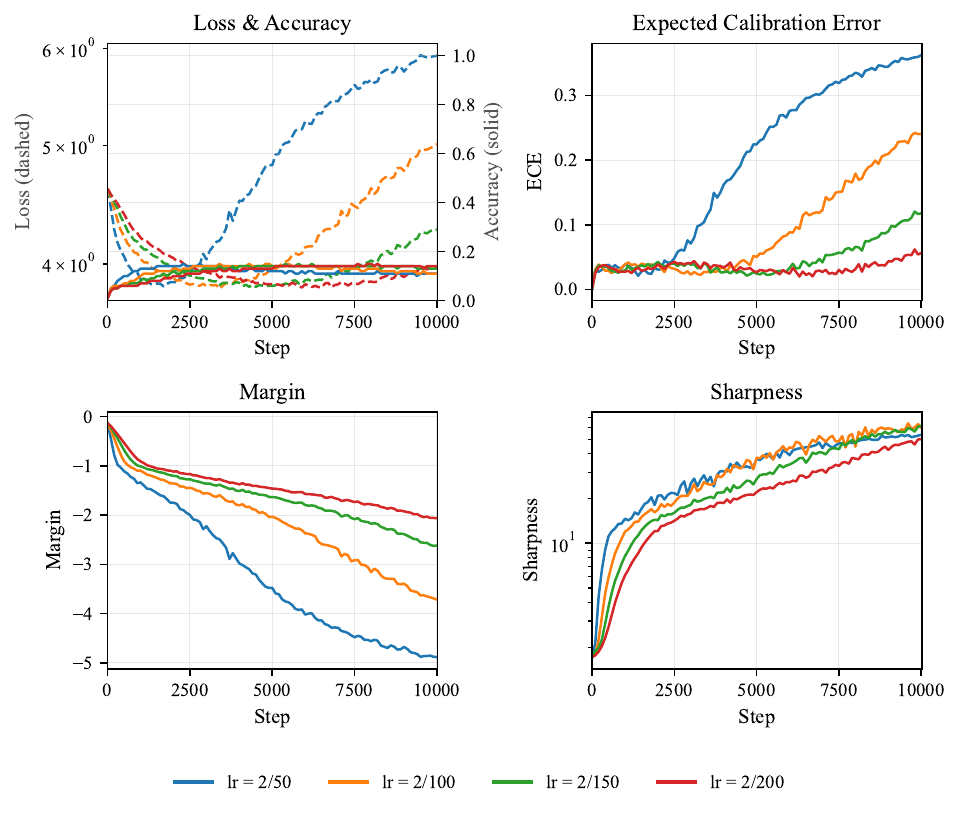}
        \caption{Validation metrics}
    \end{subfigure}
    \caption{\textbf{Sharpness-Aware Minimization (SAM) --- CIFAR-100.} Training dynamics (loss, accuracy, ECE, margin, sharpness) across learning rates on CIFAR-100; training (left) and validation (right).}
    \label{fig:sam_cifar100}
\end{figure}

% Muon (CIFAR-100)
\begin{figure}[htbp]
    \centering
    \begin{subfigure}[b]{0.48\textwidth}
        \centering
        \includegraphics[width=\textwidth]{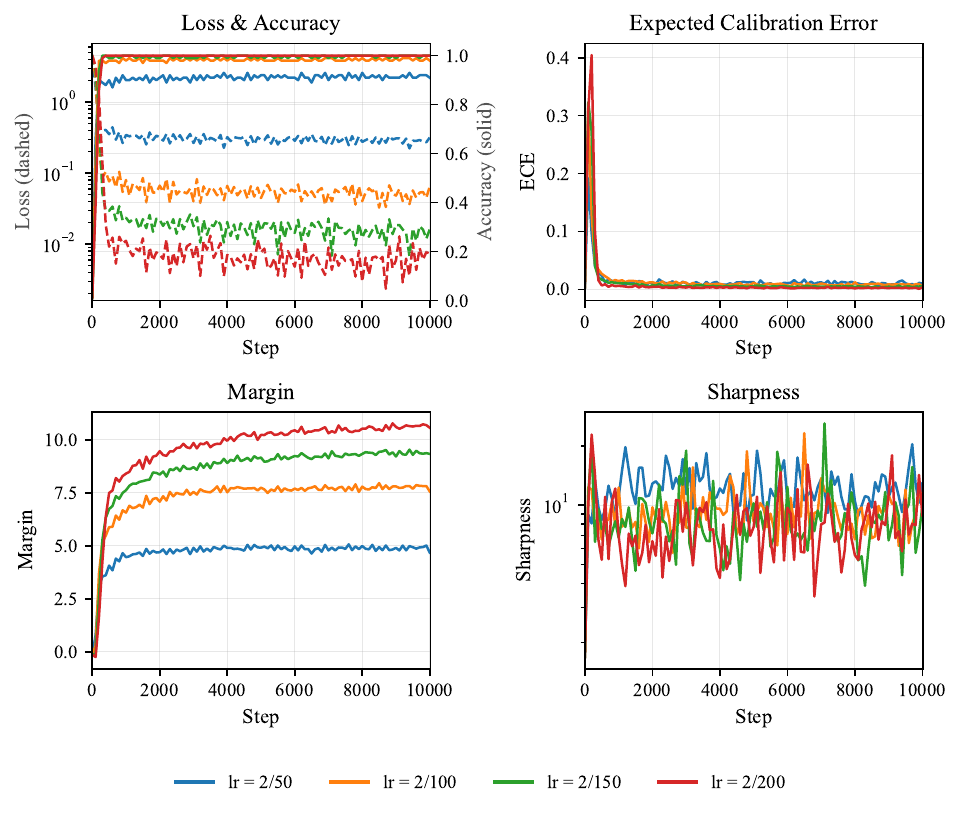}
        \caption{Training metrics}
    \end{subfigure}
    \hfill
    \begin{subfigure}[b]{0.48\textwidth}
        \centering
        \includegraphics[width=\textwidth]{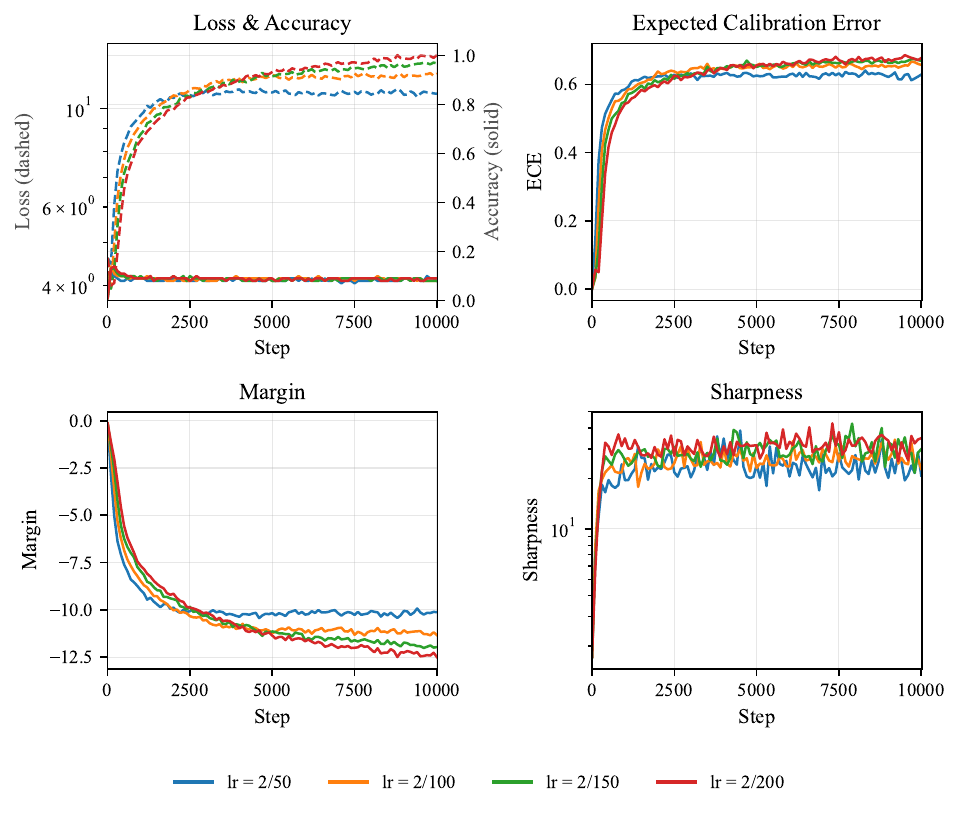}
        \caption{Validation metrics}
    \end{subfigure}
    \caption{\textbf{Muon --- CIFAR-100.} Training dynamics (loss, accuracy, ECE, margin, sharpness) across learning rates on CIFAR-100; training (left) and validation (right).}
    \label{fig:muon_cifar100}
\end{figure}

% AdamW (CIFAR-100)
\begin{figure}[htbp]
    \centering
    \begin{subfigure}[b]{0.48\textwidth}
        \centering
        \includegraphics[width=\textwidth]{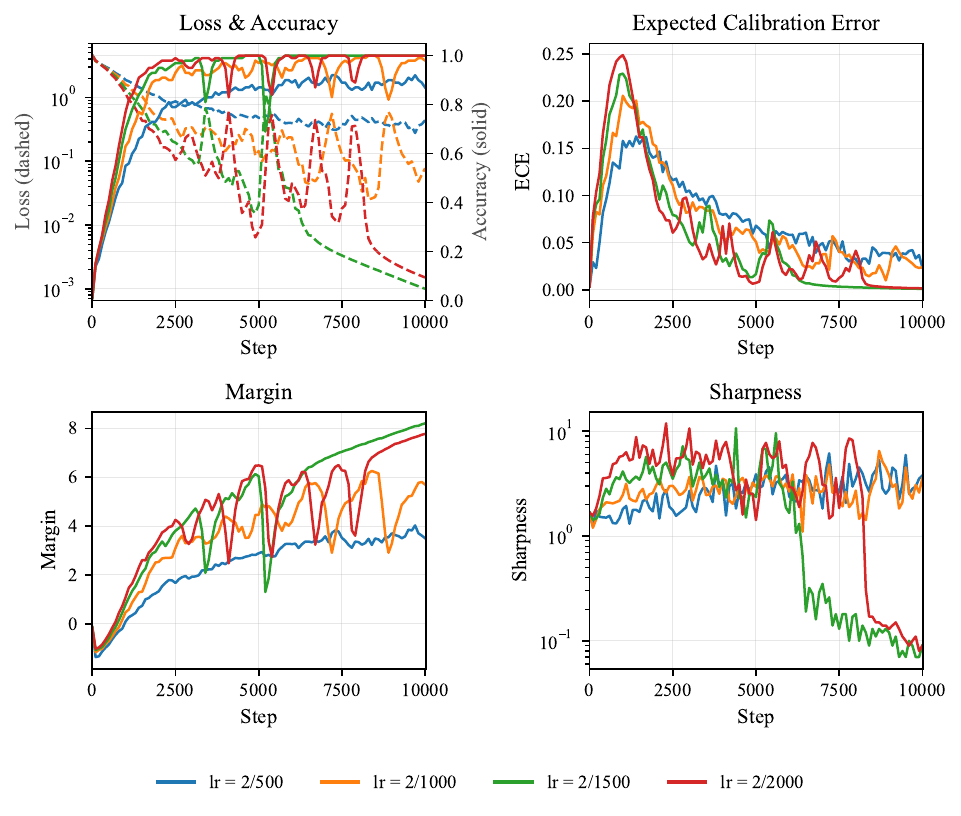}
        \caption{Training metrics}
    \end{subfigure}
    \hfill
    \begin{subfigure}[b]{0.48\textwidth}
        \centering
        \includegraphics[width=\textwidth]{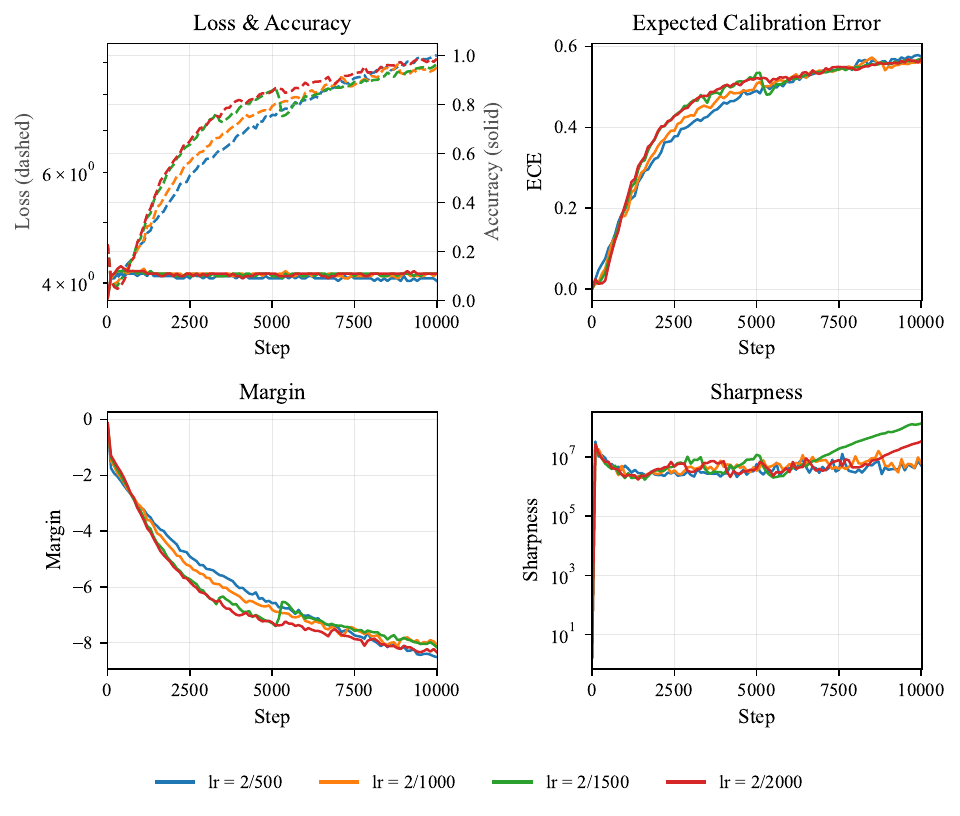}
        \caption{Validation metrics}
    \end{subfigure}
    \caption{\textbf{AdamW --- CIFAR-100.} Training dynamics (loss, accuracy, ECE, margin, sharpness) across learning rates on CIFAR-100; training (left) and validation (right).}
    \label{fig:adamw_cifar100}
\end{figure}

% BulkSGD (CIFAR-100)
\begin{figure}[htbp]
    \centering
    \begin{subfigure}[b]{0.48\textwidth}
        \centering
        \includegraphics[width=\textwidth]{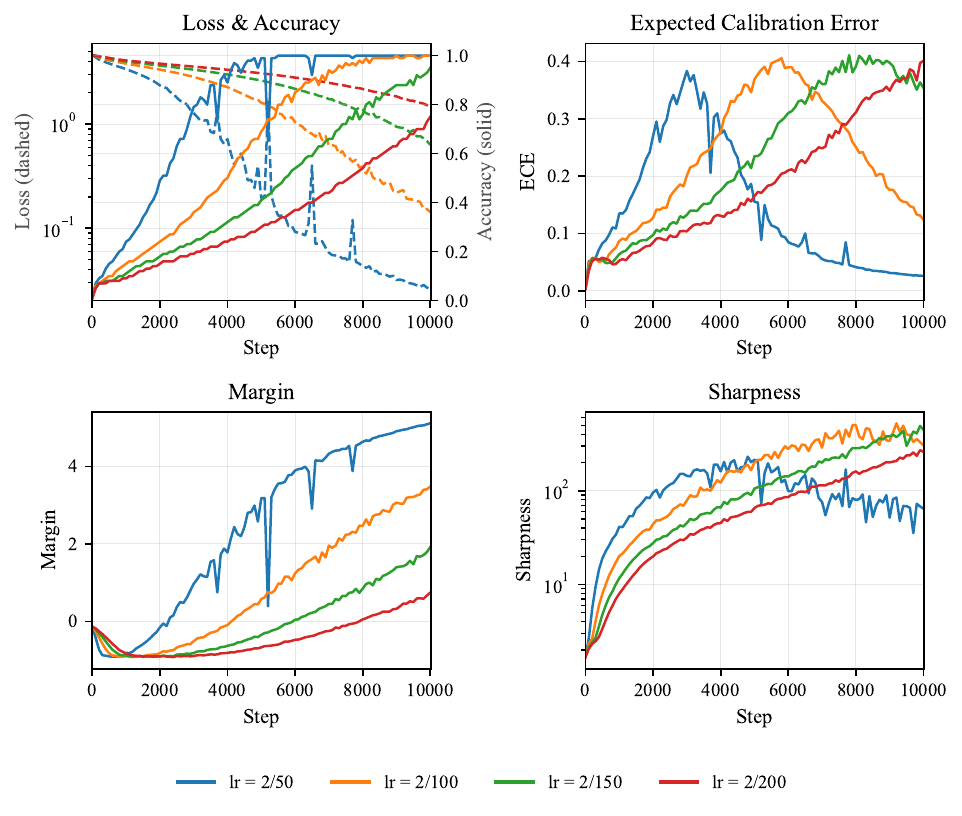}
        \caption{Training metrics}
    \end{subfigure}
    \hfill
    \begin{subfigure}[b]{0.48\textwidth}
        \centering
        \includegraphics[width=\textwidth]{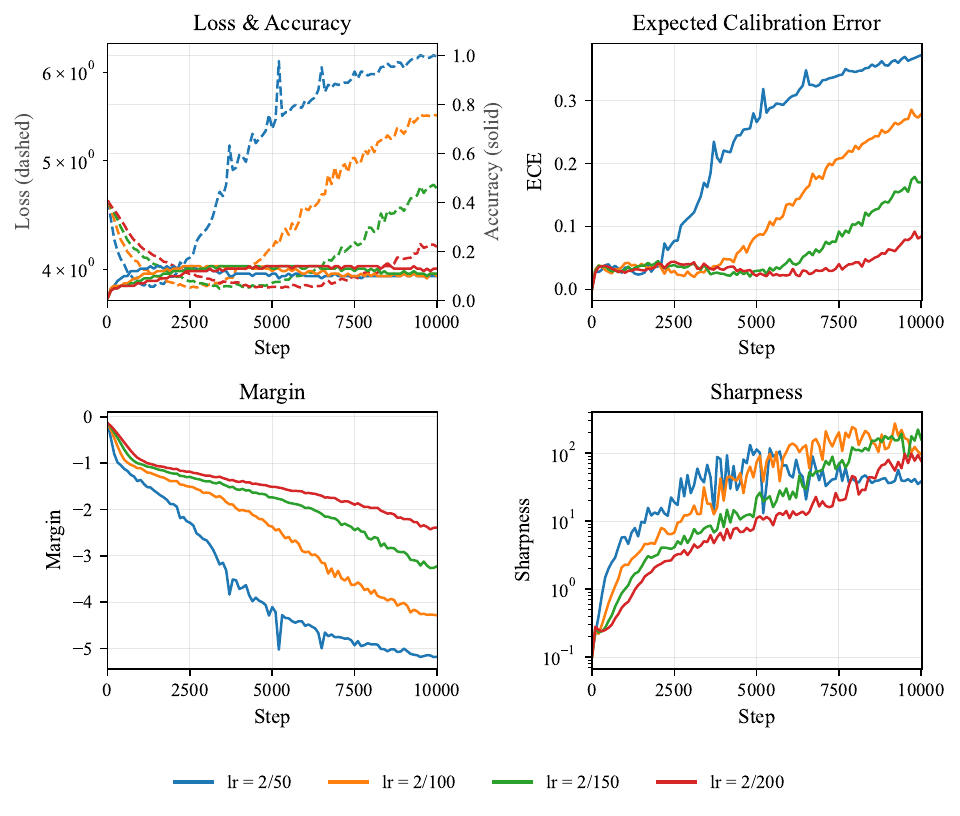}
        \caption{Validation metrics}
    \end{subfigure}
    \caption{\textbf{BulkSGD --- CIFAR-100.} Training dynamics (loss, accuracy, ECE, margin, sharpness) across learning rates on CIFAR-100; training (left) and validation (right).}
    \label{fig:bulksgd_cifar100}
\end{figure}

\subsection{Optimizer Details: SAM, Muon, and BulkSGD}
\label{app:sam_muon_bulk}
There is literature to support the notion that SAM may lead to improved calibration metrics, specifically that SAM act as an implicit regularizer and therefore prevents overfitting during training \cite{tan2025samcalibration}. At every step, SAM solves $$\min_{w} \max_{||\epsilon|| \leq \rho}L(w+\epsilon)$$
which explicitly penalizes the sharpness of the Hessian and leads to convergence to flatter minima \cite{zhou2024sharpness}. We train networks using SAM to test the first hypothesis, looking to confirm that flat minima lead to lower calibration error.

To test the second hypothesis, we apply optimizers that explicitly suppress the contribution of eigenvectors associated with directions of steep descent, through Muon and BulkSGD. Muon rescales the gradient components at each update, so all directions contribute with comparable magnitude. This means directions of steep descent are clamped, while the flatter directions are amplified \cite{jordan2024muon}. Another method to suppress directions of steepest descent is using BulkSGD, which at each step projects the gradient to the space orthogonal to the subspace spanned by the top eigenvectors. We try projecting out the top eigenvector, as well as the top three and five eigenvectors \cite{song2024does}. We note that with BulkSGD, we entirely omit the directions of steepest descent, while with Muon we still allow small updates to be made in those directions.

For BulkSGD, training suffers from high levels of instability depending on the number of dominant eigenvectors that are projected out. We observe that training loss is still minimized over 100,000 steps, however the trajectory features steep oscillations. Similarly, sharpness explodes to values in the thousands, which has not been previously observed with other optimizers. This could be due to the fact that, with the dominant eigenvectors projected out, the gradient continues to remain in areas of high curvature, without following the directions of steepest descent.

%% file: sections/appendix/calmo_vs_ce.tex
\section{CalMO: Extended Results}
\label{sec:calmo_vs_ce}

\subsection{Per-Optimizer Training Dynamics}
  \label{sec:CalMO_plots}

  To evaluate out-of-sample calibration, we train ResNet-20 on CIFAR-10 using the full dataset (45K training / 5K validation split). We compare standard cross-entropy loss against CalMO. Each plot shows accuracy,
  ECE, the margin functional $Q(\theta) = \mathbb{E}[e^{-m}]$, and loss. For each optimizer, we report both training metrics (left) and validation metrics (right). Lines show
   mean over 3 seeds; shaded regions indicate $\pm 1$ standard deviation (multiplicative for log-scale plots).

  % SGD
  \begin{figure}[htbp]
      \centering
      \begin{subfigure}[b]{0.48\textwidth}
          \centering
          \includegraphics[width=\textwidth]{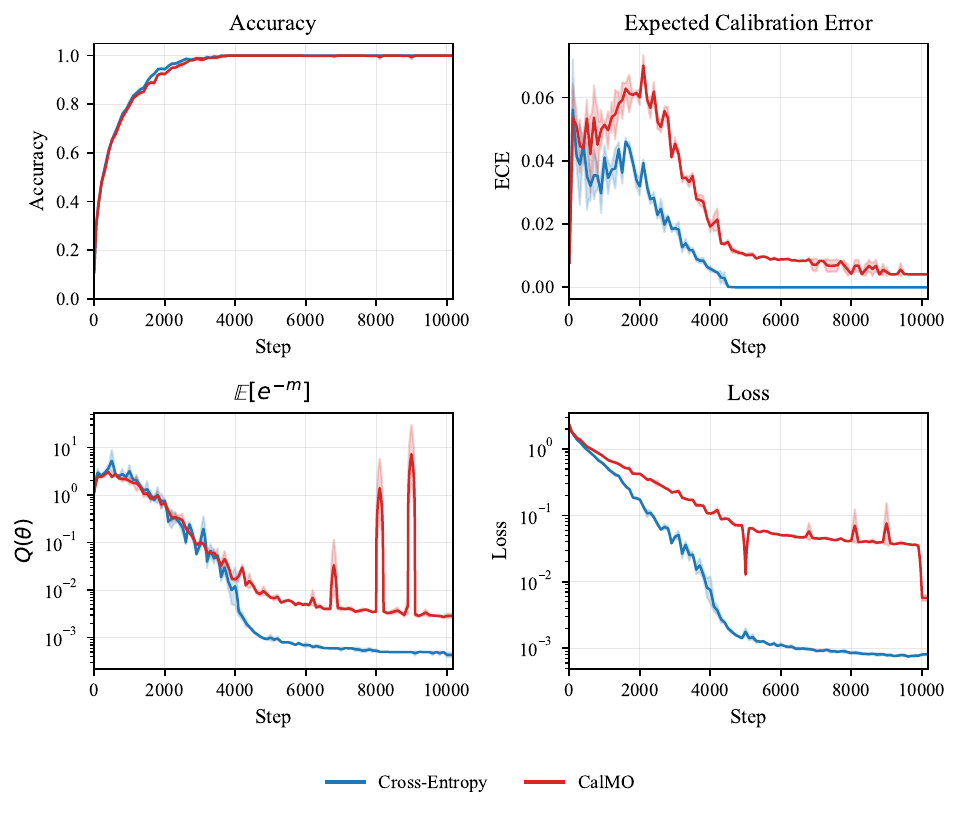}
          \caption{Training metrics}
      \end{subfigure}
      \hfill
      \begin{subfigure}[b]{0.48\textwidth}
          \centering
          \includegraphics[width=\textwidth]{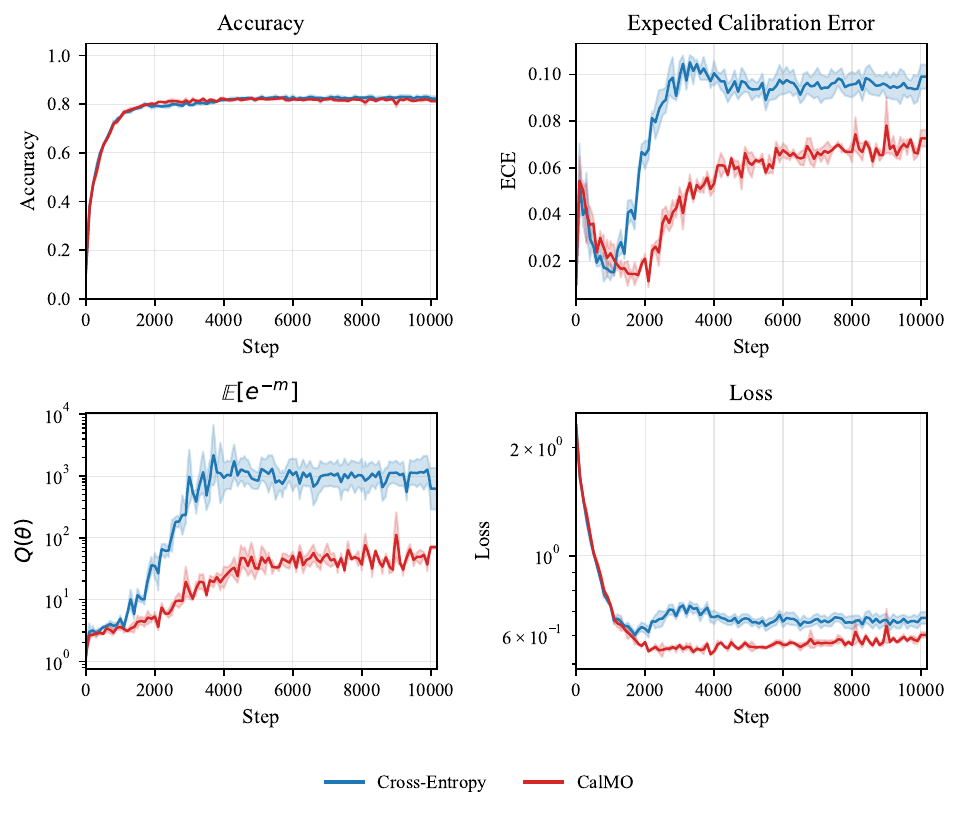}
          \caption{Validation metrics}
      \end{subfigure}
      \caption{\textbf{SGD: CE vs.\ CalMO.} Training dynamics on ResNet-20/CIFAR-10; training (left) and validation (right), mean $\pm 1$ std over 3 seeds.}
      \label{fig:CalMO_sgd}
  \end{figure}

  % AdamW
  \begin{figure}[htbp]
      \centering
      \begin{subfigure}[b]{0.48\textwidth}
          \centering
          \includegraphics[width=\textwidth]{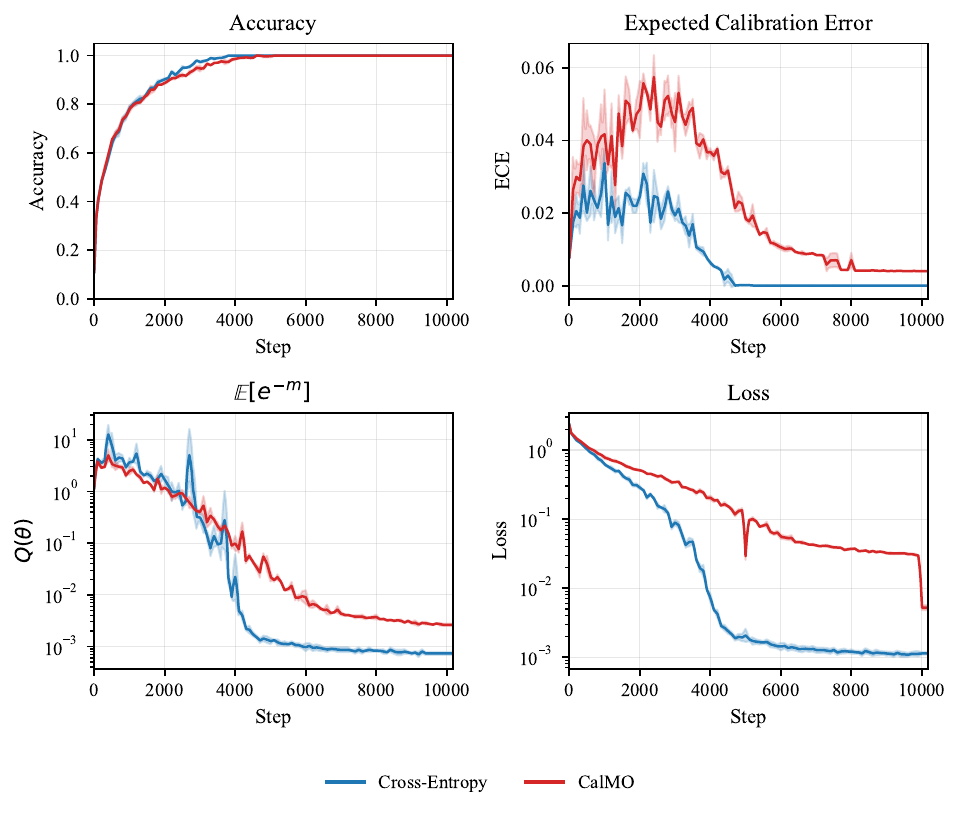}
          \caption{Training metrics}
      \end{subfigure}
      \hfill
      \begin{subfigure}[b]{0.48\textwidth}
          \centering
          \includegraphics[width=\textwidth]{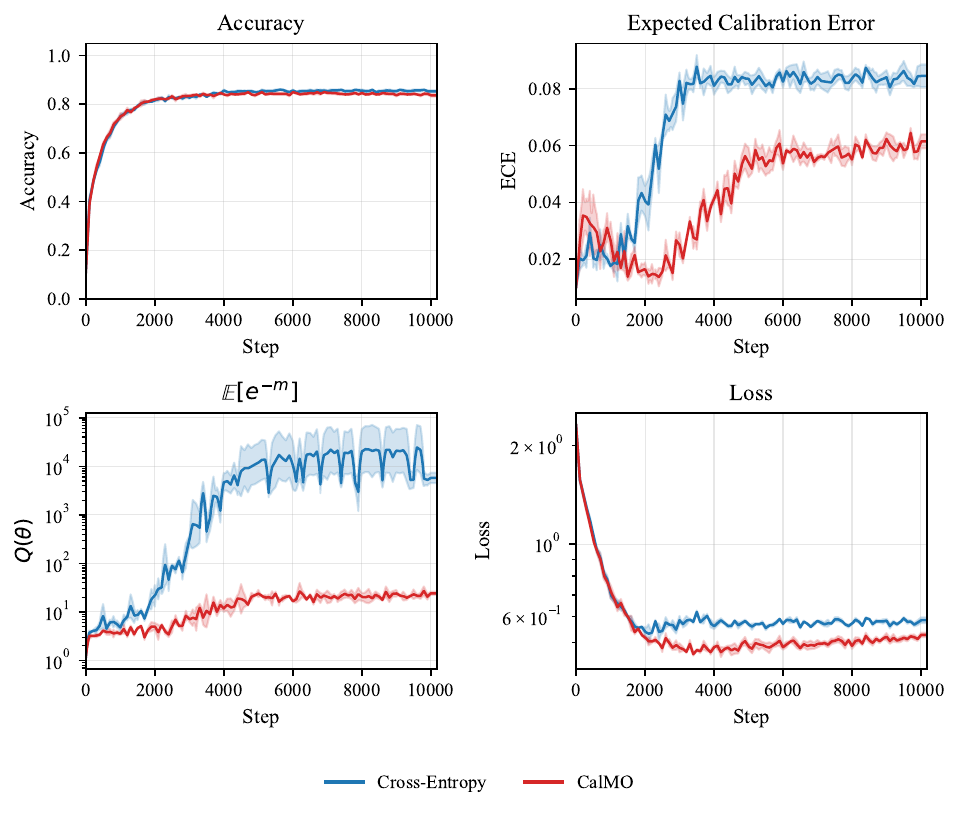}
          \caption{Validation metrics}
      \end{subfigure}
      \caption{\textbf{AdamW: CE vs.\ CalMO.} Training dynamics on ResNet-20/CIFAR-10; training (left) and validation (right), mean $\pm 1$ std over 3 seeds.}
      \label{fig:CalMO_adamw}
  \end{figure}

  % Muon
  \begin{figure}[htbp]
      \centering
      \begin{subfigure}[b]{0.48\textwidth}
          \centering
          \includegraphics[width=\textwidth]{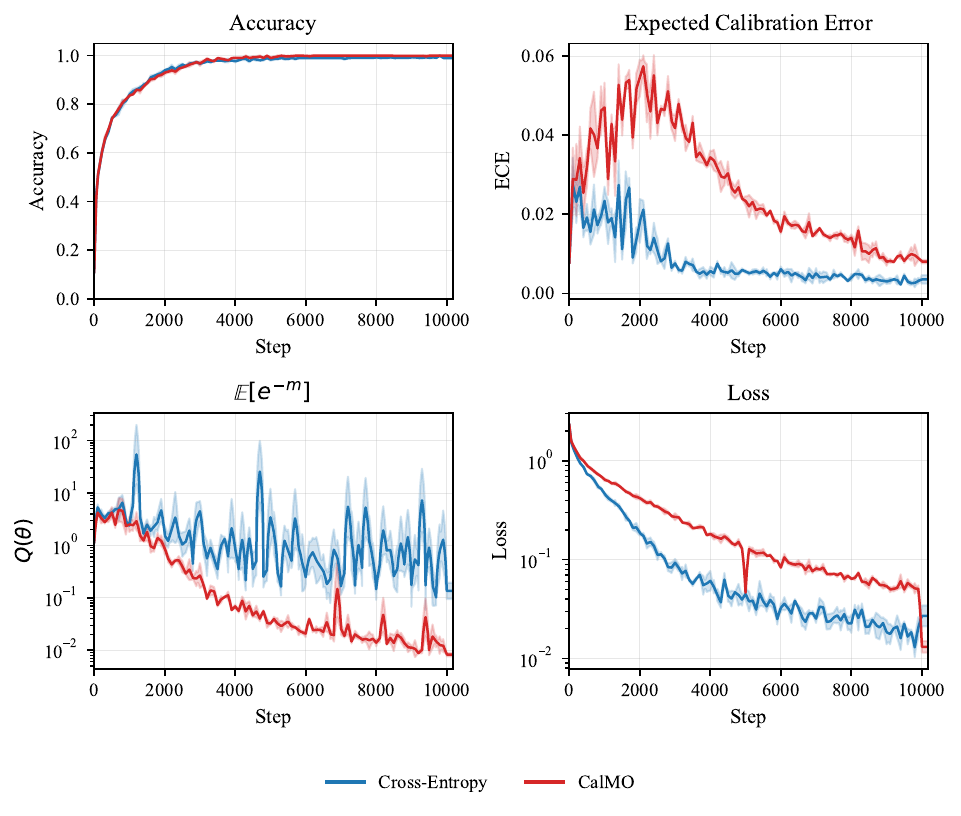}
          \caption{Training metrics}
      \end{subfigure}
      \hfill
      \begin{subfigure}[b]{0.48\textwidth}
          \centering
          \includegraphics[width=\textwidth]{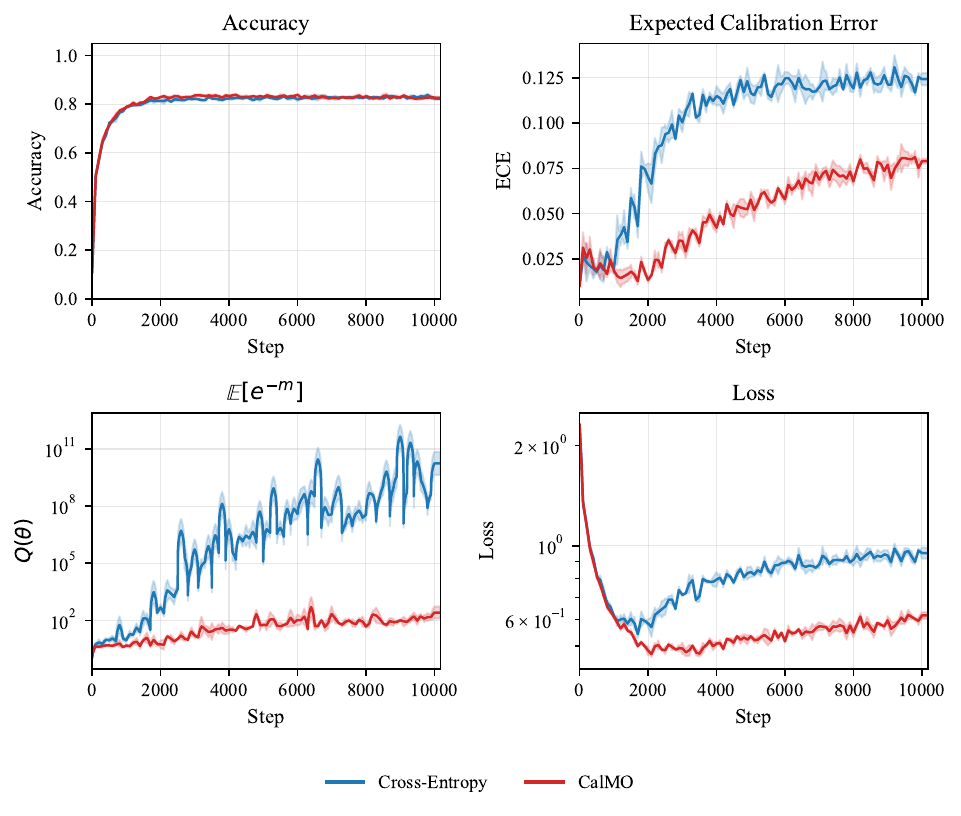}
          \caption{Validation metrics}
      \end{subfigure}
      \caption{\textbf{Muon: CE vs.\ CalMO.} Training dynamics on ResNet-20/CIFAR-10; training (left) and validation (right), mean $\pm 1$ std over 3 seeds.}
      \label{fig:CalMO_muon}
  \end{figure}

    % SAM
  \begin{figure}[htbp]
      \centering
      \begin{subfigure}[b]{0.48\textwidth}
          \centering
          \includegraphics[width=\textwidth]{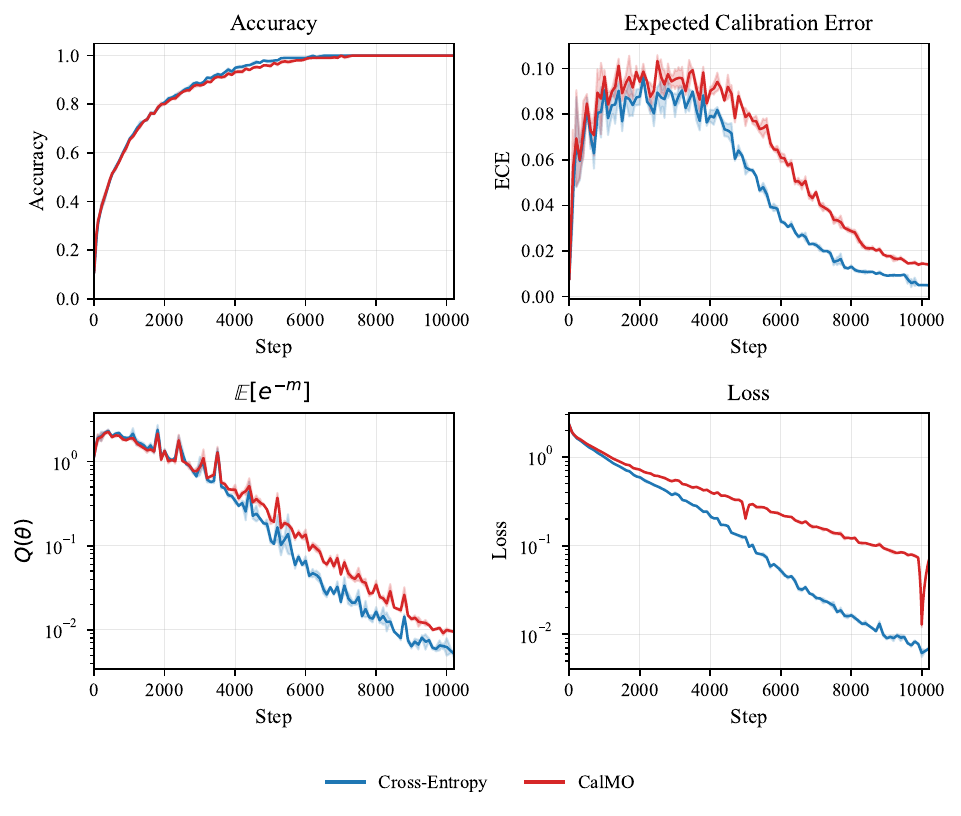}
          \caption{Training metrics}
      \end{subfigure}
      \hfill
      \begin{subfigure}[b]{0.48\textwidth}
          \centering
          \includegraphics[width=\textwidth]{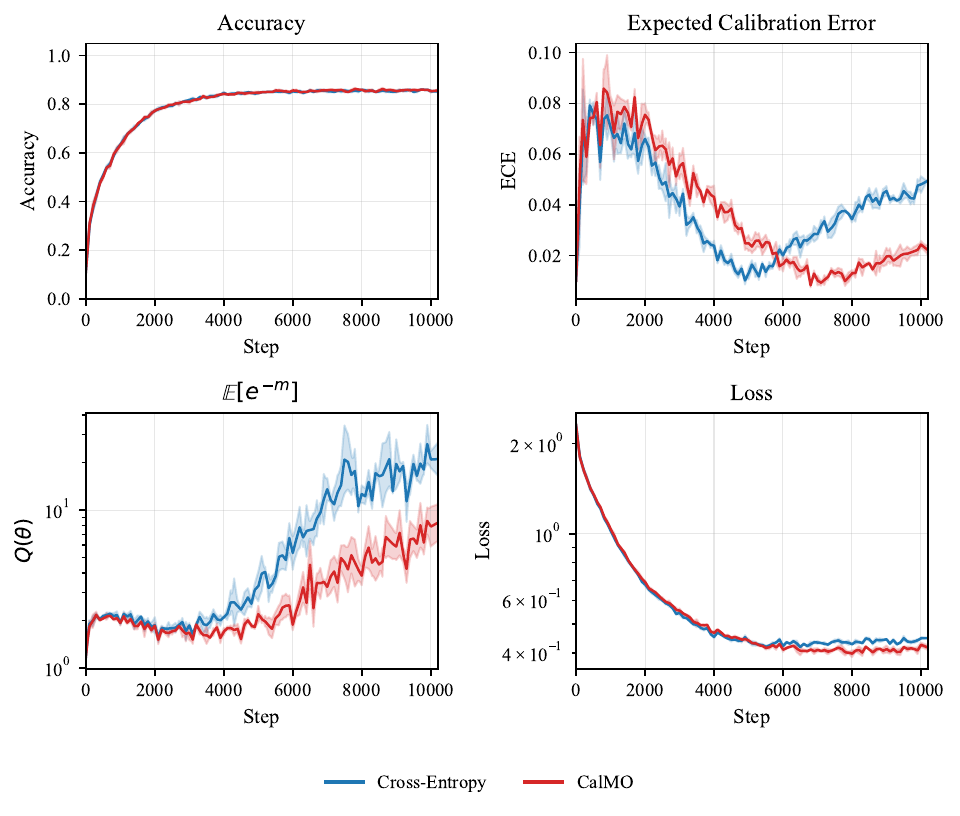}
          \caption{Validation metrics}
      \end{subfigure}
      \caption{\textbf{SAM: CE vs.\ CalMO.} Training dynamics on ResNet-20/CIFAR-10; training (left) and validation (right), mean $\pm 1$ std over 3 seeds.}
      \label{fig:CalMO_sam}
  \end{figure}

\subsection{Train--Test Calibration Gap}
\label{app:train_test_gap}

Table~\ref{tab:train_test_gap} extends Table~\ref{tab:ablation} with training accuracy and training ECE. All methods reach near-perfect training accuracy and near-zero training ECE after interpolation, confirming that the differences observed on the test set reflect generalization of calibration rather than training dynamics.

\begin{table}[htbp]
      \centering
      \footnotesize
      \setlength{\tabcolsep}{4pt}
      \begin{tabular}{@{}llcccc@{}}
        \toprule
        \textbf{Optimizer} & \textbf{Loss} & \textbf{Train Acc (\%)} & \textbf{Train ECE} & \textbf{Test Acc (\%)} $\uparrow$ & \textbf{Test ECE} $\downarrow$ \\
        \midrule
        \multirow{2}{*}{SGD}
           & CE    & $100.0 \pm 0.0$ & $0.000 \pm 0.000$ & $75.2 \pm 0.9$ & $0.081 \pm 0.017$ \\
           & \cellcolor{ourscolor}CalMO & \cellcolor{ourscolor}$100.0 \pm 0.0$ & \cellcolor{ourscolor}$0.004 \pm 0.000$ & \cellcolor{ourscolor}$80.1 \pm 1.1$ & \cellcolor{ourscolor}$0.056 \pm 0.001$ \\
        \midrule
        \multirow{2}{*}{AdamW}
           & CE    & $100.0 \pm 0.0$ & $0.000 \pm 0.000$ & $80.7 \pm 0.3$ & $0.061 \pm 0.005$ \\
           & \cellcolor{ourscolor}CalMO & \cellcolor{ourscolor}$100.0 \pm 0.0$ & \cellcolor{ourscolor}$0.004 \pm 0.000$ & \cellcolor{ourscolor}$83.2 \pm 0.9$ & \cellcolor{ourscolor}$0.045 \pm 0.005$ \\
        \midrule
        \multirow{2}{*}{SAM}
           & CE    & $100.0 \pm 0.0$ & $0.006 \pm 0.002$ & $85.0 \pm 0.2$ & $0.020 \pm 0.004$ \\
           & \cellcolor{ourscolor}CalMO & \cellcolor{ourscolor}$100.0 \pm 0.0$ & \cellcolor{ourscolor}$0.014 \pm 0.001$ & \cellcolor{ourscolor}$85.2 \pm 0.3$ & \cellcolor{ourscolor}$0.017 \pm 0.004$ \\
        \midrule
        \multirow{2}{*}{Muon}
           & CE    & $99.5 \pm 0.0$ & $0.003 \pm 0.001$ & $80.3 \pm 0.3$ & $0.065 \pm 0.013$ \\
           & \cellcolor{ourscolor}CalMO & \cellcolor{ourscolor}$99.9 \pm 0.0$ & \cellcolor{ourscolor}$0.008 \pm 0.000$ & \cellcolor{ourscolor}$81.7 \pm 0.7$ & \cellcolor{ourscolor}$0.019 \pm 0.002$ \\
        \bottomrule
      \end{tabular}
      \vspace{0.2cm}
\caption{\textbf{Train--test calibration gap.} CE vs.\ CalMO on ResNet-20/CIFAR-10; extends Table~\ref{tab:ablation} with training metrics.}
\label{tab:train_test_gap}
\end{table}

  \subsection{Benchmark vs.\ Intrinsic Calibration Methods}

    \begin{table}[htbp]
    \centering
    \footnotesize
    \begin{tabular}{@{}llcc@{}}
    \toprule
    \textbf{Optimizer} & \textbf{Method} & \textbf{Acc (\%)} & \textbf{ECE} $\downarrow$ \\
    \midrule
    \multirow{4}{*}{SGD} & CE & $75.2 \pm 1.2$ & $0.081 \pm 0.021$ \\
     & Label Smooth. & $\mathbf{81.0 \pm 1.0}$ & $0.082 \pm 0.009$ \\
     & Focal Loss & $78.0 \pm 2.9$ & $\mathbf{0.029 \pm 0.018}$ \\
     & CalMO & $80.1 \pm 1.4$ & $0.056 \pm 0.001$ \\
    \midrule
    \multirow{4}{*}{AdamW} & CE & $80.7 \pm 0.4$ & $0.061 \pm 0.007$ \\
     & Label Smooth. & $\mathbf{85.8 \pm 0.1}$ & $0.059 \pm 0.002$ \\
     & Focal Loss & $80.5 \pm 1.3$ & $\mathbf{0.042 \pm 0.019}$ \\
     & CalMO & $83.2 \pm 1.1$ & $0.045 \pm 0.007$ \\
    \midrule
    \multirow{4}{*}{Muon} & CE & $80.3 \pm 0.3$ & $0.065 \pm 0.016$ \\
     & Label Smooth. & $\mathbf{82.4 \pm 0.7}$ & $0.045 \pm 0.003$ \\
     & Focal Loss & $78.7 \pm 0.5$ & $0.025 \pm 0.017$ \\
     & CalMO & $81.7 \pm 0.9$ & $\mathbf{0.019 \pm 0.002}$ \\
    \midrule
    \multirow{4}{*}{SAM} & CE & $85.0 \pm 0.2$ & $0.020 \pm 0.004$ \\
     & Label Smooth. & $\mathbf{86.2 \pm 0.5}$ & $0.067 \pm 0.004$ \\
     & Focal Loss & $84.0 \pm 1.0$ & $0.089 \pm 0.010$ \\
     & CalMO & $85.8 \pm 0.3$ & $\mathbf{0.017 \pm 0.005}$ \\
    \bottomrule
    \end{tabular}
    \vspace{0.2cm}
    \caption{Accuracy and ECE at the best-validation step for CE, label smoothing, focal loss, and CalMO on ResNet-20/CIFAR-10 (90/10 train/validation split).}
    \label{tab:benchmark_best_val}
  \end{table}

Table \ref{tab:benchmark_best_val} compares CalMO to commonly used intrinsic calibration methods—label smoothing and focal loss—across optimizers on CIFAR-10 with ResNet-20 with a 90/10 train/validation split. For each model, we compute accuracy and ECE at the training step that minimizes loss on the validation set. The results highlight that the effectiveness of calibration interventions is optimizer-dependent. For SGD and AdamW, focal loss or label smoothing often achieve the lowest ECE, consistent with prior observations that these methods implicitly regularize confidence. In contrast, for Muon, CalMO yields the largest reduction in ECE while maintaining competitive accuracy. Similarly for SAM, the flatness and robustness terms alone do not control ECE; their combination leads to the lowest miscalibration while maintaining a high accuracy. Across optimizers, CalMO tends to strike a favorable balance between predictive accuracy and calibration error, avoiding the larger accuracy–ECE trade-offs exhibited by some single-mechanism baselines.

%% file: sections/appendix/mse_extension.tex
\section{Extension to Mean Squared Error}
\label{sec:mse_ece}
\subsection{GD and SGD Experiments}
\label{subsec:mse-gd-sgd}

We rerun the experimental setup of Section~\ref{sec:sharpness-calibration} with mean squared error (MSE) loss in place of cross-entropy, on 5000 CIFAR-10 training samples. Figures~\ref{fig:gd-mse} and~\ref{fig:sgd-mse} report the training dynamics for gradient descent and stochastic gradient descent, respectively.

\begin{figure}[htbp]
    \centering
    \includegraphics[width=0.8\linewidth]{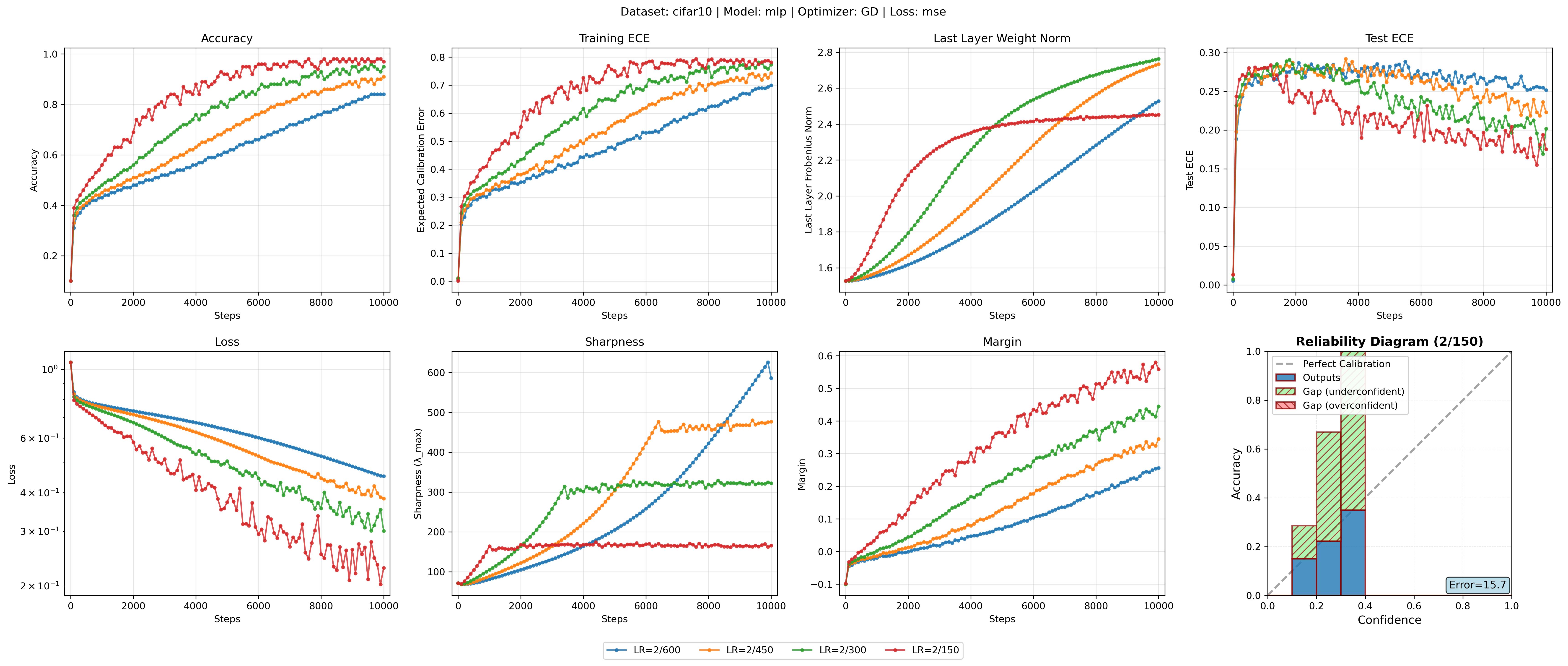}
    \caption{CIFAR-10 | Optimizer: Gradient Descent | Loss: Mean Squared Error}
\label{fig:gd-mse}
\end{figure}

\begin{figure}[htbp]
    \centering
    \includegraphics[width=0.8\linewidth]{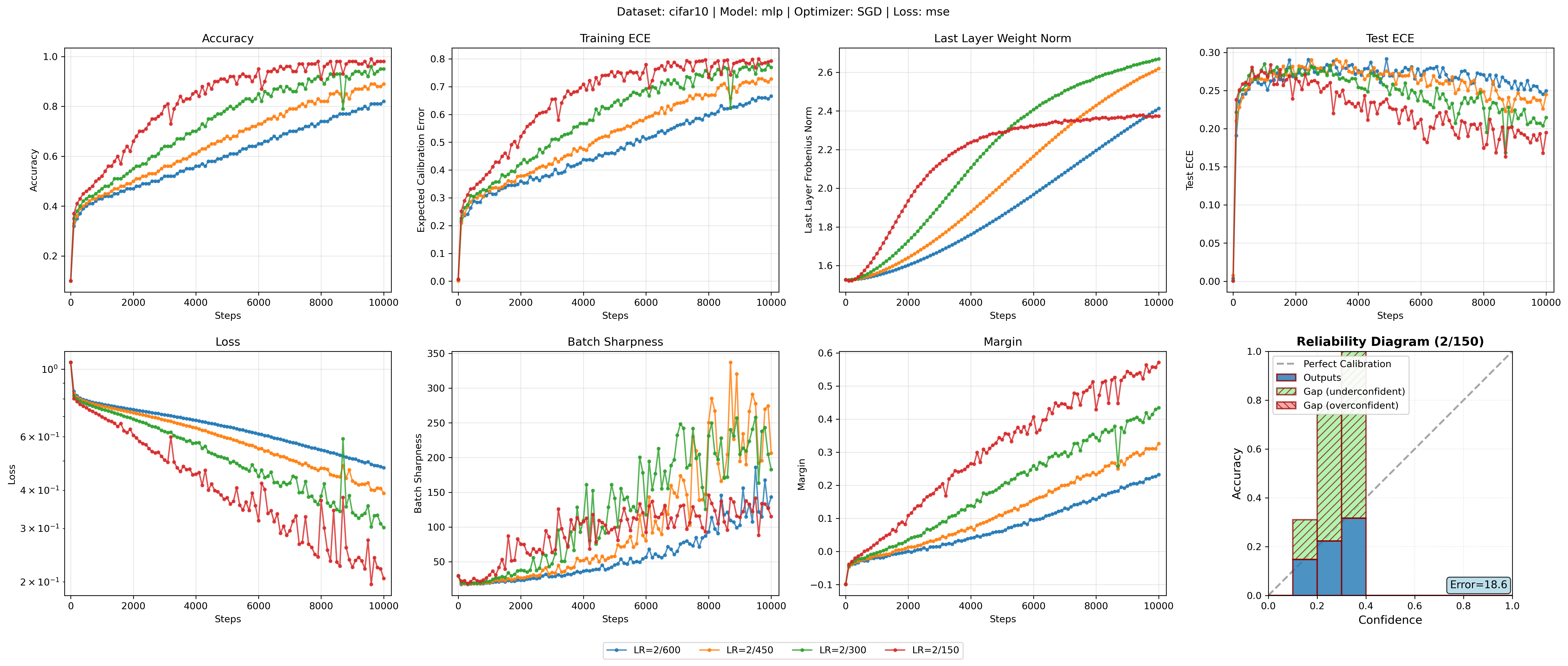}
    \caption{CIFAR-10 | Optimizer: Stochastic Gradient Descent | Loss: Mean Squared Error}
\label{fig:sgd-mse}
\end{figure}

The results show that MSE is extremely miscalibrated, resulting in severely underconfident models as evidenced by the reliability diagrams. This is a consequence of the fact that MSE is not a proper scoring rule.

For $L_{\mathrm{MSE}}$, treating $z_\theta(x_i)$ as free variables, the unique minimizer for a
single example is
\[
z_k^\star(x_i) =
\begin{cases}
1, & k = y_i,\\[3pt]
0, & k \neq y_i.
\end{cases}
\]
Thus the loss penalizes pushing $z_\theta(x_i)_{y_i}$ above $1$ or the other logits below $0$. However, when using the logits in the softmax before the ECE computation, this finite target pattern leads to underconfidence. In fact, at the optimum, the confidence is
\[
\widehat{P}(x_i)
= p_\theta(x_i)_{y_i}
= \frac{e^1}{e^1 + (K-1)e^0}
= \frac{e}{e + K - 1}
\approx 0.23 \quad\text{for } K=10.
\]
Consequently, in regimes where training accuracy is close to $1$ but logits are near this finite pattern, the
model is systematically underconfident on the training set (accuracy $\approx 1$ vs.\ confidence
$\approx 0.23$ in the main bin), and the training ECE remains large instead of decaying towards zero as in
the CE case.

\subsection{Asymptotics of MSE}

For MSE,
\[
H^{\mathrm{MSE}}_{z,i}(\theta)
\;=\; \nabla_{z_i}^2 L_{\mathrm{MSE}}(z_i,y_i)
\;=\; \frac{2}{K} I_K,
\]
so the logit-level Hessian is constant and does not depend on the predicted probabilities $p_\theta(x_i)$.
Hence, unlike CE, the Gauss--Newton curvature does not attenuate as the model improves its fit:
\[
H_{\mathrm{GN}}^{\mathrm{MSE}}(\theta)
= \frac{1}{n} \sum_i J_i(\theta)^\top H^{\mathrm{MSE}}_{z,i} J_i(\theta)
= \frac{2}{K}\cdot \frac{1}{n} \sum_i J_i(\theta)^\top J_i(\theta),
\]
so the eigenvalues of $H_{\mathrm{GN}}^{\mathrm{MSE}}(\theta)$ are governed entirely by the Jacobians $J_i(\theta)$, with no probability-dependent factor $\mathrm{diag}(p) - pp^\top$ to drive curvature toward zero. Combined with the underconfidence analysis of Appendix~\ref{subsec:mse-gd-sgd}, this explains why neither sharpness nor training ECE collapse under MSE in the interpolation regime, in contrast with the CE case.